\newcommand{\StateSpace}{\mathcal{S}}
\newcommand{\Action}{\mathcal{A}}
\newcommand{\StateAction}{\StateSpace \times \Action}
\newcommand{\Transition}{\mathcal{P}}
\newcommand{\Horizon}{H}
\newcommand{\reward}{r}
\newcommand{\G}{\mathcal{G}}
\newcommand{\M}{\mathcal{M}}
\newcommand{\V}{V}
\renewcommand{\P}{\mathbb{P}}
\newcommand{\mPhi}{\Phi}
\newcommand{\abs}[1]{\left| {#1} \right|}
\newcommand{\paren}[1]{\left( {#1} \right)}
\newcommand{\bracket}[1]{\left[ {#1} \right]}
\newcommand{\curly}[1]{\left\{ {#1}\right\}}
\renewcommand{\O}{\mathcal{O}}
\newcommand{\E}{\mathbb{E}}
\renewcommand{\L}{\mathcal{L}}
\newcommand{\F}{\mathcal{F}}
\newcommand{\Simplex}[1]{\Delta_{#1}}
\renewcommand{\epsilon}{\varepsilon}
\newcommand{\vlambda}{\bm{\lambda}}
\newcommand{\vpi}{\bm{\pi}}
\newcommand{\wtilde}[1]{\widetilde{#1}}
\newcommand{\Reals}{\mathbb{R}}
\newcommand{\Alg}{\mathfrak{A}}
\newtheorem{lemma}{Lemma}
\newtheorem{assumption}{Assumption}
\newtheorem{definition}{Definition}
\newtheorem{corollary}{Corollary}
\newtheorem{example}{Example}
\newcommand{\AlgName}{\textbf{C}oordinate-\textbf{A}scent for \textbf{CMPG}s}
\newcommand{\AlgNameShort}{CA-CMPG}
\newcommand{\Dope}{\textsc{Dope}}
\newcommand{\AlgNameExp}{\textbf{C}oordinate-\textbf{A}scent for \textbf{CMPGs} with \textbf{E}xploration}
\newcommand{\AlgNameShortExp}{CA-CMPG-E}
\title{Provably Learning Nash Policies in \\Constrained  Markov Potential Games}
\author{%
  Pragnya Alatur \\
  Department of Computer Science\\
  ETH Zurich and ETH AI Center\\
  \texttt{pragnya.alatur@ai.ethz.ch} \\
   \And
  Giorgia Ramponi \\
  Department of Computer Science\\
  ETH Zurich and ETH AI Center\\
  \texttt{giorgia.ramponi@ai.ethz.ch} \\
   \AND
   Niao He \\
  Department of Computer Science\\
  ETH Zurich\\
  \texttt{niao.he@inf.ethz.ch} \\
  \And
  Andreas Krause \\
  Department of Computer Science\\
  ETH Zurich\\
  \texttt{krausea@ethz.ch} \\
}
\begin{document}
\maketitle
\date{}

\begin{abstract}
Multi-agent reinforcement learning (MARL) addresses sequential decision-making problems with multiple agents, where each agent optimizes its own objective. In many real-world instances, the agents may not only want to optimize their objectives, but also ensure safe behavior. For example, in traffic routing, each car (agent) aims to reach its destination quickly (objective) while avoiding collisions (safety). Constrained Markov Games (CMGs) are a natural formalism for safe MARL problems, though generally intractable. In this work, we introduce and study {\em Constrained Markov Potential Games} (CMPGs), an important class of CMGs. We first show that a Nash policy for CMPGs can be found via constrained optimization. One tempting approach is to solve it by Lagrangian-based primal-dual methods. 
As we show, in contrast to the single-agent setting, however, CMPGs do not satisfy strong duality, rendering such approaches inapplicable and potentially unsafe. To solve the CMPG problem, we propose our algorithm \AlgName~(\AlgNameShort), which provably converges to a Nash policy in tabular, finite-horizon CMPGs. Furthermore, we provide the first sample complexity bounds for learning Nash policies in unknown CMPGs, and, which under additional assumptions, guarantee safe exploration.
\end{abstract}

\vspace{-1mm}
\section{Introduction}
\vspace{-1mm}
\looseness -1
Multi-Agent Reinforcement Learning (MARL) addresses sequential decision-making problems with {\em multiple agents}, where the decisions of individual agents may also affect others. In this work, we focus on a rich and fundamental class of MARL problems, known as {\em Markov Potential Games}, \citep[MPGs,][]{leonardos2022global}. Important applications, such as traffic routing \citep{altman2006survey} or wireless communication \citep{yamamoto2015network}, can be modeled as MPGs. The main characteristic of an MPG is the existence of an underlying {\em potential function}, which captures the agents' incentives to deviate between different policies. MPGs can model both fully cooperative scenarios\footnote{In fully cooperative scenarios, the agents have one common objective.} and scenarios, in which the agents have individual objectives, as long as such a potential function exists.

\looseness -1
In many real-world applications, however, the standard MPG framework fails to incorporate additional requirements like {\em safety}. For instance, in traffic routing, we do want to find the fastest route to the individual destinations, while ensuring that the vehicles drive safely and do not collide. In this work, we introduce the framework of {\em Constrained Markov Potential Games} (CMPGs) to study safety in the context of MPGs. We incorporate safety using {\em coupled} constraints on the policies of the agents. Coupled constraints are relevant because they allow us to model requirements like collision avoidance. Our objective is to find a Nash policy \citep{nash1950equilibrium, altman2000constrained}, i.e., a set of policies such that no agent has the incentive to deviate unilaterally within the constrained set of policies. Prior work on algorithms for (unconstrained) MPGs, in which each agent improves its own objective {\em independently}, cannot be applied to the constrained setting, as the agents may need to {\em coordinate} to satisfy the constraints. A more detailed discussion of prior work is provided in \cref{sec:related-work}.
We study tabular CMPGs in the finite-horizon setting and summarize our contributions here:
\begin{enumerate}[topsep=0.pt,parsep=1.5pt,partopsep=1.5pt,leftmargin=*]
    \item \looseness -1 First, we show that a Nash policy can in principle be recovered by solving a constrained optimization problem, which, however, becomes intractable as the number of agents increases (\cref{sec:cmpg-duality}).
    \item \looseness -1 Given tractable algorithms for unconstrained MPGs \citep[cf.][]{leonardos2022global, fox2022independent}, a tempting approach would be to utilize Lagrangian duality to reduce the constrained problem to an unconstrained one \citep{Diddigi2019, Parnika2021}.
Unfortunately, we show that strong duality does not hold for our problem (\cref{sec:cmpg-duality}), rendering such approaches {\em sub-optimal} and {\em unsafe}.
    This is in sharp contrast to the single-agent setting, for which strong duality does hold \citep{paternain2019constrained}.
    \item Instead of solving the constrained optimization problem, we propose to directly search for a Nash policy. We present our algorithm -- \AlgName~(\AlgNameShort) -- which provably converges to an $\epsilon$-Nash policy, assuming that the agents have full knowledge of the CMPG (\cref{sec:solving-cmpgs}).
    \item \looseness -1 Finally, we prove a sample complexity bound for our algorithm \AlgNameShort, when the agents do not know the CMPG beforehand (\cref{sec:learning-cmpgs}). With access to a generative model (\cref{subsec:cmdp-generative-model}), the agents converge to an $\epsilon$-Nash policy with $\wtilde{\O}{\paren{\frac{H^8}{\epsilon^3 \zeta^2}}}$ samples, where $\zeta$ is the Slater constant of the CMPG and $H$ is the horizon. On the other hand, if the agents do not have access to a generative model, but still want to ensure safe exploration, we obtain a sample complexity bound of $\wtilde{\O}{\paren{\frac{H^{10}}{\epsilon^5 c^2}}}$ (\cref{subsec:cmdp-no-regret}), where $c\in (0,\zeta]$ is a quantity related to the constraint set of the CMPG.
\end{enumerate}

\vspace{-1mm}
\section{Related Work}
\vspace{-1mm}
\label{sec:related-work}

\looseness -1\textbf{Markov Potential Games:}
MPGs have become popular in recent years and have been studied for the tabular setting \citep{leonardos2022global, zhang2022softmax, zhang2021gradient, chen2022poa, mao2022decentralized, Maheshwari2022, fox2022independent} and for state-action spaces with function approximation \citep{dongsheng2022mpgconvergence}. For the tabular setting with {\em known} rewards and transitions, \citet{leonardos2022global} prove that independent policy gradient (IPG) converges to an $\epsilon$-Nash policy in $O(1/\epsilon^2)$ iterations. If rewards and transitions are {\em unknown}, \citet{mao2022decentralized} prove that IPG with access to a stochastic gradient oracle converges to an $\epsilon$-Nash policy with a {sample complexity} of $\O\paren{1/\epsilon^{4.5}}$.
In these IPG algorithms, the agents improve their own objectives {\em independently}. It is challenging to apply these algorithms with coupled constraints, as the agents may need to coordinate to satisfy those constraints, at least during the learning process. \citet{Song2021} present a different approach for tabular MPGs with unknown rewards and transitions, in which the agents {\em coordinate} to compute an  $\epsilon$-Nash policy with a sample complexity of $\wtilde{\O}(1/\epsilon^3)$. While their algorithm is for unconstrained MPGs, we show in our work, that this type of approach can be extended to the constrained setting. \citet{Maheshwari2022} present a different approach with asymptotic convergence to a Nash policy, whereas we target finite-time convergence. Note that MPGs are only one way to model MARL problems, and for a more comprehensive overview on MARL, we refer the reader to the surveys by \citet{yang2021} and  \citet{zhang2021multi}.

\looseness -1\textbf{Constrained Markov Decision Processes:} A common approach to constrained {\em single-agent} RL are 
\looseness -1 \textit{Constrained Markov Decision Processes} \citep[CMDPs,][]{altman1999constrained}. CMDPs are widely studied, and a comprehensive survey is given by \citet{garcia2015}. Below, we focus on aspects relevant to our work.
In CMDPs, the agent optimizes a reward function subject to constraints. Lagrangian duality is a common approach for constrained optimization and \citet{paternain2019constrained} proved that CMDPs possess the {\em strong duality property}, giving theoretical justification for the use of Lagrangian dual approaches.

\looseness -1 \textbf{Constrained Markov Games:} One of the common approaches to constrained multi-agent RL are {\em Constrained Markov Games} \citep[CMGs,][]{altman2000constrained}. CMGs restrict the policies of the agents, which can be used to model safety objectives. Note that CMPGs are one class of CMGs.
In cooperative CMPGs\footnote{Note that cooperative games are a strict subclass of CMPGs, as CMPGs are able to model non-cooperative settings too.}, where the agents have one common reward function, the CMPG objective very much resembles the CMDP formulation. Furthermore, \citet{Diddigi2019} and \citet{Parnika2021} demonstrate good experimental results for cooperative CMPGs with Lagrangian dual approaches, but provide no theoretical guarantees. We prove in our work, however, that strong duality does not hold in general for CMPGs (cf. \cref{sec:cmpg-duality}), rendering Lagrangian dual approaches inapplicable in those cases. Furthermore, we demonstrate that the dual might even return unsafe solutions.

\vspace{-1mm}
\section{Background and Problem Definition}
\vspace{-1mm}
\label{sec:background}

\looseness -1 \textbf{Notation:} For any $n\in\mathbb{N}$, we use the short-hand notation $[n]$ to refer to the set of integers $\curly{1,...,n}$. For any finite set $X$, we denote by $\Simplex{X}$ the probability simplex over $X$, i.e., $\Simplex{X} = \{v \in [0,1]^{|X|} | \sum_{x \in X} v(x) = 1\}$.

\subsection{Markov Potential Games}
\looseness -1 An $n$-agent {\em Markov Potential Game} (MPG) is a tuple $\mathcal{G} $ $=$ $(\StateSpace, \{\Action_i\}_{i=1}^n,\Horizon, \curly{\Transition_h}_{h=1}^{\Horizon},$ $\{\curly{\reward_{i,h}}_{h=1}^{\Horizon}\}_{i=1}^n, \mu )$, where $\StateSpace$ is the state space, $\Action_i$ is agent $i$'s action space. We denote by $\Action \triangleq \times_{i=1}^n \Action_i$ the joint action space, $\Horizon \in \mathbb{N}_{>0}$ the horizon. $\Transition_h:\StateAction\rightarrow\Simplex{\StateSpace}$ is the environment's transition function at time $h\in[\Horizon]$ and $\Transition_h(s'|s,a)$ denotes the probability of moving to state $s'$ from state-action pair $(s,a)\in\StateAction$ at step $h\in[\Horizon]$,  $r_{i,h}:\StateAction \rightarrow [0,1]$ is agent $i$'s reward function at step $h\in[\Horizon]$ and $\mu\in\Simplex{\StateSpace}$ denotes the initial state distribution. We assume $\StateSpace$ and $\Action$ to be finite.

\textbf{Policies:}
For every agent $i\in [n]$, we define its policy space as $\Pi^i\triangleq \curly{\{\pi_{i,h}\}_{h=1}^H \;|\;\pi_{i,h}:\StateSpace\rightarrow\Simplex{\Action_i}, \forall h\in [H]}$. If agent $i$ follows a policy $\pi\in\Pi^i$, it means that at step $h\in [H]$ and state $s\in \StateSpace$, the agent samples its next action from $\pi_{h}(\cdot|s)$.
We denote by $\Pi\triangleq \curly{ \vpi = \paren{\pi_1,...,\pi_n} | \pi_i \in \Pi^i, \forall i\in[n]}$ the set of {\em joint} policies. For any policy $\vpi\in \Pi$ and agent $i\in[n]$, we denote by $\vpi_{-i}$ the policy of the {\em other} $n-1$ agents.

\textbf{Value Function:}
For any policy $\vpi\in\Pi$ and agent $i\in[n]$, the value function $\V^{r_i}(\vpi)$ measures the expected, cumulative reward of agent $i$, and is defined as follows:
\begin{equation}
\label{eq:value-function}
    \V^{r_i}(\vpi)\triangleq\mathop{\E}_{\substack{s \sim \mu,\\ a_h \sim \vpi_h(\cdot | s_h),  \\s_{h+1} \sim \Transition_h(\cdot | s_h, a_h) }}\big[\sum_{h=1}^{\Horizon} r_{i,h}(s_h,a_h) | s_0 = s\big].
\end{equation}

\textbf{Potential Function:} \looseness -1 An MPG possesses an underlying potential function $\mPhi:\Pi\rightarrow \Reals$ such that:
\begin{equation}
\begin{gathered}
\label{eq:potential-property}
\V^{r_i}(\pi_i, \vpi_{-i}) - \V^{r_i}(\pi'_i, \vpi_{-i}) = \mPhi(\pi_i, \vpi_{-i}) - \mPhi(\pi'_i, \vpi_{-i}) \qquad
\forall \pi'_i \in \Pi^i, \forall \vpi \in \Pi, \forall i \in [n].
\end{gathered}
\end{equation}
This is an adaptation of the potential function defined in \citet{leonardos2022global} to the finite-horizon setting. Instead of defining a per-state potential function, we directly consider the potential function with respect to the initial distribution $\mu$. 

\looseness -1 \textbf{Remark:} Note that the potential function is a property of the MPG and is typically not known to the agents. In a cooperative game, the agents have one shared reward function $r$ such that $r_i \equiv r$, $\forall i \in [n]$. In this case, the potential function is simply the value function of the agents, i.e., $\mPhi = \V^r$. Note, however, that cooperative games are a {\em strict} subset of MPGs, and MPGs have the ability to express non-cooperative scenarios, such as traffic congestion. In \cref{sec:experiments}, we describe different instances in detail.

\subsection{Constrained Markov Potential Games}
\looseness -1 An $n$-agent {\em Constrained Markov Potential Game} (CMPG) is an MPG $\mathcal{G}$ $= (\StateSpace, \curly{\Action_i}_{i=1}^n, $ $\Horizon, \curly{\Transition_h}_{h=1}^{\Horizon}, \{\curly{\reward_{i,h}}_{h=1}^{\Horizon}\}_{i=1}^n, \mu)$ with constraints $\{( \curly{ c_{j,h}}_{h=1}^{\Horizon}, \alpha_j) \}_{j=1}^k$, where $c_{j,h}:\StateAction\rightarrow[0,1]$ denotes the $j$-th cost function at step $h\in [\Horizon]$ and $\alpha_j\in [0,H]$ is the constraint threshold.\footnote{Even though we define our problem in the finite-horizon setting, our results can be easily extended to the discounted, infinite-horizon setting.}

\textbf{Feasible Policies:} We call a policy $\vpi \in \Pi\,$ {\em feasible}, if it satisfies the following constraints:
\begin{equation*}
    V^{c_j}_{\mu}(\vpi)\triangleq \mathop{\E}_{\substack{s \sim \mu,\\ a_h \sim \vpi_h(\cdot | s_h),  \\s_{h+1} \sim {\Transition_h}(\cdot | s_h, a_h) }}\big[\sum_{h=1}^{\Horizon} c_{j,h}(s_h,a_h) \Big| s_0 = s \big] \leq \alpha_j, \quad
    \forall j \in [k].
\end{equation*}
\looseness -1
In the rest of the paper, we use $\Pi_C$ to refer to the set of {\em feasible }policies. For every agent $i$ and policy $\vpi_{-i}$ of the other $n-1$ agents, we define $\Pi_C^i(\vpi_{-i}) \triangleq \curly{ \pi_i\in \Pi^i | (\pi_i, \vpi_{-i}) \in \Pi_C }$. We refer to this type of constraints as {\em coupled} constraints, as the values of the constraints depend on the {\em joint} actions of the agents. If we wish to model an intersection in a traffic scenario, an important constraint to incorporate would be collision avoidance. To decide whether a certain set of actions causes a collision or not, we need to take the actions of {\em all} agents at the intersection into account.

In a CMPG, each agent $i$ aims to maximize its own value function $\V^{r_i}$. Since the rewards and transitions depend on the {\em joint} policy, it may not be possible to find a policy that is globally optimal for all value functions simultaneously. Instead, the agents typically need to settle for an equilibrium policy, at which no agent has an incentive to deviate unilaterally. Many different types of equilibria exist in the literature, such as the Nash equilibrium \citep{nash1950equilibrium}, correlated equilibrium \citep{aumann1987correlated} or Stackelberg equilibrium \citep{breton1988sequential}. In this work, our goal is to obtain a {\em Nash equilibrium policy} \citep{nash1950equilibrium, altman2000constrained} in a CMPG. We define a relaxed notion in the following paragraph.

\textbf{$\varepsilon$-Nash Equilibrium Policy:} For any $\epsilon\geq 0$, a policy $\vpi^*=(\pi_1^*,...,\pi_n^*)\in\Pi_C$ is a {\em $\epsilon$-Nash equilibrium policy}, if it is the $\epsilon$-best-response policy for each agent, i.e.,\footnote{This is an extension of the {\em generalized Nash equilibrium} \citep{facchinei2010generalized} to CMPGs.}:
\begin{equation}
\label{eq:nash_policy}
\begin{gathered}
\max_{\pi_i \in \Pi^i_C(\vpi^*_{-i})}\V^{r_i}(\pi_i, \vpi_{-i}^*) - \V^{r_i}(\vpi^*) \leq \epsilon, \qquad \forall i\in[n].
\end{gathered}
\end{equation}
We call $\vpi^*$ a {\em Nash equilibrium policy}, if \cref{eq:nash_policy} holds with $\epsilon=0$. In the rest of the paper, we refer to the Nash equilibrium policy as {\em Nash policy}.

\subsection{Constrained Markov Decision Processes}
\looseness -1
A {\em Constrained Markov Decision Process} (CMDP) is a tuple $\M =(\StateSpace, \Action, \Horizon, \curly{\Transition_h}_{h=1}^{\Horizon}, \{\reward_{h}\}_{h=1}^{\Horizon}, $ $ \mu, \{ ( \curly{ c_{j,h}}_{h=1}^{\Horizon}, \alpha_j) \}_{j=1}^k)$. In a CMDP, there is a {\em single} agent. However, the individual elements in $\M$ carry the same meaning as in CMPGs.
Furthermore, the policy sets $\Pi,\Pi_C$ and the value functions $\V^r:\Pi\rightarrow \Reals$ (reward), $\V^{c_j}:\Pi \rightarrow \Reals, j\in [k]$ (costs) are defined in the same way as for CMPGs. In a CMDP, the agent aims to find a policy $\pi^*$, that satisfies:
\begin{equation}
\label{eq:cmdp-objective}
\pi^* \in \arg\max_{\pi\in\Pi_C} \V^r(\pi).
\end{equation}
In the following section, we prove that a Nash policy in a CMPG can be found by maximizing the potential function with respect to the given constraints, similar to \cref{eq:cmdp-objective}. We will show that Lagrangian duality, a common approach for constrained optimization, will not work in general for CMPGs.

\vspace{-1mm}
\section{Duality for Constrained Markov Potential Games?}
\vspace{-1mm}
\label{sec:cmpg-duality}
\looseness -1
For an MPG with potential function $\mPhi$, a globally optimal policy $\vpi^*\in \arg\max_{\vpi\in\Pi}\mPhi(\vpi)$ is also a Nash policy \citep{leonardos2022global}. We show in \cref{lem:constr-global-max-nash} that this property  generalizes to CMPGs. We defer the proofs for the theoretical results in this section to \cref{app:cmpg-duality}.
\begin{restatable}{proposition}{lemconstrglobalmaxnash}
\label{lem:constr-global-max-nash}
    Define the following constrained optimization problem:
    \begin{equation}
        \label{eq:cmpg-optimization-problem}
        \begin{gathered}
        \vpi^* \in \arg\max_{\vpi\in\Pi_C} \mPhi(\vpi).
    \end{gathered}
    \end{equation}
    Then, $\vpi^*$ is a Nash policy for a CMPG with potential function $\mPhi$.
\end{restatable}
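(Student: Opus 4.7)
The plan is to argue directly from the definition of a Nash policy by combining two facts: (i) the defining property of the potential function $\Phi$ in equation~(\ref{eq:potential-property}), which equates differences in any single agent's value with differences in the shared potential when only that agent deviates, and (ii) the fact that any unilateral deviation $\pi_i \in \Pi^i_C(\vpi^*_{-i})$ produces a joint policy $(\pi_i, \vpi^*_{-i})$ that lies in $\Pi_C$ by the very definition of $\Pi^i_C(\vpi^*_{-i})$, so it is an admissible competitor in the maximization problem~(\ref{eq:cmpg-optimization-problem}).

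Concretely, I would fix an arbitrary agent $i \in [n]$ and an arbitrary $\pi_i \in \Pi^i_C(\vpi^*_{-i})$. From the definition $\Pi^i_C(\vpi^*_{-i}) = \{\pi_i \in \Pi^i : (\pi_i, \vpi^*_{-i}) \in \Pi_C\}$, the joint policy $(\pi_i, \vpi^*_{-i})$ is feasible, so by the global optimality of $\vpi^*$ over $\Pi_C$ we have $\Phi(\vpi^*) \geq \Phi(\pi_i, \vpi^*_{-i})$. Applying the potential identity~(\ref{eq:potential-property}) with $\pi'_i = \pi_i$ and $\vpi = \vpi^*$ gives
\begin{equation*}
V^{r_i}(\vpi^*) - V^{r_i}(\pi_i, \vpi^*_{-i}) \;=\; \Phi(\vpi^*) - \Phi(\pi_i, \vpi^*_{-i}) \;\geq\; 0.
\end{equation*}
Since $i$ and $\pi_i \in \Pi^i_C(\vpi^*_{-i})$ were arbitrary, taking the supremum over $\pi_i$ yields
\begin{equation*}
\max_{\pi_i \in \Pi^i_C(\vpi^*_{-i})} V^{r_i}(\pi_i, \vpi^*_{-i}) - V^{r_i}(\vpi^*) \;\leq\; 0, \qquad \forall i \in [n],
\end{equation*}
which is exactly the $\epsilon = 0$ case of~(\ref{eq:nash_policy}), i.e., $\vpi^*$ is a Nash policy.

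There is no real obstacle here: the argument is essentially a one-line unfolding of definitions once one recognizes that the constraint set $\Pi_C$ is specifically constructed so that all feasible unilateral deviations from $\vpi^*$ remain admissible competitors in~(\ref{eq:cmpg-optimization-problem}). The only subtlety worth flagging is that this requires the correct reading of $\Pi^i_C(\vpi^*_{-i})$ as the section of $\Pi_C$ at $\vpi^*_{-i}$ (rather than a product set), which is precisely what the generalized-Nash formulation demands; without this, the argument would fail because a deviation might leave the joint feasibility set. No convexity, continuity, or regularity assumption is needed, and the proof works verbatim for both cooperative ($\Phi = V^r$) and general MPG cases.
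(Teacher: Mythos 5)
Your proof is correct and uses exactly the same two ingredients as the paper's proof---feasibility of unilateral deviations (so they are admissible competitors in the constrained maximization) and the potential identity (\ref{eq:potential-property})---the only cosmetic difference being that the paper phrases the argument as a proof by contradiction while you argue directly. Your direct phrasing is, if anything, slightly cleaner, and your remark about reading $\Pi^i_C(\vpi^*_{-i})$ as the section of $\Pi_C$ at $\vpi^*_{-i}$ correctly identifies the one point where a careless reading could break the argument.
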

Solving \cref{eq:cmpg-optimization-problem} directly is not trivial; even if the agents know the rewards and transitions, the potential function is usually not known. Moreover, the fact that we have {\em coupled} constraints makes solving \cref{eq:cmpg-optimization-problem} directly intractable.
Nevertheless, a common approach for solving constrained optimization problems is {\em Lagrangian duality}, which, in our case, turns the CMPG into an (unconstrained) MPG with modified rewards (\cref{lem:dual-modified-mpg}).  This would enable the use of scalable algorithms that have been developed for unconstrained MPGs \citep{leonardos2022global}.
Furthermore, in previous works \citep{liu2021cmix,Diddigi2019}, Lagrangian duality was used for cooperative CMPGs and showed promising experimental results. 
This makes Lagrangian duality a tempting approach for CMPGs.
For this, we define the {\em Lagrangian} $\L:\Pi\times\Reals_+^k\rightarrow \Reals$ and the primal\footnote{Note that the primal is equivalent to \cref{eq:cmpg-optimization-problem}.} and dual problems for \cref{eq:cmpg-optimization-problem} as follows:
\begin{align}
    \L(\vpi, \vlambda) &\triangleq \mPhi(\vpi) + \sum_{j=1}^k \lambda_j \paren{\alpha_j - \V^{c_j}(\vpi)} \tag{Lagrangian}\label{eq:lagrangian} \\
    \tag{Primal} \label{eq:primal}
P^* &= \max_{\vpi\in\Pi} \min_{\vlambda\in\Reals_+^k} \L(\vpi, \vlambda)\\
\tag{Dual} \label{eq:dual}
D^* &= \min_{\vlambda\in\Reals_+^k} \max_{\vpi\in\Pi}  \L(\vpi, \vlambda).
\end{align}
As a first step, in \cref{lem:dual-modified-mpg}, we prove that the dual problem does indeed correspond to an (unconstrained) MPG.
\begin{restatable}{proposition}{lemdualmodifiedmpg}
\label{lem:dual-modified-mpg}
For any $\vlambda \in \Reals_+^k$, $\L(\cdot, \vlambda)$ is a potential function for an MPG with reward functions $\tilde{r}_{i,h} \triangleq r_{i,h} - \sum_{j=1}^k \lambda_j c_{j,h}, \forall i\in [n], \forall h\in [\Horizon]$.
\end{restatable}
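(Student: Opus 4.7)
The plan is to verify the potential function property (\cref{eq:potential-property}) for $\L(\cdot, \vlambda)$ with respect to the modified rewards $\tilde{r}_{i,h}$, by directly computing both sides of the defining identity and showing they coincide. The key observation driving the proof is that the value function is linear in the reward/cost functions and that the constraint thresholds $\alpha_j$ are constants independent of the joint policy, so they will cancel when we take differences of $\L$ at two policies that differ only in one agent's component.

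First, I would use linearity of the value function with respect to the reward to write, for any $\vpi \in \Pi$ and $i \in [n]$,
\begin{equation*}
\V^{\tilde{r}_i}(\vpi) = \V^{r_i}(\vpi) - \sum_{j=1}^k \lambda_j \V^{c_j}(\vpi).
\end{equation*}
Then, fixing an agent $i$, a policy $\vpi_{-i}$ for the other agents, and two candidate policies $\pi_i, \pi_i' \in \Pi^i$, I would form the difference
\begin{equation*}
\V^{\tilde{r}_i}(\pi_i, \vpi_{-i}) - \V^{\tilde{r}_i}(\pi_i', \vpi_{-i})
= \bigl[\V^{r_i}(\pi_i, \vpi_{-i}) - \V^{r_i}(\pi_i', \vpi_{-i})\bigr] - \sum_{j=1}^k \lambda_j \bigl[\V^{c_j}(\pi_i, \vpi_{-i}) - \V^{c_j}(\pi_i', \vpi_{-i})\bigr].
\end{equation*}
By the assumption that $\mPhi$ is a potential for the original MPG, the first bracket equals $\mPhi(\pi_i, \vpi_{-i}) - \mPhi(\pi_i', \vpi_{-i})$.

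Next, I would compute the corresponding difference of Lagrangians. From the definition in \cref{eq:lagrangian}, and noting that the terms $\lambda_j \alpha_j$ do not depend on $\vpi$ and therefore cancel,
\begin{equation*}
\L(\pi_i, \vpi_{-i}, \vlambda) - \L(\pi_i', \vpi_{-i}, \vlambda)
= \bigl[\mPhi(\pi_i, \vpi_{-i}) - \mPhi(\pi_i', \vpi_{-i})\bigr] - \sum_{j=1}^k \lambda_j \bigl[\V^{c_j}(\pi_i, \vpi_{-i}) - \V^{c_j}(\pi_i', \vpi_{-i})\bigr].
\end{equation*}
Comparing the two displays yields exactly the potential property \cref{eq:potential-property} for $\L(\cdot,\vlambda)$ in the MPG with rewards $\tilde{r}_{i,h}$, as required.

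There is essentially no obstacle here: the result is a direct algebraic manipulation exploiting linearity of $\V$ in its reward argument and the fact that the $\alpha_j$'s are constants. The only minor care needed is noting that $\lambda_j \geq 0$ plays no role in the identity itself (it is merely needed later for the Lagrangian interpretation), so the statement really holds for any $\vlambda \in \Reals^k$, with $\vlambda \in \Reals_+^k$ being the relevant regime.
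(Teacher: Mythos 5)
Your proof is correct and follows essentially the same route as the paper's: exploit linearity of the value function in the reward to get $\V^{\tilde{r}_i} = \V^{r_i} - \sum_j \lambda_j \V^{c_j}$, note the $\lambda_j\alpha_j$ terms cancel in differences, and invoke the potential property of $\mPhi$ to match the two expressions. If anything, your version is sign-consistent with the stated definitions of $\tilde{r}_{i,h}$ and $\L$ (the paper's own proof silently flips both signs, which cancel), and your closing remark that the identity holds for all $\vlambda\in\Reals^k$ is a correct, if inessential, observation.
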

Then, weak duality guarantees that $D^*\geq P^*$ holds. Unfortunately, in the following proposition, however, we show that {\em strong duality}, i.e., $D^*=P^*$,  {\em does not hold} in general for CMPGs. 

\begin{restatable}{proposition}{thmcmpgstrongduality}
\label{thm:strong-duality-cmpgs}
    There exists a CMPG, for which strong duality does not hold, i.e., for which $P^* \neq D^*$.
\end{restatable}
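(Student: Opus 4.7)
The plan is to construct an explicit counterexample. I will use the simplest possible CMPG: a cooperative two-agent game with a single state, horizon $H=1$, two actions per agent, common rewards $r_1 = r_2 = \mPhi$ with $\mPhi(1,1)=10$, $\mPhi(2,2)=1$, $\mPhi(1,2)=\mPhi(2,1)=0$, a single coupled cost $c(a_1,a_2) = \I[(a_1,a_2)=(1,1)]$, and threshold $\alpha = 1/4$. Since the game is cooperative, $\mPhi$ is trivially a potential function, so this is a valid CMPG.

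Parameterizing joint policies by $(p,q) \in [0,1]^2$ (probabilities of playing action $1$), the potential becomes the bilinear polynomial $\mPhi(p,q) = 10pq + (1-p)(1-q) = 11pq - p - q + 1$, and the feasible set is $\{(p,q) \in [0,1]^2 : pq \leq 1/4\}$. For the primal, I would note that the unconstrained maximizer $(1,1)$ (of value $10$) is infeasible, the unique interior critical point $(1/11,1/11)$ is a saddle (the Hessian has eigenvalues $\pm 11$), and direct inspection of the four corners of $[0,1]^2$ yields $\mPhi \in \{0,1\}$. Hence the constrained optimum must lie on the hyperbola $pq=1/4$. Substituting $q=1/(4p)$ and optimizing $3.75 - p - 1/(4p)$ over $p\in[1/4,1]$ gives the maximizer $p=q=1/2$ with value $P^* = 11/4$.

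For the dual, I invoke \cref{lem:dual-modified-mpg}: for each $\lambda \geq 0$, the inner maximum $D(\lambda) = \max_{(p,q)\in[0,1]^2} (11-\lambda)pq - p - q + 1 + \lambda/4$ is of a bilinear function on the unit square, hence attained at a vertex. Enumerating the four vertices yields $D(\lambda) = \max\{1+\lambda/4,\, 10 - 3\lambda/4\}$. This piecewise-linear function of $\lambda$ is minimized where the two branches meet, at $\lambda = 9$, giving $D^* = 13/4$. Comparing $D^* = 13/4 > 11/4 = P^*$ establishes that strong duality fails, with a gap of $1/2$.

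The main obstacle is choosing the parameters so that the primal optimum is strictly interior to the feasible set (a genuinely mixed policy) while the dual ``sees through'' to the infeasible coordinated profile $(1,1)$. This is enabled precisely by the non-convexity of the hyperbolic feasible region $\{pq \leq 1/4\}$ in $(p,q)$-space, which is the structural difference from the CMDP setting where occupancy measures form a convex polytope and strong duality therefore holds. Once the instance is fixed, the verification is routine algebra, but the search for parameters that simultaneously force (i) infeasibility of the cooperative corner, (ii) optimality of a mixed policy, and (iii) a strictly positive gap between the vertex optima of the Lagrangian and the mixed optimum of the primal, is the only non-mechanical part.
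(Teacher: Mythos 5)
Your proposal is correct and takes essentially the same approach as the paper: an explicit two-agent, single-state, cooperative counterexample with a $2\times 2$ payoff matrix and one coupled constraint, for which $P^*$ and $D^*$ are computed and shown to differ (the paper uses $A=\bigl[\begin{smallmatrix}3&2\\2&4\end{smallmatrix}\bigr]$, $B=\bigl[\begin{smallmatrix}0&0\\0&1\end{smallmatrix}\bigr]$, $\alpha=1/2$, yielding a gap of $0.5$ as well). Your verification is, if anything, cleaner, since you minimize the piecewise-linear dual function in closed form rather than reading $\lambda_D^*$ off a numerically evaluated plot as the paper does.
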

\begin{proof}
We prove this using a counter-example. Consider the following two-agent CMPG with $|\StateSpace|=1$, $\Action_1 = \Action_2 = \curly{1,2}$, reward functions $r = r_1 = r_2$, constraint function $c$ and threshold $\alpha=1/2$. The rewards and constraints are specified via the matrices
\begin{align*}
    A = \begin{bmatrix}
3 & 2\\
2 & 4 \end{bmatrix}, \quad B=\begin{bmatrix}
0 & 0\\
0 & 1 \end{bmatrix},
\end{align*}

where $r(i,j) = A(i,j)$ and $c(i,j)=B(i,j), \forall i,j\in \{1,2\}$. This is a \textit{cooperative} CMPG with potential function $\mPhi(\vpi) = \pi_1^T A \pi_2$. The optimization formulation (\cref{eq:cmpg-optimization-problem}) corresponding to this CMPG is:
\begin{equation}
\label{eq:counter-example}
\max_{\pi_1,\pi_2\in\Delta_2} \pi_1^T A \pi_2,
    \text{ subject to: } \pi_1^T B \pi_2 \leq 1/2.
\end{equation}

\textbf{Primal problem:}
First, we solve the primal problem, which is defined as follows:
\begin{equation*}
    P^* = \max_{\pi_1,\pi_2\in\Delta_2} \min_{\lambda\in\Reals_+} \paren{\pi_1^T A \pi_2  + \lambda \paren{\frac{1}{2} - \pi_1^T B \pi_2}}
\end{equation*}
The policies $\pi_1 = \pi_2 = \bracket{1-\sqrt{\frac{1}{2}}, \sqrt{\frac{1}{2}}}$ solve the primal problem with a reward of $P^* \approx 3.09$.
One can easily verify that these three policies are also Nash policies.

\textbf{Dual problem:} Next, we solve the dual problem, which is defined as follows:
\begin{equation*}
    D^* =  \min_{\lambda\in\Reals_+} \underbrace{\max_{\pi_1,\pi_2\in\Delta_2} \paren{\pi_1^T A \pi_2  + \lambda \paren{ \frac{1}{2} - \pi_1^T B \pi_2}}}_{=:d(\lambda)},
\end{equation*}
where $d(\lambda)$ is the {\em dual function}. \cref{fig:dual-counterexample} visualizes $d(\lambda)$ for $\lambda\in [0,2]$.
\begin{figure}
    \centering
    \includegraphics[width=0.5\textwidth]{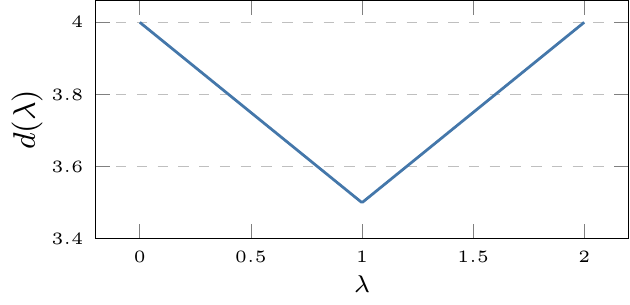}
    \caption{This figure displays the dual function $d(\lambda)$ for the CMPG in \cref{eq:counter-example},  evaluated at 1000 equidistant locations $\lambda\in[0,2]$.}
    \label{fig:dual-counterexample}
\end{figure}
Since the dual function is always convex, it is sufficient to focus only on this interval. From \cref{fig:dual-counterexample}, we can see that $d(\lambda)$ reaches its minimum at $\lambda_D^* = 1$ with $D^* = d(\lambda_D^*)=3.5$. Note that this is strictly larger than the primal solution $P^* = 3$, and therefore, {strong duality does not hold here.} Next, we list two policies that are solutions to the dual problem, i.e., policies $\pi_D^*$ that satisfy $\pi_D^*\in \arg\max_{\pi} \curly{\pi_1^T A \pi_2 + \lambda_D^* \paren{\frac{1}{2} - \pi_1^T B \pi_2}}$:
\begin{enumerate}
    \item $\pi_1 = \bracket{1, 0}, \pi_2 = \bracket{1, 0}$
    \item $\pi_1 = \bracket{0, 1}, \pi_2 = \bracket{0, 1}$
\end{enumerate}
The first policy satisfies the constraints and is indeed a Nash policy, with a reward of $3<P^*$. The second policy, however, does not satisfy the constraints. Solving the dual problem does therefore not necessarily guarantee a feasible policy.

\end{proof}

\looseness -1\textbf{Remark:} To give an intuition on \cref{thm:strong-duality-cmpgs}, consider a cooperative CMPG with $\mPhi \equiv \V^r$, i.e., the potential function is equal to the shared value function $\V^r$. Note that, in this case, the primal problem very much resembles the CMDP objective (\cref{eq:cmpg-optimization-problem}) and it is tempting to solve the CMPG as a CMDP with a large action space $\Action = \times_{i=1}^n \Action_i$. Recall also, that strong duality does indeed hold for CMDPs \citep{paternain2019constrained} and CMDPs can be solved via primal-dual algorithms. By solving this large CMDP, we obtain a solution $\vpi^*$ that specifies distributions over the {\em joint} action space $\Action$. To obtain a solution for the original CMPG, however, we require a policy that can be factored into a set of independent policies $\curly{\pi^*_i}_{i\in [n]}$ such that $\vpi^*_h(a|s) = \prod_{i=1}^n \pi^*_{i,h}(a_i|s), \forall (s,a,h)\in \StateAction\times[H]$.

\vspace{-1mm}
\section{Solving Constrained Markov Potential Games}
\vspace{-1mm}
\label{sec:solving-cmpgs}
\looseness -1 In this section, we propose an efficient algorithm to compute Nash policies in CMPGs\footnote{Note that we may not find a Nash policy that solves \cref{eq:cmpg-optimization-problem} though.}. Similar to the work on unconstrained MPGs by \citet{Song2021}, in our algorithm \AlgName~(\AlgNameShort), agents take turns to solve a {\em Constrained Markov Decision Process} (CMDP), i.e., a single-agent reinforcement learning problem, in every iteration. To do this, the agents need to coordinate, such that, when one agent is solving the CMDP, the others provide a stationary environment to that agent by keeping their policies fixed. There are some technical challenges compared to the unconstrained MPG setting. The main difference is that in the CMPG setting, to ensure the convergence to a Nash policy, we need also to ensure that the intermediate policies remain {\em feasible} (see remark at the end of this section). Our algorithm \AlgNameShort~is described in \cref{alg:cmpg-ca-no-exploration}.

 \begin{algorithm}[t]
    \centering
\caption{\AlgNameShort~(Known Transitions)}\label{alg:cmpg-ca-no-exploration}
    \begin{algorithmic}[1]
\Require $\epsilon>0$ (approximation error), $\vpi^S\in\Pi_C$ (feasible policy), $T$ (number of iterations)
\State $\vpi^0 \gets \vpi^S$
\For{$t=1,...,T$}
    \For{agent $i = 1,...,n$}
        \State Agent $i$ computes $\hat{\pi_i}^t$ such that \cref{eq:cmpg-ca-cmdp} is satisfied. \label{algln:cmpg-ca-no-exploration-cmdp}
        \State $\epsilon_i^t \gets {\V^{r_i}}({\hat{\pi}}_i^t, \vpi_{-i}^{t-1}) - {\V^{r_i}}(\vpi^{t-1})$.
    \EndFor
    \If{$\max_{i\in[n]}\epsilon_i^t > \epsilon/2$}
        \State Set $\vpi^t = (\hat{\pi}_j^t, \vpi_{-j}^{t-1})$, where $j=\arg\max_{i\in[n]} \epsilon_i^t$, break ties arbitrarily.
    \Else
        \State \textbf{break}
    \EndIf
\EndFor
\end{algorithmic}
\end{algorithm}
We assume for now that the agents know their own reward functions, the cost functions as well as the transition model. As a starting point for \AlgNameShort, the agents require access to a feasible, initial policy, which we state in the following assumption:
\begin{assumption}
\label{assumption:cmpg-feasible-initial-policy}
    Given a CMPG, the agents have access to a feasible policy $\vpi^S \in \Pi_C$.
\end{assumption}
This type of assumption is common for safe exploration in CMDPs \citep{Bura2022, liu2021learning}. We discuss in \cref{app:solving-cmpgs}, why we require it for our setting. While, in general, it may be computationally hard to compute a feasible $\vpi^S$ in the multi-agent setting, we now discuss two examples, for which it is easy to compute $\vpi^S$.

\begin{example}[Single Constraint] Consider the problem $\min_{\vpi\in\Pi} \V^{c_1}(\vpi)$. Since the constraint set is feasible, we must have that $\min_{\vpi\in\Pi} \V^{c_1}(\vpi) \leq \alpha_1$. Note that this is an unconstrained Markov decision process (MDP) with state space $\StateSpace$ and action space $\Action$. It is well-known that MDPs always possess at least one {\em deterministic}, optimal policy, which can be computed using dynamic programming techniques. Thus, we compute a deterministic policy $\vpi^C\in\arg\min_{\vpi\in\Pi}\V^{c_1}(\vpi)$, s.t. for every state $s\in\StateSpace$ and step $h\in[H]$, there is exactly one action $a=(a_1,...,a_n)\in\Action$, for which $\vpi^C_h(a|s)=1$ and $\vpi^C_h(a'|s)=0, \forall a'\neq a$. Then, for every agent $i\in [n]$, we set $\pi^C_{i,h}(a_i|s)=1$ and $\pi^C_{i,h}(a'_i|s)=0$, for all $a'_i\neq a_i$. It is easy to verify that $\vpi^C = \prod_{i=1}^n \pi^C_i$.
\end{example}

\begin{example}[Independent Transitions and Composite Constraints] Consider a CMPG with per-agent state spaces $\StateSpace_1,...,\StateSpace_n$ and transition models $\Transition_1,...,\Transition_n$, where $\Transition_{j,h}(s'|s, a)$ is the probability that agent $j$ transitions to state $s'\in\StateSpace_j$ from state-action pair $(s,a)\in \StateSpace_j\times\Action_j$ at step $h\in [H]$. We denote by $\StateSpace\triangleq \times_{i=1}^n \StateSpace$ the joint state space and define $\Transition_h(s'|s,a) \triangleq \prod_{i=1}^n \Transition^i_h(s'_i|s_i,a_i)$ as the joint probability of transitioning to state $s'\in\StateSpace$ from state-action pair $(s,a)\in\StateAction$ at step $h\in [H]$. Furthermore, assume that for each $j\in[k]$, the constraint function $c_j$ can be written as $c_{j,h}(s,a) \triangleq \sum_{i=1}^n c_{j,h}^i(s_i,a_i)$. Due to this, the cumulative constraints can be written as $\V^{c_j}(\vpi) = \sum_{i=1}^n \V^{c_j^i}(\pi_i)$, $\forall j\in [k]$. To find a feasible policy, each agent $i\in[n]$ computes $\pi_{i} \in \curly{ \pi\in\Pi^i \Big| \V^{c_j^i}(\pi) \leq c_i^*, \forall j\in [k] }$, where $c_i^* \triangleq \min_{c\in\Reals}\curly{ \exists \pi\in \Pi^i \Big| \V^{c_j^i}(\pi) \leq c, \forall j\in [k] }$. Assuming that the constraint set is feasible, it is easy to see that $\vpi^S = (\pi^S_1,...,\pi^S_n)$ must be feasible.
\end{example}

In \AlgNameShort, the agents start with the feasible policy $\vpi^S$. In every iteration, the agents take turns to maximize their own value function. While one agent is maximizing its value function, the other agents keep their policy fixed (\cref{algln:cmpg-ca-no-exploration-cmdp}); therefore, that agent is essentially solving a CMDP. In iteration $t$, agent $i\in [n]$ faces the CMDP $\M = \paren{\StateSpace, \Action_i, \Horizon, \curly{\wtilde{\Transition}_h}_{h=1}^H, \curly{\wtilde{r}_h}_{h=1}^H, \mu, \curly{\paren{\curly{\wtilde{c}_{j,h}}_{h=1}^H, \alpha_j}}_{j=1}^k}$, where the reward function $\wtilde{r}$, cost functions $\curly{\wtilde{c}_j}_{j\in [n]}$ and transition model $\wtilde{\Transition}$ are defined according to \cref{eq:cmdp-reward-function}, \cref{eq:cmdp-constraint-function} and \cref{eq:cmdp-transitions}:
\begin{align}
    \wtilde{r}_h(s,a_i) &\triangleq \sum_{a_{-i} \in \Action\setminus \Action_i} r_{i,h}(s,(a_i,a_{-i})) \cdot \vpi_{-i,h}^{t-1}(a_{-i}|s), \label{eq:cmdp-reward-function}\\
    \wtilde{c}_{j,h}(s,a_i) &\triangleq \sum_{a_{-i} \in \Action\setminus \Action_i} c_{j,h}(s,(a_i,a_{-i})) \cdot \vpi_{-i,h}^{t-1}(a_{-i}|s), \label{eq:cmdp-constraint-function}\\
    \wtilde{\Transition}_h(s'|s,a_i) &\triangleq \sum_{a_{-i}\in \Action \setminus \Action_i} \Transition_h(s'|s, (a_i, a_{-i})) \cdot \vpi_{-i,h}^{t-1}(a_{-i} | s) \label{eq:cmdp-transitions},
\end{align}
for all $(s,a_i,s',h)\in \StateAction_i\times\StateSpace\times[H]$.
Let us recall the CMDP objective from \cref{eq:cmdp-objective}. In practice, we can only solve \cref{eq:cmdp-objective} {\em approximately}. Given $\epsilon>0$, we assume that in every iteration $t$, agent $i\in [n]$ can efficiently compute a policy $\hat{\pi}_i^t \in \Pi_C^i(\vpi_{-i}^{t-1})$ such that it satisfies the following conditions\footnote{This can be achieved using state-of-the-art primal-dual methods, such as the work by \citet{ding2020natural, paternain2019constrained}.}:
\begin{equation}
\label{eq:cmpg-ca-cmdp}
\begin{gathered}    
    \max_{\pi \in \Pi_C^i(\vpi_{-i}^{t-1})} \V^{r_i}(\pi, \vpi^{t-1}_{-i}) - \V^{r_i}(\hat{\pi}_i^t, \vpi_{-i}^{t-1}) \leq \epsilon/2.
\end{gathered}
\end{equation}
\looseness -1 Due to the potential property (\cref{eq:potential-property}), if agent $i\in[n]$ improves its own value function, it implicitly also improves the potential function. To prove that the potential function can be increased only a finite number of times, implying termination of \AlgNameShort, we require the potential function to be bounded.
\begin{restatable}{lemma}{lempotentialbounded}
\label{lem:potential-bounded}
    Fix an arbitrary base policy $\vpi^B \in \Pi$. Then, for every $\vpi\in\Pi$, the potential function can be bounded as: $\mPhi(\vpi)\leq nH + \mPhi(\vpi^B)$.
\end{restatable}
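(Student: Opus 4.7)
The plan is to use the potential property (\cref{eq:potential-property}) by traversing from the fixed base policy $\vpi^B$ to an arbitrary target policy $\vpi$ one agent at a time, so that each individual swap contributes a telescoping change that equals a difference of value functions, each of which is bounded by $H$.

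Concretely, I would first define a chain of interpolating policies $\vpi^{(0)}, \vpi^{(1)}, \ldots, \vpi^{(n)}$ by
\begin{equation*}
\vpi^{(i)} \triangleq (\pi_1, \ldots, \pi_i, \pi^B_{i+1}, \ldots, \pi^B_n),
\end{equation*}
so that $\vpi^{(0)} = \vpi^B$ and $\vpi^{(n)} = \vpi$, and consecutive elements differ only in the policy of agent $i$. Then I would write the telescoping identity
\begin{equation*}
\mPhi(\vpi) - \mPhi(\vpi^B) = \sum_{i=1}^n \bigl(\mPhi(\vpi^{(i)}) - \mPhi(\vpi^{(i-1)})\bigr).
\end{equation*}

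Next, since $\vpi^{(i)}$ and $\vpi^{(i-1)}$ differ only in the $i$-th coordinate, the potential property (\cref{eq:potential-property}) gives
\begin{equation*}
\mPhi(\vpi^{(i)}) - \mPhi(\vpi^{(i-1)}) = \V^{r_i}(\vpi^{(i)}) - \V^{r_i}(\vpi^{(i-1)}).
\end{equation*}
Because each per-step reward $r_{i,h}$ takes values in $[0,1]$ and the horizon is $H$, the value function $\V^{r_i}$ is bounded in $[0,H]$, so each summand is at most $H$. Summing over $i = 1, \ldots, n$ yields $\mPhi(\vpi) - \mPhi(\vpi^B) \leq nH$, and rearranging gives the claim.

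I do not expect any real obstacle here; the only mild subtlety is to pick the telescoping order so that each swap changes exactly one agent's policy and thus triggers the potential property, and to note that the bound on the per-agent value function follows immediately from the assumption $r_{i,h}\in[0,1]$ stated in \cref{sec:background}.
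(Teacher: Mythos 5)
Your proposal is correct and matches the paper's own argument: the paper constructs the same one-agent-at-a-time interpolating chain (just indexed in the opposite direction, going from $\vpi$ to $\vpi^B$), applies the potential property (\cref{eq:potential-property}) to each consecutive pair, and bounds each resulting value-function difference by $H$ using $r_{i,h}\in[0,1]$. No gaps.
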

We defer the proofs to all theoretical results in this section to \cref{app:solving-cmpgs}. \AlgNameShort~terminates when the agents cannot deviate unilaterally and improve their value function by more than $\epsilon$, i.e., when they reach an $\epsilon$-Nash policy. We state this result in the following theorem:
\begin{restatable}{theorem}{thmconvergencecmpgnoexp}
\label{thm:convergence-cmpg-ca-no-exploration}
Suppose that Assumption \ref{assumption:cmpg-feasible-initial-policy} holds. Then, given $\epsilon>0$, if we invoke \AlgNameShort~
with $T=\frac{2 n H}{\epsilon}$, it converges to an $\epsilon$-Nash policy.
\end{restatable}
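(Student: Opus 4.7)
The plan is to combine three ingredients: (i) the potential property of the MPG, which lets us track progress via a scalar function $\mPhi$; (ii) the boundedness of $\mPhi$ established in \cref{lem:potential-bounded}; and (iii) the CMDP approximation condition in \cref{eq:cmpg-ca-cmdp} that controls the sub-optimality of each agent's update.

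First, I would verify by induction that every iterate $\vpi^t$ produced by \AlgNameShort~remains feasible. The base case $\vpi^0 = \vpi^S \in \Pi_C$ holds by \cref{assumption:cmpg-feasible-initial-policy}. For the inductive step, observe that \cref{eq:cmpg-ca-cmdp} requires $\hat{\pi}_j^t \in \Pi_C^j(\vpi_{-j}^{t-1})$, so the update $\vpi^t = (\hat{\pi}_j^t, \vpi_{-j}^{t-1})$ is feasible by the very definition of $\Pi_C^j(\vpi_{-j}^{t-1})$. This feasibility is crucial because the $\epsilon$-Nash criterion in \cref{eq:nash_policy} is stated against deviations in the constrained policy set.

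Next, I would quantify progress per non-terminating iteration. Suppose iteration $t$ does not break, i.e.\ $\epsilon_j^t > \epsilon/2$ for the winning agent $j$. Applying the potential property (\cref{eq:potential-property}) to agent $j$ yields
\begin{equation*}
\mPhi(\vpi^t) - \mPhi(\vpi^{t-1}) = \V^{r_j}(\hat{\pi}_j^t, \vpi_{-j}^{t-1}) - \V^{r_j}(\vpi^{t-1}) = \epsilon_j^t > \epsilon/2.
\end{equation*}
Combining this with \cref{lem:potential-bounded} applied to the base policy $\vpi^B = \vpi^0$, we get $\mPhi(\vpi^T) - \mPhi(\vpi^0) \leq nH$. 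If all $T = 2nH/\epsilon$ iterations were non-terminating, the telescoping sum would give $\mPhi(\vpi^T) - \mPhi(\vpi^0) > T \cdot \epsilon/2 = nH$, contradicting the bound. Hence the algorithm must break at some iteration $t^\star \leq T$.

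Finally, I would show that the policy at termination is $\epsilon$-Nash. At the breaking iteration $t^\star$, we have $\epsilon_i^{t^\star} \leq \epsilon/2$ for every agent $i$, i.e.\ $\V^{r_i}(\hat{\pi}_i^{t^\star}, \vpi_{-i}^{t^\star-1}) - \V^{r_i}(\vpi^{t^\star-1}) \leq \epsilon/2$. Combining this with the CMDP approximation guarantee (\cref{eq:cmpg-ca-cmdp}) by the triangle-like addition
\begin{equation*}
\max_{\pi_i \in \Pi_C^i(\vpi_{-i}^{t^\star-1})}\V^{r_i}(\pi_i, \vpi_{-i}^{t^\star-1}) - \V^{r_i}(\vpi^{t^\star-1}) \leq \epsilon/2 + \epsilon/2 = \epsilon,
\end{equation*}
and recalling that $\vpi^{t^\star-1} \in \Pi_C$ by the feasibility step, we conclude $\vpi^{t^\star-1}$ satisfies \cref{eq:nash_policy}. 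The main subtlety to handle carefully is the interplay between the CMDP-solver tolerance and the outer stopping criterion: the $\epsilon/2$ split is exactly what is needed so that the two approximations combine additively into the desired $\epsilon$-Nash bound, and the strict inequality $\epsilon_j^t > \epsilon/2$ is what drives the contradiction argument for the iteration count.
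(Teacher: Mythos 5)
Your proof is correct and follows essentially the same route as the paper's: feasibility of all iterates via the requirement $\hat{\pi}_i^t \in \Pi_C^i(\vpi_{-i}^{t-1})$ in \cref{eq:cmpg-ca-cmdp}, a strict $\epsilon/2$ increase of the potential per non-terminating iteration combined with the $nH$ bound from \cref{lem:potential-bounded} (with base policy $\vpi^0$) to force termination within $T = 2nH/\epsilon$ iterations, and the additive $\epsilon/2 + \epsilon/2$ combination of the stopping criterion with the CMDP tolerance at the break point. If anything, your final displayed inequality is stated more carefully than the paper's, which subtracts $\V^{r_i}(\hat{\pi}_i^T, \vpi_{-i}^{T-1})$ where the intended quantity is $\V^{r_i}(\vpi^{T-1})$.
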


\looseness -1 \textbf{Remark:} What if we relax the feasibility requirement in \cref{eq:cmpg-ca-cmdp} and allow the CMDP solver to return an $\epsilon$-{\em feasible} policy $\vpi$ such that $\V^{c_j}(\vpi)\leq \alpha_j + \epsilon$, $\forall j\in [k]$, for an $\epsilon>0$? In that case, the intermediate policies might not be feasible and \AlgNameShort~may get stuck in an infeasible policy, which is not a Nash policy.

\vspace{-1mm}
\section{Learning in Unknown Constrained Markov Potential Games}
\vspace{-1mm}
\label{sec:learning-cmpgs}
\looseness -1 In this section, we assume that the agents do not know the transition model beforehand. For simplicity, we assume that they do know the rewards and costs\footnote{In general, learning the transitions is harder than learning rewards and costs. Concretely, this also means that learning rewards and costs will not add any dominating terms to the overall sample complexity (see \citet{vaswani2022nearoptimal}).}.
Our objective is to establish a {\em sample complexity} bound for learning in CMPGs. Concretely, we want to construct an algorithm, such that, given any $\epsilon>0, \delta\in (0,1)$, the algorithm returns an $\epsilon$-Nash policy with probability at least $1-\delta$, using at most $\F(\epsilon,\delta)$ {\em samples} from the transition model $\Transition$. Before we proceed, we define an important quantity related to the constraint set, which also contributes to the final sample complexity.
\begin{definition}[Slater constant]
\label{def:slater}
Given a feasible CMPG $\G$, we define its Slater constant $\zeta$ as follows:
\begin{equation*}
    \zeta\triangleq \min_{i\in[n]} \min_{\vpi_{-i}\in \Pi\setminus \Pi^i}  \max_{\pi\in\Pi^i} \{\alpha - \V^c(\pi, \vpi_{-i})\}.
\end{equation*}
We call $\G$ {\em strictly} feasible if and only if $\zeta>0$.
\end{definition}
\looseness -1In the rest of this section, we assume that the agents face an unknown, strictly feasible CMPG with Slater constant $\zeta>0$. Next, we discuss which parts of \AlgNameShort~need to be adapted for this setting.
\begin{enumerate}
[topsep=0.pt,parsep=1.5pt,partopsep=1.5pt,leftmargin=*]
    \item \looseness -1 In every iteration $t$, each agent $i\in [n]$ needs to solve the CMDP described in \cref{sec:solving-cmpgs} (\cref{algln:cmpg-ca-no-exploration-cmdp}). To solve this CMDP, we assume access to a {\em sample-efficient} CMDP solver, which has the following guarantees: Given $\epsilon>0, \delta \in (0,1)$, the solver uses at most $\F_{C}\paren{|\StateSpace|,|\Action_i|,\Horizon,\zeta,\delta,\frac{\epsilon}{4}}$ samples and returns a policy $\hat{\pi}_i^t\in\Pi_C^i(\vpi_{-i}^{t-1})$ such that it satisfies the following, with probability at least $1-\delta$:
    \begin{equation}
    \label{eq:sample-efficient-cmdp-solver}
        \max_{\pi\in \Pi_C^i(\vpi_{-i}^{t-1})} \V^{r_i}(\pi, \vpi_{-i}^{t-1}) - \V^{r_i}(\hat{\pi_i}^t, \vpi_{-i}^{t-1}) \leq \epsilon/4.
    \end{equation}
    \looseness -1Compared to the setting with known transitions, we have a stricter bound on the approximation error of $\epsilon/4$ here. We discuss in \cref{app:learning-cmpgs}, why we require this.

    \item \looseness -1 To compute $\epsilon_i^t$ in step $t$,  agent $i$ needs to estimate the value functions $\V^{r_i}(\hat{\pi}_i^t, \vpi_{-i}^{t-1})$ and $\V^{r_i}(\vpi^{t-1})$. For the former, the agents execute the policy $(\hat{\pi}_i^t, \vpi_{-i}^{t-1})$ for $M>0$ episodes\footnote{Each episode is a sequence of $H$ steps. At the beginning of each episode, the initial state is freshly sampled from $\mu$.} and agent $i$ estimates $\hat{\V}^{r_i}(\hat{\pi}_i^t, \vpi_{-i}^{t-1})$ with the average of the observed, cumulative rewards. For the latter, similarly, the agents execute $\vpi^{t-1}$ for $M$ episodes, but these observations can be used to estimate $\V^{r_1}(\vpi^{t-1}),...,\V^{r_n}(\vpi^{t-1})$ simultaneously\footnote{This holds because we assumed that the reward functions are known.}.
\end{enumerate}
\looseness -1 The resulting algorithm \AlgNameExp~(\AlgNameShortExp) is described in \cref{alg:cmpg-ca}.
\begin{algorithm}[t]
    \centering
\caption{\AlgNameShortExp~(Unknown Transitions)}\label{alg:cmpg-ca}
    \begin{algorithmic}[1]
\Require $\epsilon>0$ (approximation error), $\delta\in (0,1)$ (confidence), $\vpi^S\in\Pi_C$ (feasible policy), $T$ (number of iterations), $M>0$ (number of samples per policy)
\State $\vpi^0 \gets \vpi^S$
\For{$t=1,...,T$}
    \State Execute policy $\vpi^{t-1}$ for $M$ episodes and estimate $\hat{\V}^{r_1}(\vpi^{t-1}),...,\hat{\V}^{r_n}(\vpi^{t-1})$.
    \For{agent $i = 1,...,n$}
        \State Agent $i$ computes $\hat{\pi_i}^t$ such that \cref{eq:sample-efficient-cmdp-solver} is satisfied.
        \State Execute policy $(\hat{\pi}_i^t, \vpi_{-i}^{t-1})$ for $M$ episodes and estimate $\hat{\V}^{r_i}(\hat{\pi}_i^t, \vpi_{-i}^{t-1})$.
        \State $\epsilon_i^t \gets {\hat{\V}^{r_i}}({\hat{\pi}}_i^t, \vpi_{-i}^{t-1}) - {\hat{\V}^{r_i}}(\vpi^{t-1})$.
    \EndFor
    \If{$\max_{i\in[n]}\epsilon_i^t > \epsilon/2$}
        \State Set $\vpi^t = (\hat{\pi_j}^t, \vpi_{-j}^{t-1})$, where $j=\arg\max_{i\in[n]} \epsilon_i^t$, break ties arbitrarily.
    \Else
        \State \textbf{break}
    \EndIf
\EndFor
\end{algorithmic}
\end{algorithm}

\begin{restatable}{theorem}{thmcmpgcasc}
\label{thm:cmpg-ca-sample-complexity}
Given a strictly feasible CMPG $\G$ with Slater constant $\zeta>0$, suppose that the agents have access to an initial feasible policy (cf. Assumption \ref{assumption:cmpg-feasible-initial-policy}). Furthermore, assume that the agents have access to a sample-efficient CMDP solver (\cref{eq:sample-efficient-cmdp-solver}). Then, for any $\epsilon>0$, $\delta\in (0,1)$, \AlgNameShortExp~invoked with $M = \frac{32 H^2}{\epsilon^2} \log \paren{\frac{32 n^2 H}{\epsilon \delta}}$ and $T = \frac{4 nH}{\epsilon}$ returns an $\epsilon$-Nash policy with probability at least $1-\delta$, using the following number of samples:
\begin{align*}
    \F(\epsilon, \delta) \triangleq     \sum_{t=1}^T \sum_{i=1}^n \F_C\paren{|S|, |\Action_i|, H, \zeta, \frac{\epsilon \delta}{8 n^2 H}, \frac{\epsilon}{4}} + \frac{256 n^2 H^4}{\epsilon^3} \log \paren{\frac{32 n^2 H}{\epsilon \delta}}.
\end{align*}
\end{restatable}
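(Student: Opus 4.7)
The plan is to extend the correctness-and-termination analysis of \cref{thm:convergence-cmpg-ca-no-exploration} by accounting for two new sources of error—the Monte-Carlo estimation error in the value estimates and the approximation slack of the sample-efficient CMDP solver—and to control both via concentration and union bounds. I will work on a ``good event'' on which every rollout-based estimate is $\epsilon/8$-accurate and every CMDP solver call satisfies (\ref{eq:sample-efficient-cmdp-solver}); conditional on this event, the algorithm will behave like the idealized, known-transitions version up to explicit additive slack.

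First I will concentrate the value estimates. Each of the $M$ independent rollouts produces a cumulative reward in $[0,H]$, so Hoeffding's inequality yields $\P(|\hat{\V}^{r_i}(\cdot)-\V^{r_i}(\cdot)|\geq \epsilon/8)\leq 2\exp(-M\epsilon^2/(32H^2))$, and the choice $M=\tfrac{32H^2}{\epsilon^2}\log\tfrac{32n^2H}{\epsilon\delta}$ drives the per-estimate failure probability below $\epsilon\delta/(16n^2H)$. Across a run, there are at most $T$ rollouts of the base policy $\vpi^{t-1}$ (each producing $n$ values, since the rewards are known) and $Tn$ rollouts of the deviation policies $(\hat{\pi}_i^t,\vpi_{-i}^{t-1})$, so the total number of individual value-function estimates is at most $2Tn=8n^2H/\epsilon$; by a union bound they are all $\epsilon/8$-accurate with probability at least $1-\delta/2$. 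A second union bound over the $Tn=4n^2H/\epsilon$ CMDP-solver invocations, each called with confidence $\epsilon\delta/(8n^2H)$, controls the solver-failure probability by $\delta/2$; a final union yields overall success probability at least $1-\delta$.

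Next I will argue correctness on this good event. Feasibility of $\vpi^t$ follows by induction from (\ref{eq:sample-efficient-cmdp-solver}), since the solver always returns $\hat{\pi}_j^t\in\Pi^j_C(\vpi^{t-1}_{-j})$ and $\vpi^t=(\hat{\pi}_j^t,\vpi^{t-1}_{-j})$. For progress, whenever $\max_i \epsilon_i^t>\epsilon/2$ and agent $j$ attains the maximum, the good event implies $\V^{r_j}(\hat{\pi}_j^t,\vpi^{t-1}_{-j})-\V^{r_j}(\vpi^{t-1})\geq \epsilon/2 - 2\cdot\epsilon/8=\epsilon/4$, and the potential property (\ref{eq:potential-property}) gives $\mPhi(\vpi^t)-\mPhi(\vpi^{t-1})\geq \epsilon/4$. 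Since \cref{lem:potential-bounded} bounds the total increase of $\mPhi$ from $\vpi^S$ by $nH$, the inner loop must hit the \textbf{break} within $T=4nH/\epsilon$ iterations. When it does, $\hat{\V}^{r_i}(\hat{\pi}_i^t,\vpi^{t-1}_{-i})-\hat{\V}^{r_i}(\vpi^{t-1})\leq \epsilon/2$ for every $i$; absorbing the $\epsilon/4$ total estimation slack and the $\epsilon/4$ CMDP slack gives $\max_{\pi\in\Pi^i_C(\vpi^{t-1}_{-i})}\V^{r_i}(\pi,\vpi^{t-1}_{-i})-\V^{r_i}(\vpi^{t-1})\leq \epsilon$ for all $i$, so $\vpi^{t-1}$ is $\epsilon$-Nash.

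Finally I will count samples. Each iteration uses $n$ CMDP-solver calls, contributing the first summand $\sum_{t=1}^T\sum_{i=1}^n \F_C(\StateSpaceSize,|\Action_i|,H,\zeta,\epsilon\delta/(8n^2H),\epsilon/4)$, and $(n+1)M$ episodes of length $H$ for value estimation; bounding $n+1\leq 2n$ and plugging in $T$ and $M$ gives $T\cdot 2n\cdot M\cdot H=\tfrac{256n^2H^4}{\epsilon^3}\log\tfrac{32n^2H}{\epsilon\delta}$, matching the second summand of $\F(\epsilon,\delta)$. The main delicate step I expect is the error bookkeeping: the thresholds $\epsilon/8$, $\epsilon/4$ and $\epsilon/2$ must be chosen so that the Monte-Carlo slack, the CMDP slack and the progress threshold \emph{simultaneously} sum to $\epsilon$ for the Nash certificate and leave a strict $\epsilon/4$ margin for monotone potential gain; getting this tight—while maintaining the feasibility invariant that makes \cref{lem:potential-bounded} applicable—is what dictates the precise constants in $M$, $T$ and the CMDP-solver confidence.
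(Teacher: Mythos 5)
Your proposal is correct and follows essentially the same route as the paper's proof: the same good-event decomposition (CMDP-solver success plus $\epsilon/8$-accurate value estimates), the same Hoeffding and union-bound calculations yielding the two $\delta/2$ failure budgets, the same $\epsilon/4$-per-iteration potential-increase argument combined with \cref{lem:potential-bounded} to force termination within $T$, and the same $\epsilon/2+\epsilon/4+\epsilon/4$ error bookkeeping for the Nash certificate and $2nTMH$ bound for the estimation samples. No gaps.
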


\looseness -1 In the next two sub-sections, we will instantiate \AlgNameShortExp~with two different state-of-the-art CMDP solvers and state the resulting sample complexity bounds. Both algorithms are designed for CMDPs with a {\em single} constraint. Due to this, we set $k=1$ and denote our cost function by $\curly{ c_h }_{h=1}^H$ and refer to the constraint parameter as $\alpha$. Note that this is due to a limitation of the existing CMDP algorithms and not of \AlgNameShortExp.

\subsection{Generative model}
\label{subsec:cmdp-generative-model}
In this section, we assume that the agents have access to a {\em generative model}, i.e., they can directly obtain samples from the transition model $\Transition_h(\cdot|s,a)$, for any state-action pair $(s,a)\in\StateAction$ and any $h\in [H]$. 
Similar to previous results in CMDPs  \citet{vaswani2022nearoptimal} we propose a novel algorithm for finite-horizon CMDPs and describe it in \cref{alg:cmdp-generative-model}.
\begin{algorithm}[t]
    \centering
\caption{CMDPs with generative model}\label{alg:cmdp-generative-model}
    \begin{algorithmic}[1]
\Require $\StateSpace$ (state space), $\Action$ (action space), $H$ (horizon), $\curly{ r_h}_{h=1}^H$ (reward function), $\curly{ c_h}_{h=1}^H$ (constraint function), $\zeta>0$ (Slater constant), $N$ (number of samples), $\alpha'$ (constraint threshold), $U$ (projection upper
bound), $\lambda_0 = 0$ (initialization).
\State For each state-action $(s,a)$ pair and step $h\in[H]$, collect $N$ samples from $P_h(\cdot|s, a)$ and form the empirical transition model $\hat{\Transition}_h(\cdot|s,a)$. \label{algln:cmdp-gen-model-sampling}
\State Form the empirical CMDP $\hat{\M} = \paren{\StateSpace, \Action, \Horizon, \curly{\hat{\Transition}_h}_{h=1}^{\Horizon}, \mu, \curly{\reward_{h}}_{h=1}^{\Horizon},\curly{c_{h}}_{h=1}^{\Horizon}  , \alpha'}$.
\For{$t=0,...,T-1$} \label{algln:cmdp-gen-model-primal-dual}
\State Update the policy: $\hat{\pi}_t \in \arg\max_{\pi\in\Pi} \hat{\V}^{r -\lambda_t c}(\pi)$
\State Update the dual-variables: $\lambda_{t+1} =  \P_{[0,U]} \bracket{ \lambda_t - \eta \paren{ \alpha' - \hat{\V}^c(\hat{\pi}_t) }} $
\EndFor \label{algln:cmdp-gen-model-primal-dual-end}
\State Convert $\curly{\hat{\pi}_t}_{t=0}^{T-1}$ into a single policy $\bar{\pi}$ s.t. $\hat{\V}^l(\bar{\pi}) = \frac{1}{T} \sum_{t=1}^T \hat{\V}^l(\hat{\pi}_t)$ for $l=r,c$ (see \cref{subsec:occupancy-measures}). \label{algln:cmdp-gen-model-convert}
\end{algorithmic}
\end{algorithm}
\cref{lem:cmdp-sample-complexity-gen-model} (cf.~\cref{app:cmdp-generative-model}) establishes the sample complexity for \cref{alg:cmdp-generative-model}.

\begin{corollary}
\label{cor:cmpg-ca-sc-gen-model}
\looseness -1 Given a strictly feasible CMPG $\G$, assume that its Slater constant $\zeta>0$ is known. Furthermore, assume that the agents invoke \cref{alg:cmdp-generative-model} with $\epsilon'=\frac{\epsilon}{4}$, $\delta'=\O\paren{\frac{\epsilon \delta}{n^2 H}}$ and parameters set as in \cref{lem:cmdp-sample-complexity-gen-model} to solve \cref{eq:sample-efficient-cmdp-solver}. Then, for any $\epsilon>0, \delta \in (0,1)$, \AlgNameShortExp~invoked with $M=\O\paren{\frac{H^2}{\epsilon^2} \log\paren{\frac{n H}{\epsilon \delta}}}$ and $T=\frac{4 n H}{\epsilon}$, returns an $\epsilon$-Nash policy with probability at least $1-\delta$ with an overall sample complexity of:
\begin{align*}
\F(\epsilon,\delta) &\leq \wtilde{\O}\paren{ \frac{n |\StateSpace| H^8 \log \paren{\frac{1}{\epsilon \delta}} \sum_{i=1}^n|\Action_i| }{\epsilon^3 \zeta^2} + \frac{n^2 H^4 \log \paren{\frac{1}{\epsilon \delta}} }{\epsilon^3}}.
\end{align*}
\end{corollary}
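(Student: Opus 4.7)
The plan is to derive this corollary as a direct consequence of Theorem~\ref{thm:cmpg-ca-sample-complexity} by substituting in the specific sample complexity of Algorithm~\ref{alg:cmdp-generative-model} established in Lemma~\ref{lem:cmdp-sample-complexity-gen-model}. First, I would verify that the hypotheses of Theorem~\ref{thm:cmpg-ca-sample-complexity} hold in the present setting: strict feasibility of $\G$ (with known $\zeta>0$) is assumed, Assumption~\ref{assumption:cmpg-feasible-initial-policy} provides the initial feasible policy, and Lemma~\ref{lem:cmdp-sample-complexity-gen-model} guarantees that Algorithm~\ref{alg:cmdp-generative-model}, invoked with $\epsilon'=\epsilon/4$ and $\delta'=\Theta(\epsilon\delta/(n^2 H))$, satisfies the sample-efficient CMDP solver requirement of Equation~\eqref{eq:sample-efficient-cmdp-solver} with failure probability $\delta'$. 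The specific choice $\delta'=\Theta(\epsilon\delta/(n^2 H))$ is what enables a final union bound over the $T\cdot n = \Theta(n^2 H/\epsilon)$ CMDP calls to yield total failure probability at most $\delta/2$, matching the accounting already present inside Theorem~\ref{thm:cmpg-ca-sample-complexity}.

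Next, I would take the sample-complexity expression produced by Theorem~\ref{thm:cmpg-ca-sample-complexity},
\begin{equation*}
\F(\epsilon,\delta) \;=\; \sum_{t=1}^{T}\sum_{i=1}^{n}\F_{C}\!\left(|\StateSpace|,|\Action_i|,H,\zeta,\tfrac{\epsilon\delta}{8n^{2}H},\tfrac{\epsilon}{4}\right)\;+\;\frac{256\,n^{2}H^{4}}{\epsilon^{3}}\log\!\left(\tfrac{32 n^{2}H}{\epsilon\delta}\right),
\end{equation*}
and insert $T=4nH/\epsilon$ together with the explicit form of $\F_C$ provided by Lemma~\ref{lem:cmdp-sample-complexity-gen-model}. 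The CMDP bound scales polynomially in $|\StateSpace|$ and $|\Action_i|$, as $1/(\epsilon')^{2}$ in the target accuracy, as $1/\zeta^{2}$ in the Slater constant, and with a fixed polynomial power of $H$; substituting $\epsilon'=\epsilon/4$ replaces $1/(\epsilon')^{2}$ by $16/\epsilon^{2}$, and the dependence on $\delta'$ enters only logarithmically, becoming $\log(1/(\epsilon\delta))$ up to constants that are absorbed in the $\widetilde{\O}$ notation.

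The summation over $i\in[n]$ converts $|\Action_i|$ into $\sum_{i=1}^{n}|\Action_i|$, while the summation over $t\in[T]$ multiplies by $T=\Theta(nH/\epsilon)$, raising the accuracy-dependence to $1/\epsilon^{3}$ and contributing one additional factor of $H$; the exponent of $H$ inside $\F_C$ thus gets promoted to the advertised $H^{8}$, and an additional factor $n$ appears from $T$. This recovers the dominant term $\widetilde{\O}\bigl(n|\StateSpace|H^{8}\log(1/(\epsilon\delta))\sum_{i}|\Action_i|/(\epsilon^{3}\zeta^{2})\bigr)$, and the residual $\widetilde{\O}(n^{2}H^{4}\log(1/(\epsilon\delta))/\epsilon^{3})$ is carried over verbatim from the second summand in Theorem~\ref{thm:cmpg-ca-sample-complexity}.

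The only genuinely non-routine part is bookkeeping: one must make sure that the logarithmic factor produced by plugging $\delta'=\Theta(\epsilon\delta/(n^{2}H))$ into $\F_C$ collapses cleanly into the $\log(1/(\epsilon\delta))$ shown in the final bound (all $n$ and $H$ contributions inside the log are polynomial and thus disappear into $\widetilde{\O}$), and that the union bound covers both the $Tn$ CMDP calls and the $2T n$ Monte-Carlo value estimates used for $\hat{\V}^{r_i}$ — but the parameter $M=\widetilde{\O}(H^{2}/\epsilon^{2})$ is chosen precisely so that these estimates are $\epsilon/8$-accurate with probability at least $1-\delta/(2nT)$ each, exactly as in the proof of Theorem~\ref{thm:cmpg-ca-sample-complexity}. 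I do not expect any substantive obstacle beyond this accounting.
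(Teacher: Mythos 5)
Your proposal is correct and matches the paper's (implicit) derivation exactly: \cref{cor:cmpg-ca-sc-gen-model} is obtained by substituting the bound $\F_C = \wtilde{\O}\paren{|\StateSpace||\Action_i|H^7\log(1/\delta')/(\epsilon'^2\zeta^2)}$ from \cref{lem:cmdp-sample-complexity-gen-model} with $\epsilon'=\epsilon/4$, $\delta'=\O(\epsilon\delta/(n^2H))$ into the expression of \cref{thm:cmpg-ca-sample-complexity}, summing over the $n$ agents and $T=4nH/\epsilon$ iterations to produce the $n$, $H^8$ and $1/\epsilon^3$ factors, and your accounting of the union bounds and logarithmic terms is the same as in the proof of \cref{thm:cmpg-ca-sample-complexity}. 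The only quibble is a factor-of-two slip in the per-estimate confidence ($\delta/(4nT)$ rather than $\delta/(2nT)$ is needed so the $2nT$ value estimates jointly fail with probability at most $\delta/2$), which is absorbed by the constants in $M$.
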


\textbf{Remark:} Compared to the result for {\em unconstrained} MPGs \citep[Theorem 7]{Song2021}, our \cref{cor:cmpg-ca-sc-gen-model} has an additional dependence on $\frac{1}{\zeta^2}$ and a worse dependence on the horizon $H$. These are due to the fact that our CMDP solver must always return a {\em feasible} policy. Finally, the sample complexity result in \citet{Song2021} explicitly depends on $\Phi_{max} \triangleq \max_{\vpi\in\Pi}\mPhi(\vpi)$, whereas we substituted $\Phi_{\max}\leq nH$ (\cref{lem:potential-bounded}).

\subsection{Safe exploration without a generative model}
\label{subsec:cmdp-no-regret}
We now consider the more challenging setting where the agents do not have access to a generative model, but can only explore by executing policies and observing the transitions.
Moreover, during the learning process, we want to ensure that the agents explore {\em safely}. 
Existing algorithms with safe exploration \citep{Bura2022, liu2021learning} have guarantees on the {\em regret}, but no sample complexity guarantees. To address this, we derive a sample complexity bound for the algorithm by \citet{Bura2022} (\cref{alg:cmdp-no-regret}) in \cref{lem:cmdp-sample-complexity-no-regret}.

\begin{algorithm}[t]
    \centering
\caption{CMDPs with safe exploration}\label{alg:cmdp-no-regret}
\begin{algorithmic}[1]
\Require $T$ (total number of iterations), $\delta\in (0,1)$ (confidence), $(\pi^S,\alpha_S)$ (strictly feasible policy and its constraint value), $M>0$ (episodes per policy)
\State Execute $\Dope(\delta,\pi^S, \alpha_S, \alpha, T)$ and obtain policies $\pi_1,...,\pi_T$. \label{algln:dopa}
\For{$t=1,...,T$}
\State Execute policy $\pi_t$ for $M$ episodes and form the estimate $\hat{\V}^r(\pi_t)$.
\EndFor
\State Return policy $\hat{\pi} \in \arg\max_{\pi \in \curly{\pi_t}_{t\in [T]}} \hat{\V}^r(\pi)$. \label{algln:no-regret-final-policy}
\end{algorithmic}
\end{algorithm}

To apply this CMDP solver in \AlgNameShortExp, we need to ensure that in every iteration, the agents have access to a {\em strictly} feasible policy. We state a stronger condition in the following assumption.
\begin{assumption}
    \label{assumption:cmdp-no-regret}
    There exists $c\in (0,\zeta]$ s.t. for any agent $i\in [n]$ and policy $\vpi_{-i}\in\Pi_C\setminus{\Pi^i}$ of the other agents, the agent can obtain a strictly feasible policy $\pi\in\Pi^i$ s.t. $\V^c(\pi, \vpi_{-i}) \leq \alpha - c$.
\end{assumption}
This is a stronger assumption than in \cref{subsec:cmdp-generative-model}, as we additionally require access to a strictly feasible policy for every CMDP that is solved in \AlgNameShortExp.
\begin{corollary}
\label{cor:cmpg-ca-sc-no-regret}
\looseness -1 Suppose that Assumption \ref{assumption:cmdp-no-regret} holds. Given $\epsilon>0, \delta \in (0,1)$, assume that we invoke \AlgNameShortExp~with $M=\O\paren{\frac{H^2}{\epsilon^2} \log\paren{\frac{n H}{\epsilon \delta}}}$ and $T=\frac{4 n H}{\epsilon}$. Furthermore, assume that we use \cref{alg:cmdp-no-regret} as CMDP solver with $\epsilon'=\frac{\epsilon}{4}$, $\delta'=\O\paren{\frac{\epsilon \delta}{n^2 H}}$ and parameters set as in \cref{lem:cmdp-sample-complexity-no-regret}. Then, \AlgNameShortExp~returns an $\epsilon$-Nash policy with probability at least $1-\delta$ with an overall sample complexity of:
\begin{align*}
\F(\epsilon,\delta) &\leq \wtilde{\O}\paren{ \frac{n |\StateSpace|^2 H^{10} \log \paren{\frac{1}{\epsilon \delta}} \sum_{i=1}^n|\Action_i| }{\epsilon^5 c^2 } + \frac{n^2 H^4 \log \paren{\frac{1}{\epsilon \delta}} }{\epsilon^3}}.
\end{align*}
\end{corollary}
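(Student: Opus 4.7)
The plan is to treat \cref{thm:cmpg-ca-sample-complexity} as the skeleton and instantiate its CMDP-solver cost $\F_C$ with the bound for \cref{alg:cmdp-no-regret} stated in \cref{lem:cmdp-sample-complexity-no-regret}. Concretely, with $\epsilon' = \epsilon/4$ and $\delta' = \epsilon\delta/(8 n^2 H)$ (the accuracy and confidence demanded by \AlgNameShortExp), and Slater constant $c$ for the induced CMDPs, the per-call cost is $\F_C(|\StateSpace|, |\Action_i|, H, c, \delta', \epsilon') = \wtilde{\O}(|\StateSpace|^2 |\Action_i| H^9/(\epsilon^4 c^2))$ up to logarithmic factors in $1/(\epsilon\delta)$.

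The first and most delicate step is to verify that \cref{alg:cmdp-no-regret} is admissible as the inner solver at every round. In iteration $t$ of \AlgNameShortExp, when agent $i$ optimizes its induced CMDP with others playing $\vpi_{-i}^{t-1}$, the invariant $\vpi^{t-1}\in\Pi_C$ is maintained by the algorithm (starting from $\vpi^S$ via \cref{assumption:cmpg-feasible-initial-policy} and updated only to feasible policies). Hence $\vpi_{-i}^{t-1}$ is extendible to a feasible joint policy, i.e.\ $\vpi_{-i}^{t-1}\in\Pi_C\setminus\Pi^i$ in the sense of \cref{assumption:cmdp-no-regret}. That assumption then delivers a policy $\pi_i^S\in\Pi^i$ with $\V^c(\pi_i^S, \vpi_{-i}^{t-1})\le \alpha - c$, which is exactly the strictly feasible baseline with margin $c$ that \Dope~(\cref{algln:dopa}) needs; the induced-CMDP Slater constant is therefore at least $c$, making \cref{lem:cmdp-sample-complexity-no-regret} applicable with $\zeta\gets c$.

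Given admissibility, I would plug the $\F_C$ bound into the general expression of \cref{thm:cmpg-ca-sample-complexity}. Summing over the $T = 4nH/\epsilon$ outer iterations and the $n$ agents per iteration gives
\begin{equation*}
\sum_{t=1}^T \sum_{i=1}^n \F_C \;\le\; \frac{4 n H}{\epsilon}\cdot \sum_{i=1}^n \wtilde{\O}\!\paren{\frac{|\StateSpace|^2 |\Action_i| H^9}{\epsilon^4 c^2}} \;=\; \wtilde{\O}\!\paren{\frac{n |\StateSpace|^2 H^{10} \sum_{i=1}^n |\Action_i|}{\epsilon^5 c^2}},
\end{equation*}
and adding the policy-evaluation term $256\, n^2 H^4 \epsilon^{-3}\log(32 n^2 H/(\epsilon\delta))$ from \cref{thm:cmpg-ca-sample-complexity} yields the second summand in the claim. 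Union-bounding the failure probabilities of the $Tn$ calls to \cref{alg:cmdp-no-regret} with $\delta'$ as above, together with the evaluation concentration already handled inside \cref{thm:cmpg-ca-sample-complexity}, gives the overall $1-\delta$ guarantee.

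The main obstacle I anticipate is the admissibility step, i.e.\ showing that a strictly feasible baseline is available \emph{uniformly} across every induced CMDP that arises during \AlgNameShortExp, with a margin that does not shrink as $t$ grows. This is precisely the role of the quantifier structure in \cref{assumption:cmdp-no-regret} (``for every agent $i$ and every $\vpi_{-i}\in\Pi_C\setminus\Pi^i$''), which is strictly stronger than merely assuming one feasible joint policy as in \cref{assumption:cmpg-feasible-initial-policy}, and is exactly what \Dope~requires per-call. Once this uniform Slater-type condition is secured, the remaining arithmetic — substitution into \cref{thm:cmpg-ca-sample-complexity} and a union bound — is routine.
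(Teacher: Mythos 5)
Your proposal is correct and follows essentially the same route the paper intends: the corollary is obtained by plugging the per-call bound of \cref{lem:cmdp-sample-complexity-no-regret} (with $\epsilon'=\epsilon/4$, $\delta'=\O(\epsilon\delta/(n^2H))$, and margin $\alpha-\alpha_S\geq c$ guaranteed uniformly by Assumption~\ref{assumption:cmdp-no-regret}) into the general sample-complexity expression of \cref{thm:cmpg-ca-sample-complexity}, then summing over the $Tn$ solver calls and adding the policy-evaluation term. Your explicit verification that the feasibility invariant $\vpi^{t-1}\in\Pi_C$ makes \Dope{} admissible at every round, with a margin that does not shrink over iterations, is exactly the role the paper assigns to Assumption~\ref{assumption:cmdp-no-regret}, and the arithmetic matches the stated bound.
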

Note that to satisfy Assumption \ref{assumption:cmdp-no-regret}, any $c\in (0,\zeta]$ is a valid choice. A large $c$ yields a better sample complexity for \cref{cor:cmpg-ca-sc-no-regret}, but restricts the set of strictly feasible policies for the CMDP solver. A smaller $c$ increases the sample complexity, but gives more flexibility, as it allows for a larger set of strictly feasible policies. Comparing the two corollaries, we observe that safe exploration without a generative model leads to a worse dependence of $|S|$, $\Horizon$ and $\epsilon$.

\definecolor{brightBlue}{RGB}{68, 119, 170}
\definecolor{brightRed}{RGB}{238, 102, 119}

\vspace{-1mm}
\section{Experiments}
\vspace{-1mm}
\label{sec:experiments}
\subsection{Grid world}
\looseness -1 We consider a cooperative CMPG with two agents, in which the agents navigate in a 4x4 grid world (cf. \cref{fig:grid-world-setup}). Each cell in the grid represents a state and in every state, each agent can choose to move {\em up}, {\em right}, {\em down} or {\em left}. State transitions are deterministic and if an agent selects an action that would make it leave the grid, it remains in the current state. \cref{fig:grid-world-setup} illustrates the rewards that an agent can obtain in the individual states. Both agents start from the bottom left state and their goal is to reach the target state, which is the state with a reward of 10. To model this as a cooperative game, we set the agents' joint objective to be the sum of their individual rewards. Whenever the agents are on the same state, excluding the start and target states, they {\em collide} and incur a cost of 1. The agents must keep the expected cost below a pre-defined threshold $\alpha\in [0,1]$.

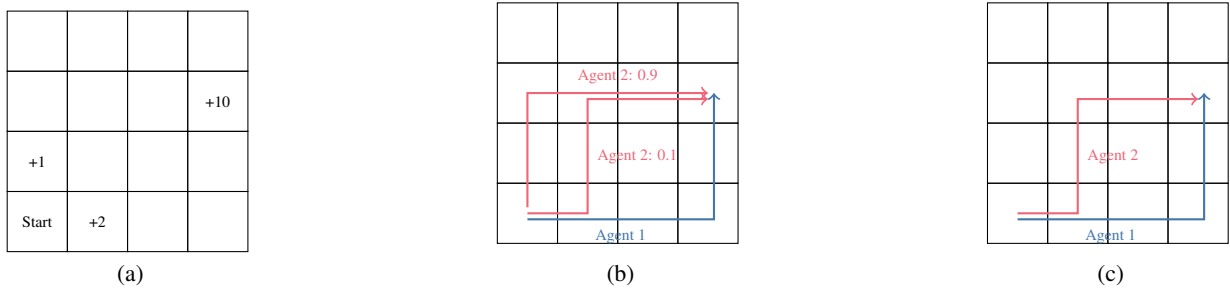
\begin{figure}
    \centering
\subfloat[]{
	\begin{tikzpicture}[scale=0.8]
	\foreach \x in {0,1,2,3}
	\foreach \y in {0,1,2,3}
	{
		\draw (\x,\y) rectangle (\x+1,\y+1);
	}
        \node[font=\tiny] at (0.5,0.5) {Start};
        \node[font=\tiny] at (1.5,0.5) {+2};
        \node[font=\tiny] at (0.5,1.5) {+1};
        \node[font=\tiny] at (3.5,2.5) {+10};
	\end{tikzpicture} 
        \label{fig:grid-world-setup}
}
\hfill
\subfloat[]{
        \begin{tikzpicture}[scale = 0.8]
	\foreach \x in {0,1,2,3}
	\foreach \y in {0,1,2,3}
	{
		\draw (\x,\y) rectangle (\x+1,\y+1);
	}
        \draw[brightBlue, thick, solid, ->] (0.5,0.4) -- node[below,font=\tiny]{Agent 1}(3.6,0.4) -- (3.6,2.5);
        \draw[brightRed, thick, solid, ->] (0.5,0.5) -- (1.5,0.5) -- node[right,font=\tiny]{Agent 2: 0.1}(1.5,2.4) -- (3.5,2.4);
        \draw[brightRed, thick, solid, ->] (0.5,0.6) -- (0.5,2.5) -- node[above,font=\tiny]{Agent 2: 0.9}(3.5,2.5);
	\end{tikzpicture}
        \label{fig:grid-world-policies}
}
\hfill
\subfloat[]{
        \begin{tikzpicture}[scale = 0.8]
	\foreach \x in {0,1,2,3}
	\foreach \y in {0,1,2,3}
	{
		\draw (\x,\y) rectangle (\x+1,\y+1);
	}
        \draw[brightBlue, thick, solid, ->] (0.5,0.4) -- node[below,font=\tiny]{Agent 1}(3.6,0.4) -- (3.6,2.5);
        \draw[brightRed, thick, solid, ->] (0.5,0.5) -- (1.5,0.5) -- node[right,font=\tiny]{Agent 2}(1.5,2.4) -- (3.5,2.4);
	\end{tikzpicture}
        \label{fig:grid-world-policies-dual}
}
    \caption{Grid world experiment: \cref{fig:grid-world-setup} illustrates the state space that the agents navigate in. Both agents start from the bottom left state and their goal is to maximize the sum of their individual rewards. The numbers on the states indicate the rewards associated with those states. The choice of parameters for our evaluation is described in \cref{sec:experiments}. \cref{fig:grid-world-policies} displays the policies with their corresponding probabilities returned by \AlgNameShort. If the agents were to solve the dual problem directly (\cref{sec:cmpg-duality}), they might obtain the policy illustrated in \cref{fig:grid-world-policies-dual}, which is not feasible.}
    \label{fig:grid-world}
\end{figure}

\looseness -1 We evaluate our algorithm \AlgNameShort~ with known transitions and use a primal-dual algorithm as CMDP solver. We set the horizon to $H=6$ and use a threshold of $\alpha=0.1$.
\cref{fig:results-grid} (top row) displays the reward differences between the current policy and the new policy for both agents and after every cycle of the algorithm, averaged over 20 runs. One cycle corresponds to one full iteration of \cref{alg:cmpg-ca}, i.e. all agents solving their CMDPs. When the reward differences reach zero for both agents, this implies that the agents have converged to a Nash policy. The bottom row tracks the cost over the cycles of \cref{alg:cmpg-ca}. The agents start from a strictly feasible policy with a cost of 0, and converge to a policy with a cost close to $\alpha$.
\begin{figure}
    \centering
  \subfloat[Grid world]{\includegraphics[width=0.45\textwidth]{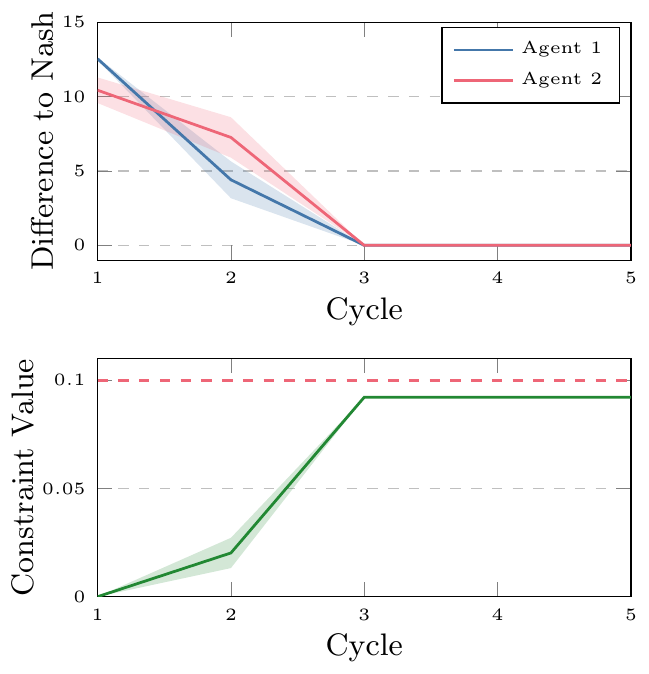}\label{fig:results-grid}}
  \hfill
  \subfloat[Congestion game]{\includegraphics[width=0.45\textwidth]{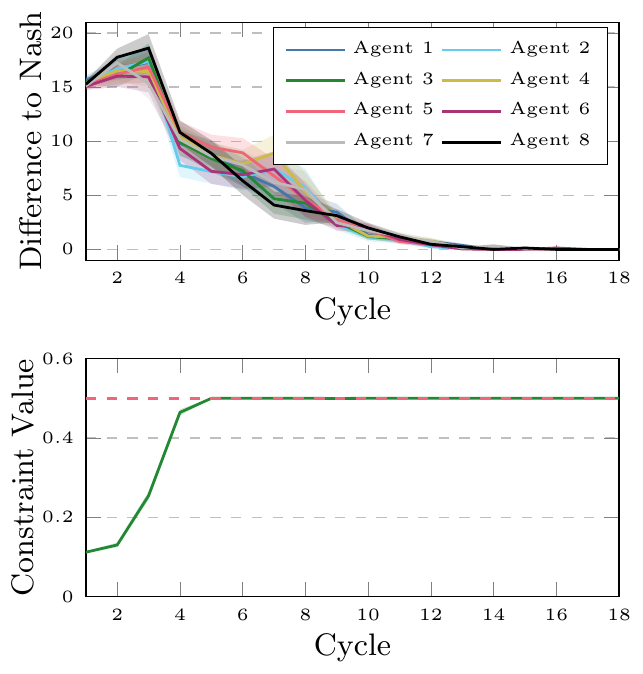}\label{fig:results-congestion}}
    \caption{These plots illustrate the results of the grid world (\cref{fig:results-grid}) and congestion game (\cref{fig:results-congestion}) experiments. One cycle on the x-axis corresponds to one full iteration of \cref{alg:cmpg-ca}, i.e. all $N$ agents solving their CMDPs. The top row displays, for each agent, an average of their reward difference between the current and new policy. When the difference reaches zero, they have converged to a Nash policy. The bottom row tracks the averaged cost over the cycles of \cref{alg:cmpg-ca} (green, solid line). The agents start from a strictly feasible policy and converge to a cost close to $\alpha$ (red, dashed line). In all plots, we additionally also display the standard error.}
    \label{fig:experiments-results}
\end{figure}

The resulting policies with the corresponding probabilities are shown in \cref{fig:grid-world-policies}. With this, the agents collide once with probability 0.1, thus, satisfying the constraint of the experiment. On the other hand, if we solve the Lagrangian dual problem directly (\cref{sec:cmpg-duality}), one of the returned policies is illustrated in \cref{fig:grid-world-policies-dual}. In this case, they always have one collision, which does not satisfy the constraint of the experiment.

\looseness -1
\subsection{Congestion game} We consider a finite-horizon version of the setup described in \citet{leonardos2022global}, i.e., a non-cooperative MPG in which every state is a congestion game\footnote{Every congestion game is also a potential game and vice versa \citep{monderer1996potential}.}. The game consists of two states $\StateSpace = \curly{\mathtt{safe}, \mathtt{unsafe}}$, $N$ agents and action space $\Action = \curly{A,B,C,D}$ for every agent. Each action $a\in\Action$ in state $s\in \StateSpace$ has a weight $w_a^s>0$ associated with it. In the safe state, an agent that selects action $a\in\Action$, receives a reward of $k_a\cdot w_a^{\mathtt{safe}}$, where $k_a$ denotes the number of agents that selected action $a$. In the unsafe state, the reward structure is similar, however, we subtract an offset $c\geq 0$, resulting in a reward of $k_a\cdot w_a^{\mathtt{unsafe}} - c$. In both states $s\in\StateSpace$, the weights follow the order $w_A^s<w_B^s<w_C^s<w_D^s$. Thus, in both states, the agents prefer to take the action that is chosen by most agents. Furthermore, for every action $a\in\Action$, $k_a \cdot w_a^{\mathtt{safe}} \gg k_a \cdot w_a^{\mathtt{unsafe}} - c$ s.t. the agents prefer to stay in the safe state. In the safe state, if more than $N/2$ agents choose the same action, the system transitions to the unsafe state. To get back to the safe state from the unsafe state, the agents must equally distribute themselves among the four actions. The transitions are illustrated in \cref{fig:congestion-game-transitions}.

\begin{figure}[h]
\centering
\begin{tikzpicture}[->,>=stealth',shorten >=1pt,auto,node distance=3cm,semithick,scale=0.8]
  \node[state,font=\tiny] (s1) {$\mathtt{safe}$};
  \node[state,font=\tiny] (s2) [right of=s1] {$\mathtt{unsafe}$};

  \path (s1) edge [bend left,font=\tiny] node {$k^* > N/2$} (s2)
            edge [loop above,font=\tiny] node {$k^* \leq N/2$} (s1)
        (s2) edge [bend left,font=\tiny] node {$k^* \leq N/4$} (s1)
            edge [loop above,font=\tiny] node {$k^* > N/4$} (s2);
\end{tikzpicture}
\caption{Congestion game experiment: For every action $a\in \Action$, we denote by $k_a$ the number of agents that select $a$ in the current step. This figure visualizes the state transitions in every step, where $k^* \triangleq \max_{a\in \Action} k_a$ denotes the maximum number of agents that have selected the same action.}
\label{fig:congestion-game-transitions}
\end{figure}
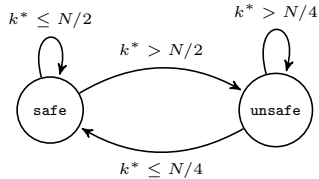

\looseness -1 We evaluate our algorithm \AlgNameShort~with $N=8$ agents and a horizon of $H=2$. Furthermore, we assume that the transitions are known and use a linear program to solve the CMDPs \citep{altman1999constrained}. For the initial state, we set $\mu(\mathtt{safe}) = \mu(\mathtt{unsafe}) = 0.5$. At step $h=1$, in the unsafe state, if more than $N/2$ agents select the same action, the agents incur a cost of 1. Their goal is to keep the cost below a threshold $\alpha=0.5$. 
\cref{fig:results-congestion} (top row) displays, as before, the reward differences between the current policy and the new policy, for each agents and averaged over 50 runs. When this difference reaches zero, this implies that the agents have converged to a Nash policy. The bottom plots track the cost over the cycles of \cref{alg:cmpg-ca}. The agents start from a strictly feasible policy with a cost of 0, and converge to a value close to $\alpha$.

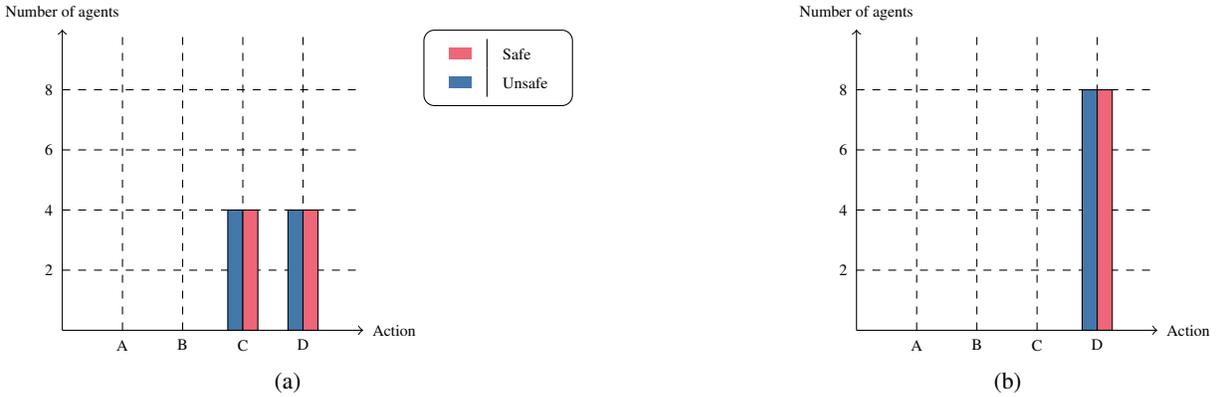
\begin{figure}[h]
    \centering

\subfloat[]{
\begin{tikzpicture}[scale=0.8]
  \draw[->] (0,0) -- (5,0) node[right,font=\tiny] {Action};
  \draw[->] (0,0) -- (0,5) node[above,font=\tiny] {Number of agents};
  \draw[dashed] (1,0) -- (1,5);
  \draw[dashed] (2,0) -- (2,5);
  \draw[dashed] (3,0) -- (3,5);
  \draw[dashed] (4,0) -- (4,5);
  \draw[dashed] (0,1) -- (5,1);
  \draw[dashed] (0,2) -- (5,2);
  \draw[dashed] (0,3) -- (5,3);
  \draw[dashed] (0,4) -- (5,4);

  \draw[fill=brightBlue] (2.75,0) rectangle (3,2);
  \draw[fill=brightBlue] (3.75,0) rectangle (4,2);
  \draw[fill=brightRed] (3,0) rectangle (3.25,2);
  \draw[fill=brightRed] (4,0) rectangle (4.25,2);

  \node[below,font=\tiny] at (1,0) {A};
  \node[below,font=\tiny] at (2,0) {B};
  \node[below,font=\tiny] at (3,0) {C};
  \node[below,font=\tiny] at (4,0) {D};

  \node[left,font=\tiny] at (0,1) {2};
  \node[left,font=\tiny] at (0,2) {4};
  \node[left,font=\tiny] at (0,3) {6};
  \node[left,font=\tiny] at (0,4) {8};

    \node[draw,rounded corners,anchor=north west] at (6,5) {
    \begin{tabular}{c|l}
      \tiny{\textcolor{brightRed}{\rule{0.3cm}{0.15cm}}} & \tiny{Safe} \\
      \tiny{\textcolor{brightBlue}{\rule{0.3cm}{0.15cm}}} & \tiny{Unsafe} \\
    \end{tabular}
  };
\end{tikzpicture}
\label{fig:congestion-game-1}
}
\hfill
\subfloat[]{
\begin{tikzpicture}[scale=0.8]
  \draw[->] (0,0) -- (5,0) node[right,font=\tiny] {Action};
  \draw[->] (0,0) -- (0,5) node[above,font=\tiny] {Number of agents};
  \draw[dashed] (1,0) -- (1,5);
  \draw[dashed] (2,0) -- (2,5);
  \draw[dashed] (3,0) -- (3,5);
  \draw[dashed] (4,0) -- (4,5);
  \draw[dashed] (0,1) -- (5,1);
  \draw[dashed] (0,2) -- (5,2);
  \draw[dashed] (0,3) -- (5,3);
  \draw[dashed] (0,4) -- (5,4);

  \draw[fill=brightBlue] (3.75,0) rectangle (4,4);
  \draw[fill=brightRed] (4,0) rectangle (4.25,4);

  \node[below,font=\tiny] at (1,0) {A};
  \node[below,font=\tiny] at (2,0) {B};
  \node[below,font=\tiny] at (3,0) {C};
  \node[below,font=\tiny] at (4,0) {D};

  \node[left,font=\tiny] at (0,1) {2};
  \node[left,font=\tiny] at (0,2) {4};
  \node[left,font=\tiny] at (0,3) {6};
  \node[left,font=\tiny] at (0,4) {8};
\end{tikzpicture}
\label{fig:congestion-game-2}
}
    \caption{Congestion game experiment: The choice of parameters and constraint function is detailed in \cref{sec:experiments}. \cref{fig:congestion-game-1} and \cref{fig:congestion-game-2} plot the resulting distributions over the actions for steps $h=1$ and $h=2$, respectively.}
    \label{fig:congestion-game-distributions}
\end{figure}

\cref{fig:congestion-game-1} and \cref{fig:congestion-game-2} plot the resulting distributions over the actions for steps $h=1$ and $h=2$, respectively. We observe that in step $h=1$, if the agents start from the safe state, they select their actions s.t. in step $h=2$, the system remains in the safe state. At step $h=2$, the agents maximize their rewards by selecting action D, irrespective of the state that the system is in. At step $h=1$, without the constraints, all agents would prefer to choose action D at the unsafe state. With the choice of our constraints, as we can observe in \cref{fig:congestion-game-1}, the agents distribute themselves equally amongst actions C and D.
\vspace{-1mm}
\section{Conclusion}
\vspace{-1mm}
In this paper, we proved that strong duality does not hold always hold in CMPGs, making primal-dual approaches inapplicable. An interesting future question could be to understand under which conditions primal-dual methods may work for CMPGs. To tackle CMPGs, we presented our algorithm \AlgNameShort, which provably converges to an $\epsilon$-Nash policy.  Note that while this paper focuses on the finite-horizon setting, our algorithm \AlgNameShort~can be adapted to the discounted, infinite-horizon setting by using an appropriate CMDP solver as a sub-routine. Furthermore, we established the first sample complexity bound for learning in CMPGs. In \AlgNameShort, exploration happens only within the CMDP sub-routines. It would be interesting to understand whether the sample complexity bound for the generative model setting (\cref{subsec:cmdp-generative-model}) can be made tighter if we move the exploration outside the CMDP sub-routines.

\begin{acksection}
We thank Daniil Dmitriev, Manish Prajapat and Vignesh Ram Somnath for their valuable comments on the paper. This research was primarily supported by the ETH AI Center. Pragnya Alatur has been funded in part by ETH Foundations of Data Science (ETH-FDS). Giorgia Ramponi is partially funded by Google Brain. 
\end{acksection}

\bibliography{references.bib}
\newpage
\appendix
\doparttoc
\faketableofcontents
\part{Supplementary Material}
\parttoc
\section{Duality for Constrained Markov Potential Games? (\cref{sec:cmpg-duality})}
\label{app:cmpg-duality}
This section provides the theoretical proofs for \cref{sec:cmpg-duality}. We start by proving that a Nash policy can be found via constrained optimization in the following proposition.
\lemconstrglobalmaxnash*
\begin{proof}
We prove this by contradiction. Suppose that $\vpi^*$ is not a Nash policy. Therefore, there is an agent $i\in [n]$, for which there exists a policy $\hat{\pi}_i \in \arg\max_{\pi\in \Pi^i_C\paren{\vpi^*_{-i}}} \V^{r_i}\paren{\pi, \vpi_{-i}^*}$ such that $ \V^{r_i}\paren{\hat{\pi}_i, \vpi_{-i}^*} > \V^{r_i}\paren{\vpi^*}$. Thus, agent $i$ can {\em strictly increase} its value function by deviating to $\hat{\pi}_i$. This implies that we can also increase the potential function, i.e.
\begin{align*}
    \mPhi(\hat{\pi}_i, \vpi_{-i}^*) - \mPhi(\vpi^*) &= \V^{r_i}(\hat{\pi}_i, \vpi_{-i}^*) - \V^{r_i}(\vpi^*) > 0 
    \quad \text{(By \cref{eq:potential-property}.)}\\
    \Rightarrow \mPhi(\hat{\pi}_i, \vpi_{-i}^*) > \mPhi(\vpi^*).
\end{align*}
\looseness -1 Since $(\hat{\pi}_i, \vpi^*_{-i}) \in \Pi_C$ and $\mPhi(\hat{\pi}_i, \vpi_{-i}^*) > \mPhi(\vpi^*)$, this contradicts our assumption that $\vpi^*$ is a solution to \cref{eq:cmpg-optimization-problem}. Therefore, $\vpi^*$ must be a Nash policy.
\end{proof}

As stated in \cref{sec:cmpg-duality}, we make use of the constrained optimization formulation in \cref{lem:constr-global-max-nash} to define the Lagrangian primal and dual problems in \cref{eq:primal} and \cref{eq:dual}, respectively. We are interested in solving the dual problem, as it corresponds to a modified, {\em unconstrained} MPG, which we prove in the following lemma. 

\lemdualmodifiedmpg*
\begin{proof}
Consider an arbitrary $\vlambda\in\Reals_+^k$. Then, we can write the new value function, for any agent $i\in[n]$, as follows:
\begin{equation*}
    \V^{\tilde{r}_i}(\vpi) = \E\bracket{\sum_{h=1}^{\Horizon} \tilde{r}_{i,h}(s_h,a_h) | s_0 = s},
\end{equation*}
where the expectation is taken with respect to $\mu, \vpi$ and $\Transition$. Next, we plug in the definition for $\wtilde{r}_i$ and obtain:
\begin{align}
\V^{\tilde{r}_i}&=\E\bracket{\sum_{h=1}^{\Horizon} \paren{ r_{i,h}(s_h,a_h) + \sum_{j=1}^k \lambda_j c_{j,h}(s_h,a_h) } \Big| s_0 = s} \nonumber\\
    &= \E\bracket{\sum_{h=1}^{\Horizon} r_{i,h}(s_h,a_h) \Big| s_0 = s} + \sum_{j=1}^k \lambda_j \E\bracket{\sum_{h=1}^{\Horizon}  c_{j,h}(s_h,a_h) \Big| s_0 = s} \label{proof:dual-modified-mpg-loe}\\
    &= \V^{r_i}(\vpi) + \sum_{j=1}^k \lambda_j \V^{c_j}(\vpi), \nonumber
\end{align}
where \cref{proof:dual-modified-mpg-loe} is due to linearity of expectation. Next, we show that $\L(\cdot, \vlambda)$ is indeed a potential function for the value functions $\curly{\V^{\widetilde{r}_i
}}_{i\in[n]}$. For any $\vpi\in\Pi$ and $\pi_i\in\Pi^i$, we evaluate the difference in $\L(\cdot,\vlambda)$ between $\vpi$ and $(\pi_i,\vpi_{-i})$:
\begin{align*}
    \L\paren{ \paren{\pi_i, \vpi_{-i}}, \vlambda } - \L\paren{ \vpi, \vlambda } &= \paren{\mPhi(\pi_i, \vpi_{-i}) + \sum_{j=1}^k \lambda_j \paren{ \V^{c_j}\paren{\pi_i, \vpi_{-i}} - \alpha_j }} - \\ 
    &\quad\quad \paren{ \mPhi(\vpi) + \sum_{j=1}^k \lambda_j  \paren{ \V^{c_j}(\vpi) - \alpha_j } }  \tag{By \cref{eq:lagrangian}.}\\
    &=  \mPhi(\pi_i, \vpi_{-i}) - \mPhi(\vpi) + \sum_{j=1}^k \lambda_j \paren{ \V^{c_j}(\pi_i, \vpi_{-i}) - \V^{c_j}(\vpi) } \\
    &= \V^{r_i}(\pi_i, \vpi_{-i}) - \V^{r_i}(\vpi) + \tag{By \cref{eq:potential-property}.}\\
    &\quad\quad \sum_{j=1}^k \lambda_j \paren{ \V^{c_j}(\pi_i, \vpi_{-i}) - \V^{c_j}(\vpi) } \\
    &= \V^{\tilde{r}_i}(\pi_i, \vpi_{-i}) - \V^{\tilde{r}_i}(\vpi).
\end{align*}
The last equality implies that $\L(\cdot,\vlambda)$ satisfies \cref{eq:potential-property} for the new value functions. We conclude that $\L(\cdot,\vlambda)$ is indeed potential function for the modified rewards.
\end{proof}

While \cref{lem:dual-modified-mpg} proves that the dual problem reduces to an unconstrained MPG, which can be solved using existing techniques \citep{leonardos2022global}, we prove in  \cref{thm:strong-duality-cmpgs} (cf. \cref{sec:cmpg-duality}), that, unfortunately, strong duality does not always hold for CMPGs. Moreover, we show that the resulting policy is not only sub-optimal but also  does not respect the constraints.

What if a CMPG satisfies the strong duality property?  Next, we modify the CMPG in \cref{eq:counter-example} such that strong duality holds. We will demonstrate that even in that case, the dual problem might not return a feasible Nash policy. Consider a modified version of the CMPG in  \cref{eq:counter-example} with $A$ defined as follows:
\begin{align} \label{eq:counter-example-modified}
    A = \begin{bmatrix}
3 & 3\\
3 & 4 \end{bmatrix}.
\end{align}

\textbf{Primal problem:}
First, we recall the primal problem, which is defined as follows:
\begin{equation*}
    P^* = \max_{\pi_1,\pi_2\in\Delta_2} \min_{\lambda\in\Reals_+} \paren{\pi_1^T A \pi_2  + \lambda \paren{\frac{1}{2} - \pi_1^T B \pi_2}}
\end{equation*}
It is easy to see that $P^*=3.5$ here. The following policies return an expected reward of $P^*$:
\begin{enumerate}
    \item $\pi_1 = \bracket{\frac{1}{2}, \frac{1}{2}}, \pi_2 = \bracket{0, 1}$
    \item $\pi_1 = \bracket{0,1}, \pi_2 = \bracket{\frac{1}{2}, \frac{1}{2}}$
\end{enumerate}

\textbf{Dual problem:}
Next, we recall the dual problem, which is defined as follows:
\begin{equation*}
    D^* =  \min_{\lambda\in\Reals_+} \underbrace{\max_{\pi_1,\pi_2\in\Delta_2} \paren{\pi_1^T A \pi_2  + \lambda \paren{ \frac{1}{2} - \pi_1^T B \pi_2}}}_{=:d(\lambda)},
\end{equation*}
\begin{figure}
    \centering
    \includegraphics{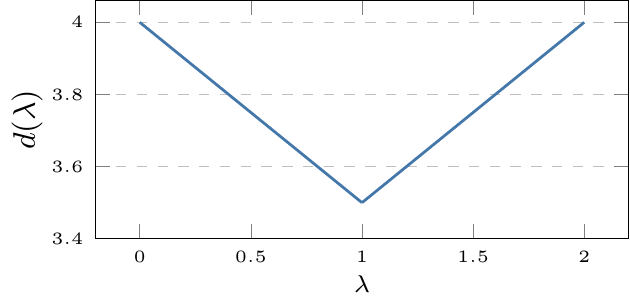}
    \caption{This figure displays the dual function $d(\lambda)$ for the CMPG in \cref{eq:counter-example-modified},  evaluated at 1000 equidistant locations $\lambda\in[0,2]$.}
    \label{fig:dual-duality-holds}
\end{figure}
\cref{fig:dual-duality-holds} visualizes the dual function in the interval $\lambda\in [0,2]$. Using the same reasoning as before, it is easy to see that the optimal dual variable is $\lambda_D^* = 1$, resulting in the dual value $D^* = 3.5$. Note that in this CMPG, strong duality holds, i.e., $P^* = D^*$. Next, we list three policies that are solutions to the dual problem, i.e., policies that satisfy $\pi_D^*\in \arg\max_{\pi} \curly{\pi_1^T A \pi_2 + \lambda_D^* \paren{\frac{1}{2} - \pi_1^T B \pi_2}}$:
\begin{enumerate}
    \item $\pi_1 = \bracket{1, 0}, \pi_2 = \bracket{1, 0}$
    \item $\pi_1 = \bracket{0, 1}, \pi_2 = \bracket{0, 1}$
    \item $\pi_1 = \bracket{\frac{1}{2}, \frac{1}{2}}, \pi_2 = \bracket{\frac{1}{2}, \frac{1}{2}}$
\end{enumerate}
The first policy is feasible and is indeed a Nash policy. As established before, the second policy does not satisfy the constraints. The third policy is feasible, {\em but it is not a Nash policy}. For example, agent 1 can improve the expected reward by switching to policy $\pi_1 = \bracket{0,1}$, and vice versa. Therefore, even if strong duality hold, the dual problem may not return a Nash policy.

\section{Solving Constrained Markov Potential Games (\cref{sec:solving-cmpgs})}
\label{app:solving-cmpgs}

This section provides the theoretical proofs for \cref{sec:solving-cmpgs}.
To prove convergence of \AlgNameShort, we require the potential function to be bounded, which we prove in the following lemma.
\lempotentialbounded*
\begin{proof}
    For any $\vpi\in\Pi$, we define a new sequence of policies $\curly{\wtilde{\vpi}^k}_{k=0,...,n}$ as follows:
    \begin{equation}
        \wtilde{\vpi}^k \triangleq 
        \begin{bmatrix}
        \pi^B_1 \\
        \vdots \\
        \pi^B_{k} \\
        \pi_{k+1} \\
        \vdots \\
        \pi_{n}
    \end{bmatrix}, \;\;\;\; k=0,...,n.
    \end{equation}
    It is easy to see that $\wtilde{\vpi}^0 \equiv \vpi$ and $\wtilde{\vpi}^n \equiv \vpi^B$. For any agent $i\in [n]$, the policies $\wtilde{\vpi}^{i-1}$ and $\wtilde{\vpi}^i$ differ only in agent i. By applying the potential property (\cref{eq:potential-property}), we obtain:
    \begin{equation*}
        \mPhi(\wtilde{\vpi}^{i-1}) - \mPhi(\wtilde{\vpi}^i) = \V^{r_i}(\wtilde{\vpi}^{i-1}) - \V^{r_i}(\wtilde{\vpi}^i).
    \end{equation*}
    Summing over all $i\in [n]$ and rearranging the terms, we get:
    \begin{align*}
        \mPhi(\vpi) &= \sum_{i=1}^n \bracket{ \V^{r_i}\paren{\wtilde{\vpi}^{i-1}} - \V^{r_i}\paren{\wtilde{\vpi}^i} } + \mPhi(\vpi^B) \\
        &\leq \sum_{i=1}^n \underbrace{\abs{ \V^{r_i}\paren{\wtilde{\vpi}^{i-1}} - \V^{r_i}\paren{\wtilde{\vpi}^i}}}_{\leq H} + \mPhi(\vpi^B) \\
        &\leq nH + \mPhi(\vpi^B) \tag{Since $r_{i,h} \in [0,1], \forall i\in [n], \forall h\in [H]$).}
    \end{align*}
\end{proof}

We are now ready to prove the convergence of \cref{alg:cmpg-ca-no-exploration}. For completeness, we repeat the statement of \cref{thm:convergence-cmpg-ca-no-exploration} here.

\thmconvergencecmpgnoexp*
\begin{proof}
Consider time step $t$ and assume that the algorithm has not converged yet. This implies that there is an agent $i\in [n]$ s.t. $\V^{r_i}(\hat{\pi}_i^t, \vpi^{t-1}_{-i}) - \V^{r_i}(\vpi^{t-1})>\epsilon/2$; therefore, $\mPhi(\vpi^t)-\mPhi(\vpi^{t-1})>\epsilon/2$ holds due to the potential property (\cref{eq:potential-property}). 

By applying \cref{lem:bound-dual-variable} with the initial policy $\vpi^0$, we know that $\mPhi(\vpi)\leq nH + \mPhi(\vpi^0)$ holds, for all $\vpi\in\Pi$. Since, in every iteration, the potential function increases by at least $\epsilon/2$, we must have $\V^{r_i}(\hat{\pi}_i^{T}, \vpi_{-i}^{T-1}) -  \V^{r_i}\paren{\vpi^{T-1}}\leq \epsilon/2$, $\forall i\in [n]$ after $T$ iterations.

Combining this with \cref{eq:cmpg-ca-cmdp}, we obtain the following guarantee for every agent $i\in [n]$:
\begin{equation*}
    \max_{\pi \in \Pi^i_C(\vpi^{T-1}_{-i})} \V^{r_i}(\pi, \vpi_{-i}^{T-1}) - \V^{r_i}(\hat{\pi}_i^T, \vpi_{-i}^{T-1}) \leq \epsilon.
\end{equation*}
Finally, recall that we start from a feasible policy $\vpi^0\in\Pi_C$ and solving \cref{eq:cmpg-ca-cmdp} ensures that $\vpi^t\in\Pi_C$, $\forall t\leq T$. Therefore, $\vpi^{T-1}$ is feasible and is indeed an $\epsilon$-Nash policy.

\end{proof}

\section{Learning in Unknown Constrained Markov Potential Games (\cref{sec:learning-cmpgs})}
\label{app:learning-cmpgs}

This section provides the theoretical proofs for \cref{sec:learning-cmpgs}. As discussed in \cref{sec:learning-cmpgs}, in the learning setting, we need to modify \AlgNameShort~to make it work in the learning setting. The resulting algorithm \AlgNameShortExp~is described in \cref{alg:cmpg-ca}.
We prove the sample complexity bound for \AlgNameShortExp~in the following theorem.
\thmcmpgcasc*
\begin{proof}
First, we define the following events:
\begin{enumerate}
    \item $\G_{CMDP} = \curly{ \forall i\in [n], \forall t\in [T]: \text{$\hat{\pi}_i^t$ satisfies \cref{eq:sample-efficient-cmdp-solver}.}}$
    \item $\hat{\G}_{estimate} = \curly{ \forall i\in [n], \forall t\in [T]: \abs{\V^{r_i}(\hat{\pi}_i^t, \vpi^{t-1}_{-i}) - \hat{\V}^{r_i}(\hat{\pi}_i^t, \vpi^{t-1}_{-i})} \leq \epsilon/8 }$
    \item $\G_{estimate} = \curly{ \forall i\in [n], \forall t\in [T]: \abs{\V^{r_i}(\vpi^{t-1}) - \hat{\V}^{r_i}(\vpi^{t-1})} \leq \epsilon/8 }$
\end{enumerate}
Then, we define the "good event" as $\G \triangleq \G_{CMDP} \cap \hat{\G}_{estimate} \cap \G_{estimate}$. In the rest of the proof, we will use the notation $\mathcal{E}^C$ to refer to the complement of an event $\mathcal{E}$.

\textbf{1. For any $t\leq T$, if the algorithm did not terminate, the potential function is {\em strictly} increased:}
Assume that $\G$ holds. For any $t\leq T$ before termination, there exists an agent $j\in [n]$, for which $\epsilon_j^t>\epsilon/2$ holds. This implies the following increase in the potential function:
\begin{align*}
    \mPhi(\hat{\pi}_j^t, \vpi_{-j}^{t-1}) - \mPhi(\vpi^{t-1}) &= \V^{r_j}(\hat{\pi}_j^t, \vpi_{-j}^{t-1}) - \V^{r_j}(\vpi^{t-1}) \tag{By \cref{eq:potential-property}.}
    \\
    &\geq \paren{\hat{\V}^{r_j}(\hat{\pi}_j^t, \vpi_{-j}^{t-1}) - \frac{\epsilon}{8}} - \paren{ \hat{\V}^{r_j}(\vpi^{t-1})  + \frac{\epsilon}{8} } \tag{By $\G_{estimate}, \hat{\G}_{estimate}$.} \\
    &= \epsilon_j^t - \frac{\epsilon}{4} \\
    &> \frac{\epsilon}{4} \tag{Since $\epsilon_j^t>\epsilon/2$.}
\end{align*}

\textbf{2. $\vpi^T$ is an $\epsilon$-Nash policy:}
Assume that $\G$ holds. We apply \cref{lem:potential-bounded} with the base policy $\vpi^0$ to establish that $\mPhi(\vpi^T) - \mPhi(\vpi^0) \leq nH$. Since $\mPhi$ is increased by {\em at least} $\epsilon/4$ in every iteration $t\leq T$, the condition $\max_{i\in [n]}\epsilon_i^T \leq \epsilon/2$ must hold at time $T$. With this, we can bound the differences in the {\em true} value functions as follows, for every agent $i\in [n]$:
\begin{align*}
    {\V}^{r_i}(\hat{\pi}_i^T, \vpi_{-i}^{T-1}) - {\V}^{r_i}(\vpi^{T-1})
    &\leq \paren{ \hat{\V}^{r_i}\paren{\hat{\pi}_i^T, \vpi_{-i}^{T-1}} + \epsilon/8} - \paren{ \hat{\V}^{r_i}\paren{\hat{\pi}_i^T, \vpi_{-i}^{T-1}} - \epsilon/8} \tag{By $
    G_{estimate}, \hat{\G}_{estimate}$.} \\
    &= \epsilon_i^T + \frac{\epsilon}{4} \\
    &\leq \frac{3}{4} \epsilon.
\end{align*}
Combining this with $\G_{CMDP}$, we obtain:
\begin{align*}
    \max_{\pi \in \Pi_C^i(\vpi_{-i}^{T-1})} \V^{r_i}(\pi, \vpi_{-i}^{T-1}) - \V^{r_i}(\vpi^{T-1}) &= \max_{\pi \in \Pi_C^i(\vpi_{-i}^{T-1})} \V^{r_i}(\pi, \vpi_{-i}^{T-1}) - \V^{r_i}(\hat{\pi}_i^T, \vpi_{-i}^{T-1}) + \\
    &\quad\quad \V^{r_i}(\hat{\pi}_i^T, \vpi_{-i}^{T-1}) - \V^{r_i}(\vpi^{T-1}) \\
    &\leq \epsilon.
\end{align*}
Finally, observe that $\vpi^T \in \Pi_C$ and therefore, $\vpi^T$ is an $\epsilon$-Nash policy. Until now, we focused on the convergence of \AlgNameShortExp, {\em assuming} that the event $\G$ holds. In the following paragraphs, we bound the number of samples that are required for $\G$ to hold with high probability.

\textbf{3. Number of samples to obtain $Pr\bracket{\G_{CMDP}} \geq 1-\frac{\delta}{2}$:}
Define $\delta' \triangleq \frac{\epsilon \delta}{8 n^2 H}\in (0,1)$. Suppose that in iteration $t\leq T$, agent $i\in [n]$ uses a CMDP solver with a sample complexity of $\F_C\paren{|\StateSpace|, |\Action_i|, H, \zeta_i^t, \delta', \frac{\epsilon}{4}}$ s.t. $\hat{\pi}_i^t$ satisfies \cref{eq:sample-efficient-cmdp-solver} with probability at least $1-\delta'$. We take a union bound over all iterations $t\in \curly{1,...,T}$ and all agents $i\in [n]$, and obtain the following bound on $Pr\bracket{\G_{CMDP}^C}$:
\begin{align*}
    Pr\bracket{\G_{CMDP}^C } &\leq \sum_{i=1}^n \sum_{t=1}^T Pr\bracket{ \hat{\pi}_i^t \text{ does not satisfy \cref{eq:sample-efficient-cmdp-solver}}} \tag{Union bound.}\\
    &\leq nT \delta' \\
    &= \frac{\delta}{2}. \\
    \Rightarrow  Pr\bracket{\G_{CMDP} } &= 1-  Pr\bracket{\G_{CMDP}^C } \geq 1-\frac{\delta}{2}.
\end{align*}
Finally, to obtain $Pr\bracket{\G_{CMDP} } \geq 1-\frac{\delta}{2}$, we require the following number of samples:
\begin{equation*}
    \sum_{t=1}^T \sum_{i=1}^n \F_C\paren{|S|, |\Action_i|, H, \zeta_i^t, \frac{\epsilon \delta}{8 n^2 H}, \frac{\epsilon}{4}}.
\end{equation*}

\textbf{4. Number of samples to obtain $Pr\bracket{\G_{estimate} \cap \hat{\G}_{estimate}} \geq 1-\frac{\delta}{2}$:} Consider an arbitrary policy $\vpi\in\Pi$ and suppose that the agents execute $\vpi$ for $\M>0$ episodes.
Then, each agent $i\in [n]$ estimates $\hat{\V}^{r_i}(\vpi)$ with the averaged, cumulative reward from those $M$ episodes. Since the episodes are {\em independent}\footnote{After every episode, the initial state for the next episode is freshly sampled from $\mu$.} from each other, and the cumulative reward per episode is bounded in the range $[0,H]$, we can apply Hoeffding's inequality to bound the estimation error for agent $i$ as follows:
\begin{align*}
    Pr\bracket{ \abs{ \hat{\V}^{r_i}(\vpi) - \V^{r_i}(\vpi) } \geq \frac{\epsilon}{8} } &\leq 2 \exp \paren{\frac{-2 M \paren{\epsilon/8}^2 }{H^2} } \tag{By Hoeffding's inequality.} \\
    &= 2 \exp \paren{\frac{- M \epsilon^2}{32 H^2}}.
\end{align*}
The agents need to estimate $2 nT$ value functions in total for $\G_{estimate}$ and $\hat{\G}_{estimate}$. By setting $M = \frac{32 H^2}{\epsilon^2} \log \paren{\frac{32 n^2 H}{\epsilon \delta}}$, we ensure that $Pr\bracket{ \abs{ \hat{\V}^{r_i}(\vpi) - \V^{r_i}(\vpi) } \geq \frac{\epsilon}{8} } \leq \frac{\epsilon \delta}{16 n^2 H}$ holds, for all policies $\vpi$ that are required for $\G_{estimate}$ and $\hat{\G}_{estimate}$. We are now ready to bound $Pr\bracket{ \G_{estimate}^C \cup \hat{\G}_{estimate}^C }$:
\begin{align*}
    Pr\bracket{ \G_{estimate}^C \cup \hat{\G}_{estimate}^C } &\leq Pr\bracket{ \G_{estimate}^C} + Pr\bracket{ \hat{\G}_{estimate}^C } \tag{Union bound.}\\
    &\leq \sum_{t=1}^T \sum_{i=1}^n Pr\bracket{ \abs{ \hat{\V}^{r_i}(\vpi^{t-1}) - \V^{r_i}(\vpi^{t-1}) } \geq \frac{\epsilon}{8}} + \\
    &\quad\quad  \sum_{t=1}^T \sum_{i=1}^n Pr\bracket{ \abs{ \hat{\V}^{r_i}(\hat{\pi}_i^t, \vpi_{-i}^{t-1}) - \V^{r_i}(\hat{\pi}_i^t, \vpi_{-i}^{t-1}) } \geq \frac{\epsilon}{8} } \tag{Union bound.}\\
    &\leq  \frac{8 n^2 H}{\epsilon} \cdot \frac{\epsilon \delta}{16 n^2 H} \\
    &\leq \frac{\delta}{2}. \\
    \Rightarrow Pr\bracket{ \G_{estimate} \cap \hat{\G}_{estimate}} &= 1- Pr\bracket{ \G_{estimate}^C \cup \hat{\G}_{estimate}^C } \geq 1-\frac{\delta}{2}.
\end{align*}

To obtain $Pr\bracket{\G_{estimate} \cap \hat{\G}_{estimate}} \geq 1-\frac{\delta}{2}$, considering that each episode requires $H$ samples, the number of samples required is at most:
\begin{equation*}
2 n T M H \leq \frac{256 n^2 H^4}{\epsilon^3} \log \paren{\frac{32 n^2 H}{\epsilon \delta}}.
\end{equation*}

\textbf{5. Conclusion:}
Combining 3. and 4., we obtain $Pr[\G]\geq 1-\delta$ with the following number of samples:
\begin{equation*}
    \sum_{t=1}^T \sum_{i=1}^n \F_C\paren{|S|, |\Action_i|, H, \zeta_i^t, \frac{\epsilon \delta}{8 n^2 H}, \frac{\epsilon}{4}} + \frac{256 n^2 H^4}{\epsilon^3} \log \paren{\frac{32 n^2 H}{\epsilon \delta}}.
\end{equation*}
\end{proof}

In \cref{thm:cmpg-ca-sample-complexity}, we proved the sample complexity bound for \AlgNameShortExp, however, the actual number of samples will depend on the CMDP solver that is used within \AlgNameShortExp. In the following two sections, we will instantiate \AlgNameShortExp~with two different state-of-the-art CMDP solvers and state the resulting sample complexity bounds. Recall that both algorithms are designed for CMDPs with a {\em single} constraint. Due to this, we set $k=1$ and denote our cost function by $\curly{ c_h }_{h=1}^H$ and refer to the constraint parameter as $\alpha$. Note that this is due to a limitation of the existing CMDP algorithms and not of \AlgNameShortExp.
\section{Learning in Unknown Constrained Markov Potential Games: Generative Model (\cref{subsec:cmdp-generative-model})}
\label{app:cmdp-generative-model}

In this section, similar to \citet{vaswani2022nearoptimal}, we present a novel algorithm for finite-horizon CMDPs, assuming access to a generative model. While the algorithm by \citet{vaswani2022nearoptimal} returns a {\em mixture} policy\footnote{A mixture policy is a probability distribution over a set of policies.}, we require an {\em actual} policy. To address this issue, we make use of {\em occupancy measures}, which we discuss this in detail in \cref{subsec:occupancy-measures}. Our CMDP algorithm is described in \cref{alg:cmdp-generative-model}.
We start by proving the sample complexity bound for \cref{alg:cmdp-generative-model}. Before we start, we first introduce the following definitions, which are used in the proofs in this section:
\begin{align}
    {\pi}^* \in \arg\max_{\pi\in\Pi_C} {\V}^r(\pi), \text{ subject to: } {\V}^c(\pi) \leq \alpha \label{eq:cmdp-gen-optimal-policy} \tag{Optimal policy} \\
    \hat{\pi}^* \in \arg\max_{\pi\in\Pi} \hat{\V}^r(\pi), \text{ subject to: } \hat{\V}^c(\pi) \leq \alpha' \label{eq:cmdp-gen-empirical-optimal-policy} \tag{Optimal empirical policy} \\
    \lambda^* \text{ such that: }\hat{\V}^r(\hat{\pi}^*) = \max_{\pi\in\Pi}\bracket{\hat{\V}^r(\pi) + \lambda^* \paren{\alpha' - \hat{\V}^c(\pi)}} \tag{Optimal dual variable, \citep{paternain2019constrained}.} \label{eq:cmdp-gen-empirical-dual-variable},
\end{align}
where the value function $\hat{\V}^l$ is defined with respect to the empirical transition model $\hat{\Transition}$ (\cref{algln:cmdp-gen-model-sampling}), for $l=r,c$. We are now ready to state the sample complexity bound for \cref{alg:cmdp-generative-model} in the following lemma.
\begin{restatable}{lemma}{lemcmdpscgen}
\label{lem:cmdp-sample-complexity-gen-model}
    Consider a CMDP $\M = \paren{\StateSpace,\Action,H,\curly{\reward_h}_{h\in[H]}, \curly{\Transition_h}_{h\in [H]}, \curly{c_h}_{h\in [H]}, \alpha}$. Assume that the CMDP is {\em strictly} feasible, i.e., $\exists \pi\in\Pi_C$ s.t. $\V^c(\pi)<\alpha$. Furthermore, assume that the Slater constant $\zeta\triangleq \max_{\pi\in\Pi}\curly{\alpha - \V^c(\pi)}$, $\zeta>0$, is known. Then, for any $\epsilon'>0$ and $\delta' \in (0,1)$, \cref{alg:cmdp-generative-model} with parameters $N= \wtilde{\O} \paren{\frac{H^6 \log \paren{\frac{1}{\delta'}}}{\epsilon'^2 \zeta^2}}$, $\alpha' = \alpha - \frac{\epsilon' \zeta}{16 H}$, $U = \frac{8H}{\zeta}$, $T=\O \paren{\frac{H^6}{\zeta^4 \epsilon'^2}}$ and $\eta = \frac{U}{\sqrt{T} H}$ returns a policy $\bar{\pi}\in\Pi_C$ s.t. $\max_{\pi \in \Pi_C} \V^{r}(\pi) - \V^r(\bar{\pi}) \leq \epsilon'$, with probability at least $1-\delta'$. Thus, the sample complexity of \cref{alg:cmdp-generative-model} is $\F_{C} \paren{|\StateSpace|,|\Action_i|,\Horizon,\zeta,\delta',\epsilon'} = |\StateSpace||\Action| H N = \wtilde{\O} \paren{\frac{|\StateSpace||\Action|H^7 \log \paren{\frac{1}{\delta'}} }{\epsilon'^2 \zeta^2}}$.
\end{restatable}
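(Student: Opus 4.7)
The plan is to decompose the suboptimality of the output policy $\bar\pi$ into (i) a statistical error from replacing the true transitions $\Transition$ by the empirical model $\hat\Transition$ of \cref{algln:cmdp-gen-model-sampling}, and (ii) an optimization error from running the projected primal-dual loop (\cref{algln:cmdp-gen-model-primal-dual}--\ref{algln:cmdp-gen-model-primal-dual-end}) for only $T$ steps on the empirical CMDP $\hat\M$. First, I would use a finite-horizon simulation lemma combined with Weissman's $\ell_1$-concentration applied to each $\hat\Transition_h(\cdot\mid s,a)$ to show that, after $N$ samples per $(s,a,h)$, the event
\[
\G_{\mathrm{conc}}\triangleq\bigl\{\forall\pi\in\Pi,\ l\in\{r,c\}:\ |\hat\V^{l}(\pi)-\V^{l}(\pi)|\le\epsilon_1\bigr\}
\]
holds with probability at least $1-\delta'/2$, with $\epsilon_1$ of order $H^2\sqrt{|\StateSpace|\log(1/\delta')/N}$. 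The choice $N=\wtilde\O(H^6\log(1/\delta')/(\epsilon'^2\zeta^2))$ then makes $\epsilon_1$ a small constant multiple of $\epsilon'\zeta/H$, and the tightened threshold $\alpha'=\alpha-\epsilon'\zeta/(16H)$ is set precisely so that on $\G_{\mathrm{conc}}$, any $\pi$ with $\hat\V^c(\pi)\le\alpha'$ automatically satisfies $\V^c(\pi)\le\alpha$, which delivers the safety guarantee on $\bar\pi$.

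The heart of the argument is the analysis of the primal-dual loop on $\hat\M$. The critical preliminary step is to bound the optimal empirical dual variable $\lambda^*$. I would lift the true Slater-feasible policy (guaranteed by $\zeta>0$) into $\hat\M$: on $\G_{\mathrm{conc}}$, this policy remains strictly feasible for $\hat\M$ with slack at least $\zeta/2$ relative to $\alpha'$, provided the constants in $\alpha'$ and $N$ are set as above. The standard Paternain-style identity $\lambda^*\le(\hat\V^r(\hat\pi^*)-\hat\V^r(\pi^{S}))/(\text{slack})$ then yields $\lambda^*\le 4H/\zeta$, justifying the projection radius $U=8H/\zeta$. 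With $\lambda_t\in[0,U]$ and the Lagrangian subgradients bounded by $H$, the usual online-gradient-descent analysis of the saddle point of $\hat\V^r(\pi)+\lambda(\alpha'-\hat\V^c(\pi))$, together with the fact that the primal update is an exact best response over $\Pi$, yields $\O(UH/\sqrt T)$ bounds on both the averaged empirical optimality gap and the averaged empirical constraint violation; $T=\O(H^6/(\zeta^4\epsilon'^2))$ is enough to push both below $\O(\epsilon'\zeta/H)$.

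Finally, the occupancy-measure conversion of \cref{algln:cmdp-gen-model-convert}, detailed in \cref{subsec:occupancy-measures}, collapses $\{\hat\pi_t\}_{t=0}^{T-1}$ into a single policy $\bar\pi$ whose empirical values $\hat\V^l(\bar\pi)$ equal the time averages of $\hat\V^l(\hat\pi_t)$ for $l\in\{r,c\}$, so the preceding bounds transfer to $\bar\pi$ without additional loss. Combining $\G_{\mathrm{conc}}$ with the optimization guarantees---using $\alpha'$ to absorb the empirical constraint violation into genuine feasibility, and comparing $\hat\V^r(\hat\pi^*)$ to $\max_{\pi\in\Pi_C}\V^r(\pi)$ via the same Slater-mixture argument---delivers $\V^c(\bar\pi)\le\alpha$ and $\max_{\pi\in\Pi_C}\V^r(\pi)-\V^r(\bar\pi)\le\epsilon'$ with total failure probability at most $\delta'$ after a union bound. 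The total number of samples drawn in \cref{algln:cmdp-gen-model-sampling} is $|\StateSpace||\Action|HN=\wtilde\O(|\StateSpace||\Action|H^7\log(1/\delta')/(\epsilon'^2\zeta^2))$, matching the statement. The hardest step is the Slater-based control of $\lambda^*$ together with the near-optimality of $\hat\pi^*$: both rely on carefully balancing the tightening $\epsilon'\zeta/(16H)$ of the threshold against the concentration radius $\epsilon_1$ so that $\hat\M$ retains a Slater gap of order $\zeta$, which is what ultimately fixes the constants in all the parameter settings.
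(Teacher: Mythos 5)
Your overall architecture matches the paper's proof: build the empirical CMDP from the generative model, conservatively tighten the threshold to $\alpha'$ so that it absorbs both the concentration error and the $\epsilon_{opt}$-level constraint violation of the primal-dual iterates, bound the optimal empirical dual variable via the Slater condition to justify $U=8H/\zeta$ and the $\O(UH/\sqrt{T})$ optimization error, and convert the iterates into a single policy via the occupancy-measure averaging of \cref{subsec:occupancy-measures}. One sub-step where you genuinely diverge: to compare $\hat{\V}^r(\hat{\pi}^*)$ against $\max_{\pi\in\Pi_C}\V^r(\pi)$ you invoke a Slater-mixture argument (mix the true optimum with the Slater point in occupancy space so that it becomes feasible for the tightened empirical problem, losing $\O(\Delta H/\zeta)$ in reward), whereas the paper introduces a relaxed empirical problem with threshold $\alpha+\Delta$, notes that $\pi^*$ is feasible for it under concentration, and bounds the resulting sensitivity error by $2\Delta\lambda^*\le 4\Delta H/\zeta$ using strong duality of the empirical CMDP (\cref{lem:sensitivity-error} together with \cref{lem:bound-dual-variable}). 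The two give the same order; your version is arguably more elementary, though you must verify the mixture is realizable as a single Markov policy, which the convexity of occupancy measures already provides.

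The one concrete gap is in the concentration step. With Weissman's $\ell_1$ bound your value error is $\epsilon_1=\wtilde{\O}\paren{H^2\sqrt{|\StateSpace|/N}}$, and plugging in $N=\wtilde{\O}\paren{H^6\log(1/\delta')/(\epsilon'^2\zeta^2)}$ gives $\epsilon_1=\wtilde{\O}\paren{\sqrt{|\StateSpace|}\,\epsilon'\zeta/H}$, not a constant multiple of $\epsilon'\zeta/H$ as you claim; the $\wtilde{\O}$ cannot hide the polynomial factor $\sqrt{|\StateSpace|}$. To land on the stated $N$ (and hence the stated $|\StateSpace||\Action|H^7$ sample complexity rather than $|\StateSpace|^2|\Action|H^7$), you need the paper's route: entrywise Hoeffding concentration of $\hat{\Transition}_h(s'|s,a)$ at scale $\beta=\epsilon_1/H^2$ combined with the value-difference lemma (\cref{lem:concentration-value-function} and \cref{lem:value-difference}), which avoids the $\sqrt{|\StateSpace|}$ factor while still holding uniformly over policies. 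Everything else in your argument goes through once this is repaired.
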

\begin{proof}
We prove the lemma for an arbitrary $\Delta>0$ and constraint threshold $\alpha'\triangleq\alpha-\Delta$. We denote by $\epsilon_{opt}$ the error from the primal-dual algorithm (\cref{algln:cmdp-gen-model-primal-dual}-\cref{algln:cmdp-gen-model-primal-dual-end}). As shown in \cref{lem:primal-dual}, the policies $\curly{\hat{\pi}_t}_{t=0}^{T-1}$ satisfy:
\begin{align*}
    \hat{\V}^r(\hat{\pi}^*) - \frac{1}{T}\sum_{t=0}^{T-1} \hat{\V}^r(\hat{\pi}_t) &\leq \epsilon_{opt}, \\
    \frac{1}{T} \sum_{t=0}^{T-1} \hat{\V}^c(\hat{\pi}_t) &\leq \alpha'+\epsilon_{opt}.
\end{align*}
For now, we restrict $\epsilon_{opt}<\Delta$ and later determine the right choice of $\epsilon_{opt}$ and $\Delta$. Next, we analyze the reward sub-optimality and constraint violation guarantees.

\textbf{Constraint Violation:} Assume that $\abs{\V^c(\bar{\pi}) - \hat{\V}^c(\bar{\pi})} \leq \Delta - \epsilon_{opt}$ holds. We will later determine how many samples are required in \cref{algln:cmdp-gen-model-sampling} to satisfy this. Then, it is easy to verify that $\bar{\pi}$ does indeed satisfy the constraints in the original CMDP $\M$, i.e.
\begin{align*}
    \V^c(\bar{\pi}) &\leq \hat{\V}^c(\bar{\pi}) + \Delta - \epsilon_{opt} \\
    &\leq \paren{\alpha' + \epsilon_{opt}} - \Delta + \epsilon_{opt} \tag{By \cref{lem:primal-dual}.} \\
    &= \alpha \tag{Since $\alpha'=\alpha+\Delta$.}
\end{align*}

\textbf{Reward Sub-Optimality:} First, we define the following {\em relaxed} objective with respect to the empirical CMDP $\hat{\M}$:
\begin{equation}
\label{eq:empirical-cmdp-relaxed}
    \wtilde{\pi}^* \in \arg\max_{\pi\in\Pi} \hat{\V}^r(\pi), \text{ subject to: } \hat{\V}^c(\pi)\leq \alpha+\Delta.
\end{equation}
Next, we decompose the reward sub-optimality, by adding and subtracting common terms, as follows:
\begin{align}
    \V^r(\pi^*) - \V^r(\bar{\pi}) &= \bracket{\V^r(\pi^*) - \hat{\V}^r(\pi^*)} + \bracket{\hat{\V}^r(\pi^*)- \hat{\V}^r(\hat{\pi}^*)} + \underbrace{\bracket{\hat{\V}^r(\hat{\pi}^*) - \hat{\V}^r(\bar{\pi})}}_{\leq \epsilon_{opt}, \text{ \cref{lem:primal-dual}}} + \nonumber \\
    &\quad\quad \bracket{\hat{\V}^r(\bar{\pi}) - \V^r(\bar{\pi})} \nonumber \\
    &\leq \underbrace{\bracket{\V^r(\pi^*) - \hat{\V}^r(\pi^*)}}_{\text{Concentration error}} + \underbrace{\bracket{\hat{\V}^r(\wtilde{\pi}^*)- \hat{\V}^r(\hat{\pi}^*)}}_{\text{Sensitivity error}} + \epsilon_{opt} +
    \underbrace{\bracket{\hat{\V}^r(\bar{\pi}) - \V^r(\bar{\pi})}}_{\text{Concentration error}} \label{eq:reward-decomposition}.
\end{align}
To bound the sensitivity error, first assume that $\abs{\V^c(\pi^*) - \hat{\V}^c(\pi^*)} \leq \Delta$ holds. Then, $\pi^*$ is feasible for \cref{eq:empirical-cmdp-relaxed}:
\begin{equation*}
    \hat{\V}^c(\pi^*) \leq \V^c(\pi^*) + \Delta  \leq \alpha+\Delta.
\end{equation*}
Since $\wtilde{\pi}^*$ is optimal for \cref{eq:empirical-cmdp-relaxed}, we know that $\hat{\V}^r(\pi^*) \leq \hat{\V}^r(\wtilde{\pi}^*)$ must hold. Then, we apply \cref{lem:sensitivity-error} to bound the sensitivity error as:
\begin{equation*}
    \hat{\V}^r(\wtilde{\pi}^*)- \hat{\V}^r(\hat{\pi}^*) = \abs{\hat{\V}^r(\wtilde{\pi}^*)- \hat{\V}^r(\hat{\pi}^*)} \leq 2\Delta \lambda^*.
\end{equation*}
To further bound $\lambda^*$, we define $\pi_c^* \in \arg\min_{\pi\in\Pi} \V^c(\pi)$ and assume that $\abs{\V^c(\pi_c^*) - \hat{\V}^c(\pi_c^*)} \leq \frac{\zeta}{2} - \Delta$ holds. Then, we can apply \cref{lem:bound-dual-variable} and obtain the bound $\lambda^* \leq \frac{2 H}{\zeta}$. Plugging this into \cref{eq:reward-decomposition}, we obtain:
\begin{equation*}
     \V^r(\pi^*) - \V^r(\bar{\pi}) \leq \bracket{\V^r(\pi^*) - \hat{\V}^r(\pi^*)} + \frac{4 \Delta H}{\zeta} + \epsilon_{opt} + \bracket{\hat{\V}^r(\bar{\pi}) - \V^r(\bar{\pi})}.
\end{equation*}
Next, we discuss how to set $\Delta$ and $\epsilon_{opt}$. To achieve $\V^r(\pi^*) - \V^r(\bar{\pi}) \leq \epsilon$ in the end, we set $\Delta = \frac{\epsilon \zeta}{16 H}<\frac{\zeta}{2}$ and $\epsilon_{opt} = \frac{\Delta}{5} < \frac{\epsilon}{4}$\footnote{Here, we make a trade-off between sample and computational complexity. We only require $\epsilon_{opt}<\Delta$, but selecting a large $\epsilon_{opt}$ increases the sample complexity, due to the concentration bounds, whereas a smaller $\epsilon_{opt}$ increases the computational complexity in the primal-dual algorithm.}. This simplifies our reward sub-optimality to the following expression:
\begin{equation*}
     \V^r(\pi^*) - \V^r(\bar{\pi}) \leq \frac{\epsilon}{2}  + \bracket{\V^r(\pi^*) - \hat{\V}^r(\pi^*)} + \bracket{\hat{\V}^r(\bar{\pi}) - \V^r(\bar{\pi})}.
\end{equation*}
Based on our choice of $\epsilon_{opt}$, we set $U = \frac{8 H}{\zeta}$ (\cref{lem:primal-dual}). Note that \cref{lem:bound-dual-variable} guarantees that $U>\lambda^*$ then. Furthermore, we set $T=\frac{U^2 H^2}{\epsilon_{opt}^2 } \paren{1 + \frac{1}{(U-\lambda^*)^2}} = \O \paren{\frac{H^6}{\zeta^4 \epsilon^2}}$ (by \cref{lem:primal-dual}).

To obtain $\V^r(\pi^*) - \V^r(\bar{\pi})\leq \epsilon$, we require the following concentration bounds to hold overall:
\begin{align}
    \abs{\V^c(\pi^*) - \hat{\V}^c(\pi^*)} \leq \Delta, \quad \abs{\V^r(\pi^*) - \hat{\V}^r(\pi^*)} \leq \frac{\epsilon}{4},\quad \abs{\V^c(\pi_c^*) - \hat{\V}^c(\pi_c^*)} \leq 7 \Delta, \nonumber \\
    \abs{\V^c(\bar{\pi}) - \hat{\V}^c(\bar{\pi})} \leq \Delta - \epsilon_{opt}, \quad \abs{\V^r(\bar{\pi}) - \hat{\V}^r(\bar{\pi})} \leq \frac{\epsilon}{4}.
    \label{eq:cmdp-generative-concentration}
\end{align}
For the third inequality in \cref{eq:cmdp-generative-concentration}, note that $7\Delta \leq \frac{\zeta}{2} - \Delta$.

\textbf{Sample complexity:}
Applying \cref{lem:concentration-value-function} to each inequality in \cref{eq:cmdp-generative-concentration}, we have that with $N = \O \paren{ \frac{H^6 \log\paren{\frac{|\StateSpace||\Action|H}{\delta}}}{\epsilon^2 \zeta^2} } = \wtilde{\O}\paren{\frac{H^6 \log\paren{\frac{1}{\delta}} }{\epsilon^2 \zeta^2}}$, \cref{eq:cmdp-generative-concentration} holds with probability at least $1-\delta$. Thus, the sample complexity of \cref{alg:cmdp-generative-model} is $F_C(|\StateSpace|,|\Action|,H,\zeta,\delta,\epsilon) = |S||A|H N = \wtilde{\O} \paren{\frac{|\StateSpace||\Action|H^7 \log\paren{\frac{1}{\delta}}}{\epsilon^2 \zeta^2}}$.
\end{proof}

In the following section, we discuss how to construct a policy $\bar{\pi}$ s.t. $\hat{\V}^l(\bar{\pi}) = \frac{1}{T} \sum_{t=0}^{T-1} \hat{\V}^l(\hat{\pi}_t)$, for $l=r,c$.

\subsection{Convert mixture to a single policy}
\label{subsec:occupancy-measures}

Value functions are typically not linear in the policy, i.e., $\frac{1}{T}\sum_{t=0}^{T-1} \hat{\V}^l(\hat{\pi}_t)$ is, in general, not equal to $\hat{\V}^l\paren{\frac{1}{T}\sum_{t=0}^{T-1} \hat{\pi}_t}$, for $l=r,c$. State-action occupancy measures 
\citep{borkar1988convex}, however, allow us to reformulate the value function in a {\em linear} way. For brevity, we will refer to state-action occupancy measures as "occupancy measures" in the rest of this section.

\begin{definition}[Occupancy Measure]
\label{def:state-action-occupancy}
    For every policy $\pi$, we define its 
    {\em occupancy measure} $\curly{\hat{\rho}^{\pi}_h}_{h=1}^H$ with respect to the transition model $\curly{\hat{\Transition}_h}_{h=1}^H$ as follows:
    \begin{equation*}
        \hat{\rho}^{\pi}_h(s,a) \triangleq \sum_{(s,a)} Pr^{\pi}(s_h=s,a_h=a), \quad \forall (s,a,h) \in \StateAction \times [H].
    \end{equation*}
    Alternatively, the occupancy measure can be expressed {\em recursively} as follows:
    \begin{align*}
        \hat{\rho}^{\pi}_1(s,a) &= \mu(s)\cdot \pi_1(a|s), \\
        \hat{\rho}^{\pi}_h(s,a) &= \sum_{(s',a')} \hat{\rho}^{\pi}_{h-1}(s',a') \cdot \hat{\Transition}_h(s|s',a') \cdot \pi_h(a|s), \forall h\in [H]\setminus \curly{1},
    \end{align*}
\end{definition}
for all state-action pairs $(s,a) \in \StateAction$. The following lemma establishes how we can construct a policy $\bar{\pi}$ from the primal-dual policies $\curly{\hat{\pi}_t}_{t=0}^{T-1}$ s.t. $\hat{\V}^l(\bar{\pi}) = \frac{1}{T}\sum_{t=0}^{T-1} \hat{\V}^l(\hat{\pi}_t)$, for $l=r,c$.

\begin{lemma}
\label{lem:primaldual-policies-to-policy}
    Given policies $\curly{\hat{\pi}_t}_{t=0}^{T-1}$, consider the {\em averaged} occupancy measure $\bar{\rho} \triangleq \frac{1}{T} \sum_{t=0}^{T-1} \hat{\rho}^{\hat{\pi}_t}$. Define the policy $\bar{\pi}$ as follows:
    \begin{equation}
    \label{eq:avg-occ-to-policy}
        \bar{\pi}_h(a|s) \triangleq  \begin{cases} 
      \frac{\bar{\rho}_h(s,a)}{\sum_{a'} \bar{\rho}_h(s,a')} & \sum_{a'}\bar{\rho}_h(s,a')>0 \\
      \frac{1}{|\Action|} & \text{otherwise},
   \end{cases}
    \end{equation}
    $\forall h\in [H]$ and $\forall (s,a)\in\StateAction$. Then, $\bar{\pi}$ satisfies the following property:
    \begin{equation*}
        \hat{\V}^l(\bar{\pi}) = \frac{1}{T} \sum_{t=0}^{T-1} \hat{\V}^l(\hat{\pi}_t), \tag{for $l=r,c$.}.
    \end{equation*}
\end{lemma}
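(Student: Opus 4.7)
The plan is to exploit the familiar fact that finite-horizon value functions are \emph{linear in the occupancy measure}: for any reward/cost function $\ell \in \{r, c\}$ and any policy $\pi$,
\begin{equation*}
\hat{\V}^{\ell}(\pi) \;=\; \sum_{h=1}^{H} \sum_{(s,a)\in\StateAction} \hat{\rho}^{\pi}_h(s,a)\,\ell_h(s,a).
\end{equation*}
Consequently, averaging value functions corresponds exactly to averaging occupancy measures: $\frac{1}{T}\sum_{t=0}^{T-1}\hat{\V}^{\ell}(\hat{\pi}_t) = \sum_{h,s,a} \bar{\rho}_h(s,a)\,\ell_h(s,a)$. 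So the whole lemma reduces to showing the single identity $\hat{\rho}^{\bar{\pi}} = \bar{\rho}$, and then plugging this into the linear formula above finishes both the $\ell = r$ and $\ell = c$ statements simultaneously.

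To prove $\hat{\rho}^{\bar{\pi}}_h(s,a) = \bar{\rho}_h(s,a)$ for every $(s,a,h)$, I would induct on $h$ using the recursive definition in \cref{def:state-action-occupancy}. For the base case $h=1$, both quantities equal $\mu(s)\cdot\bigl(\tfrac{1}{T}\sum_t \hat{\pi}_{t,1}(a|s)\bigr)$: this follows by unwinding the definition of $\bar{\pi}_1$ and noting $\sum_{a'}\bar{\rho}_1(s,a') = \mu(s)$, splitting on whether $\mu(s)>0$ (use the first branch of \eqref{eq:avg-occ-to-policy}) or $\mu(s)=0$ (both sides vanish). For the inductive step, I use that the Bellman-flow marginal
\begin{equation*}
\sum_{a'}\bar{\rho}_h(s,a') \;=\; \sum_{(s',a')}\bar{\rho}_{h-1}(s',a')\,\hat{\Transition}_h(s\mid s',a')
\end{equation*}
holds because each $\hat{\rho}^{\hat{\pi}_t}$ satisfies this relation and both sides are linear in the occupancy measure. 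Combining this with the inductive hypothesis and the definition of $\bar{\pi}_h$ gives
\begin{equation*}
\hat{\rho}^{\bar{\pi}}_h(s,a) \;=\; \bar{\pi}_h(a|s)\sum_{(s',a')}\bar{\rho}_{h-1}(s',a')\,\hat{\Transition}_h(s\mid s',a')
\;=\; \bar{\pi}_h(a|s)\sum_{a'}\bar{\rho}_h(s,a').
\end{equation*}
Whenever $\sum_{a'}\bar{\rho}_h(s,a')>0$, plugging in $\bar{\pi}_h(a|s) = \bar{\rho}_h(s,a)/\sum_{a'}\bar{\rho}_h(s,a')$ yields $\bar{\rho}_h(s,a)$ exactly; otherwise the right-hand side is zero, and the marginal identity above forces $\bar{\rho}_h(s,a)=0$ as well, so equality still holds.

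The only real subtlety is handling states $s$ at step $h$ with zero marginal under $\bar{\rho}$, where the policy $\bar{\pi}_h(\cdot|s)$ is defined arbitrarily as uniform; the argument must confirm that $\bar{\pi}$'s behavior at these unreachable-under-$\bar{\rho}$ states does not corrupt the induction. This is exactly why I split into the two cases above: in the zero-marginal case both sides are forced to be zero regardless of how $\bar{\pi}_h(\cdot|s)$ is specified, so the uniform fallback is harmless. Once $\hat{\rho}^{\bar{\pi}} = \bar{\rho}$ is established, linearity of $\hat{\V}^{\ell}$ in the occupancy measure immediately gives $\hat{\V}^{\ell}(\bar{\pi}) = \tfrac{1}{T}\sum_{t=0}^{T-1}\hat{\V}^{\ell}(\hat{\pi}_t)$ for both $\ell=r$ and $\ell=c$, completing the proof.
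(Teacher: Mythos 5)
Your proof is correct, and it reaches the same two-step reduction as the paper -- first establish $\hat{\rho}^{\bar{\pi}} = \bar{\rho}$, then conclude via linearity of $\hat{\V}^{\ell}$ in the occupancy measure -- but you prove the key identity by a different, more self-contained route. The paper handles that step by citation: it invokes convexity of the set of occupancy measures (\cref{lem:occupation-measure-convex}) to argue that $\bar{\rho}$ is a \emph{valid} occupancy measure induced by some policy, and then invokes the standard occupancy-to-policy construction (\cref{lem:occ-to-policy}) to assert that the specific policy $\bar{\pi}$ of \cref{eq:avg-occ-to-policy} induces it; both facts are outsourced to \citet{efroni2020exploration}. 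You instead verify $\hat{\rho}^{\bar{\pi}}_h = \bar{\rho}_h$ directly by induction on $h$, using the Bellman-flow marginal identity (which holds for $\bar{\rho}$ because it holds for each $\hat{\rho}^{\hat{\pi}_t}$ and is linear), and you correctly dispose of the zero-marginal states where $\bar{\pi}_h(\cdot|s)$ is set to uniform: there both sides are forced to vanish, so the arbitrary fallback is harmless. Your argument buys a fully elementary, citation-free proof and, notably, never needs the convexity lemma at all, since you exhibit the inducing policy explicitly rather than first arguing one exists; the paper's route is shorter on the page and leans on results it must import anyway for the surrounding machinery. Both are sound.
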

\begin{proof}
    Since the set of occupation measures is convex (\cref{lem:occupation-measure-convex}), $\bar{\rho}$ is a valid occupation measure, i.e., there exists a policy $\pi$ s.t. $\bar{\rho} \equiv \hat{\rho}^{\pi}$. Then, \cref{lem:occ-to-policy} guarantees that the policy $\bar{\pi}$ constructed in \cref{eq:avg-occ-to-policy} indeed satisfies $\hat{\rho}^{\bar{\pi}} \equiv \bar{\rho}$. We are now ready to prove the statement of this lemma:
    \begin{align*}
        \hat{\V}^l(\bar{\pi}) &= \sum_{h=1}^H \sum_{(s,a)} \hat{\rho}^{\bar{\pi}}_h(s,a) \cdot l_h(s,a) \tag{By \cref{lem:value-function-linear}.} \\
        &= \sum_{h=1}^H \sum_{(s,a)} \paren{\frac{1}{T} \sum_{t=0}^{T-1} \hat{\rho}^{\hat{\pi}_t}_h(s,a) } \cdot l_h(s,a) \\
        &= \frac{1}{T} \paren{\sum_{t=0}^{T-1} \underbrace{\sum_{h=1}^H \sum_{(s,a)} \hat{\rho}^{\hat{\pi}_t}_h(s,a) \cdot l_h(s,a)}_{=\hat{\V}^l(\hat{\pi}_t), \text{ \cref{lem:value-function-linear}}}} \\
        &= \frac{1}{T} \sum_{t=0}^{T-1} \hat{\V}^l(\hat{\pi}_t).
    \end{align*}
\end{proof}

\subsubsection{Standard lemmas for occupancy measures}

\textbf{Note: } The lemmas in this section are well-known in the literature \citep{efroni2020exploration}. We state them here for completeness.

\begin{lemma}
\label{lem:value-function-linear}
    Consider a policy $\pi\in\Pi$ and its respective occupancy measure $\curly{\hat{\rho}^{\pi}_h}_{h\in [H]}$ (\cref{def:state-action-occupancy}). Then, for any $l=r,c$, the value function $\hat{\V}^l(\pi)$ can be expressed in terms of the occupancy measure as follows:
    \begin{equation*}
        \hat{\V}^l(\pi) = \sum_{h=1}^H \sum_{(s,a)} \hat{\rho}^{\pi}_h(s,a) \cdot l_h(s,a).
    \end{equation*}
\end{lemma}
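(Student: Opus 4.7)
The plan is to unfold the definition of the value function and rearrange the expectation using linearity, so that what remains at each step $h$ is precisely the expected value of $l_h(s_h, a_h)$ under the joint distribution of $(s_h, a_h)$ induced by $\pi$ and $\hat{\Transition}$. That joint distribution is exactly the occupancy measure $\hat{\rho}^{\pi}_h$ by \cref{def:state-action-occupancy}, which gives the claimed identity.

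More concretely, I would first write
\begin{equation*}
\hat{\V}^l(\pi) = \mathop{\E}_{\pi, \hat{\Transition}, \mu}\!\left[\sum_{h=1}^H l_h(s_h, a_h)\right] = \sum_{h=1}^H \mathop{\E}_{\pi, \hat{\Transition}, \mu}\!\left[l_h(s_h, a_h)\right],
\end{equation*}
where the second equality is just linearity of expectation. For each fixed $h$, I would then expand the expectation over the marginal distribution of $(s_h, a_h)$:
\begin{equation*}
\mathop{\E}_{\pi, \hat{\Transition}, \mu}\!\left[l_h(s_h, a_h)\right] = \sum_{(s,a)} \mathrm{Pr}^{\pi}(s_h = s, a_h = a) \cdot l_h(s,a).
\end{equation*}
Substituting the definition $\hat{\rho}^{\pi}_h(s,a) = \mathrm{Pr}^{\pi}(s_h = s, a_h = a)$ and summing over $h$ gives the stated equality.

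The only mild subtlety is justifying that the marginal probability $\mathrm{Pr}^{\pi}(s_h = s, a_h = a)$ from the recursive definition in \cref{def:state-action-occupancy} agrees with the trajectory-level probability used when unfolding the expectation. This follows by an induction on $h$: at $h = 1$ we have $\mathrm{Pr}^{\pi}(s_1 = s, a_1 = a) = \mu(s)\, \pi_1(a|s)$, matching the base case of the recursion; and the inductive step uses the tower property applied to $(s_{h-1}, a_{h-1})$ together with the transition $\hat{\Transition}_{h-1}$ and $\pi_h$, which is exactly the recursion in \cref{def:state-action-occupancy}. There is no hard step here; the entire proof is essentially a bookkeeping exercise once the occupancy measure is recognized as the time-$h$ marginal.
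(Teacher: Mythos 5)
Your proof is correct: the paper itself does not write out an argument for this lemma but simply defers to \citet[Section 2.1]{efroni2020exploration}, and your derivation---linearity of expectation over the horizon, identifying the time-$h$ marginal of $(s_h,a_h)$ under $\pi$ and $\hat{\Transition}$ with $\hat{\rho}^{\pi}_h$, and checking by induction that the recursive definition in \cref{def:state-action-occupancy} reproduces that marginal---is exactly the standard bookkeeping argument that the cited reference supplies. No gaps.
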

\begin{proof}
    See \citet[section 2.1]{efroni2020exploration}.
\end{proof}

\begin{lemma}[Convexity of Occupancy Measures]
\label{lem:occupation-measure-convex}
    The set of occupancy measures $\mathcal{D} \triangleq \curly{\curly{\hat{\rho}^{\pi}_h}_{h\in[H]} | \pi\in\Pi}$ is convex.
\end{lemma}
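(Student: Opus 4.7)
The plan is to characterize $\mathcal{D}$ as the solution set of a system of linear (affine) equalities and inequalities, from which convexity is immediate.

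Concretely, I would first establish that a family $\{\rho_h\}_{h\in[H]}$ with $\rho_h:\StateAction\to\Reals_{\geq 0}$ belongs to $\mathcal{D}$ if and only if it satisfies the Bellman flow constraints:
\begin{align*}
\rho_h(s,a) &\geq 0, \\
\sum_{a\in\Action} \rho_1(s,a) &= \mu(s), \\
\sum_{a\in\Action} \rho_h(s',a) &= \sum_{(s,a)} \hat{\Transition}_{h-1}(s'|s,a)\, \rho_{h-1}(s,a), \quad h\geq 2.
\end{align*}
The forward direction ($\Rightarrow$) is an immediate consequence of the recursive formulation in \cref{def:state-action-occupancy}: summing the recursion over $a$ yields exactly the flow equation, and the base case gives the initial-distribution equation. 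The backward direction ($\Leftarrow$) is where the work lies: given any $\rho$ satisfying the flow constraints, define a policy $\pi_h(a|s)$ by normalizing $\rho_h(s,\cdot)$ whenever $\sum_{a'}\rho_h(s,a')>0$ and arbitrarily otherwise (exactly as in \cref{eq:avg-occ-to-policy}), then verify by induction on $h$ that $\hat{\rho}^{\pi}_h \equiv \rho_h$. This is precisely the content of \cref{lem:occ-to-policy} referenced in the paper, so I would invoke it rather than reprove it.

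With this characterization, convexity follows immediately: $\mathcal{D}$ is the intersection of finitely many closed half-spaces (the non-negativity constraints) and affine hyperplanes (the initial and flow equations) in the finite-dimensional space $\Reals^{|\StateAction|\cdot H}$, which is a convex polytope. Equivalently, for any $\rho^{(1)},\rho^{(2)}\in\mathcal{D}$ and $\alpha\in[0,1]$, the combination $\bar\rho = \alpha\rho^{(1)}+(1-\alpha)\rho^{(2)}$ inherits non-negativity term-by-term and satisfies both the initial and flow equations by linearity, hence $\bar\rho\in\mathcal{D}$.

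The only step that requires some care is the backward direction of the characterization, i.e., the induction verifying $\hat{\rho}^{\pi}_h = \rho_h$ for the normalized policy. The subtlety is handling states $s$ with $\sum_{a'}\rho_h(s,a')=0$, where $\pi_h(\cdot|s)$ is chosen arbitrarily; one must argue that such states are unreachable under $\pi$ in the empirical model up to step $h$, so the arbitrary choice does not affect $\hat{\rho}^\pi_h$. Since this is precisely what \cref{lem:occ-to-policy} handles (and was already needed in \cref{lem:primaldual-policies-to-policy}), the convexity statement reduces to a one-line consequence of linearity of the flow constraints.
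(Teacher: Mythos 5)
Your proof is correct. The paper itself gives no argument for this lemma---it only cites \citet[section~2.3]{efroni2020exploration}---and your characterization of $\mathcal{D}$ as the Bellman-flow polytope (forward direction by summing the recursion in \cref{def:state-action-occupancy} over actions, backward direction via the normalization construction of \cref{lem:occ-to-policy}, convexity from linearity of the constraints) is exactly the standard argument underlying that citation, so there is nothing to add beyond the cosmetic point that your flow equation indexes the transition kernel as $\hat{\Transition}_{h-1}$ where the paper's recursion uses $\hat{\Transition}_h$.
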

\begin{proof}
See \citet[section 2.3]{efroni2020exploration}.
\end{proof}

\begin{lemma}
\label{lem:occ-to-policy}
    Given a valid occupation measure $\curly{\hat{\rho}_h}_{h\in[H]}\in \mathcal{D}$, the following policy $\pi$ induces $\curly{\hat{\rho}_h}_{h\in[H]}$:
    \begin{equation}
        \label{eq:occ-to-policy}
        \pi_h(a|s) =  \begin{cases} 
      \frac{\rho_h(s,a)}{\sum_{a'}\rho_h(s,a')} & \sum_{a'}\rho_h(s,a')>0 \\
      \frac{1}{|\Action|} & \text{otherwise},
   \end{cases}
    \end{equation}
    $\forall h\in [H]$ and $\forall (s,a)\in\StateAction$.
\end{lemma}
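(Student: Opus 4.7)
The plan is to prove this by induction on the time step $h$, leveraging the recursive definition of the occupancy measure given in \cref{def:state-action-occupancy}. Since $\{\rho_h\}_{h\in[H]}$ is a valid occupancy measure, there exists some policy $\pi'$ such that $\hat{\rho}^{\pi'}_h \equiv \rho_h$ for every $h$; I will show that the policy $\pi$ constructed in \cref{eq:occ-to-policy} also induces this same $\rho$, i.e., $\hat{\rho}^{\pi}_h \equiv \rho_h$.

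For the base case $h=1$, I would write $\sum_{a'} \rho_1(s,a') = \mu(s) \sum_{a'} \pi'_1(a'|s) = \mu(s)$. When $\mu(s) > 0$, the definition gives $\pi_1(a|s) = \rho_1(s,a)/\mu(s)$, so $\hat{\rho}^{\pi}_1(s,a) = \mu(s)\pi_1(a|s) = \rho_1(s,a)$. When $\mu(s) = 0$, both $\rho_1(s,a)$ and $\hat{\rho}^{\pi}_1(s,a) = \mu(s)\pi_1(a|s)$ equal zero, regardless of the fallback uniform choice. For the inductive step, assume $\hat{\rho}^{\pi}_{h-1} \equiv \rho_{h-1}$. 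Using the recursion,
\begin{equation*}
\hat{\rho}^{\pi}_h(s,a) = \pi_h(a|s) \sum_{(s',a')} \rho_{h-1}(s',a') \hat{\Transition}_h(s|s',a') = \pi_h(a|s) \cdot q_h(s),
\end{equation*}
where I define $q_h(s) \triangleq \sum_{(s',a')} \rho_{h-1}(s',a') \hat{\Transition}_h(s|s',a')$. The key observation is that, applying the same recursion to $\pi'$ and summing out $a$, $q_h(s) = \sum_{a'} \rho_h(s,a')$, so $q_h(s)$ coincides exactly with the marginal appearing in the denominator of \cref{eq:occ-to-policy}.

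With this identification, the two cases in \cref{eq:occ-to-policy} close the argument: if $q_h(s) > 0$, then $\pi_h(a|s) = \rho_h(s,a)/q_h(s)$ and so $\hat{\rho}^{\pi}_h(s,a) = \rho_h(s,a)$; if $q_h(s) = 0$, then $\rho_h(s,a) \leq \sum_{a'} \rho_h(s,a') = q_h(s) = 0$, and simultaneously $\hat{\rho}^{\pi}_h(s,a) = \pi_h(a|s) \cdot 0 = 0$, so the two still agree. The only subtle point — and the sole place where care is required — is the handling of zero-probability states: one must verify that the uniform fallback in \cref{eq:occ-to-policy} is harmless because the prefactor $q_h(s)$ nullifies the contribution. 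Apart from this, the proof is a direct induction via the occupancy-measure recursion and requires no additional machinery.
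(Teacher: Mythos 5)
Your proof is correct. Note that the paper does not actually prove this lemma itself --- it defers entirely to a citation of Efroni et al.\ (2020, Section 2.3) --- and the induction you give (base case from $\sum_{a'}\rho_1(s,a')=\mu(s)$, inductive step via the identification of the prefactor $q_h(s)$ with the marginal $\sum_{a'}\rho_h(s,a')$ using the witnessing policy $\pi'$, plus the zero-mass case) is precisely the standard argument found there, with the degenerate states handled carefully.
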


\begin{proof}
    See \citet[section 2.3]{efroni2020exploration}.
\end{proof}

\subsection{Proofs of auxiliary lemmas}

\begin{lemma}[Guarantees for the primal-dual algorithm]
\label{lem:primal-dual}
For any $\epsilon_{opt}>0$, with $U >\lambda^*$, $T = \frac{U^2 H^2}{\epsilon_{opt}^2} \paren{1+\frac{1}{(U-\lambda^*)^2}}$, $\eta = \frac{U}{\sqrt{T} H}$, the primal-dual algorithm (\cref{algln:cmdp-gen-model-primal-dual} - \cref{algln:cmdp-gen-model-primal-dual-end}) produces policies $\curly{\hat{\pi}_t}_{t=0}^{T-1}$, which satisfy:
\begin{align}
    \hat{\V}^r(\hat{\pi}^*) - \frac{1}{T} \sum_{t=0}^{T-1} \hat{\V}^r(\hat{\pi}_t) &\leq \epsilon_{opt}, \label{eq:primal-dual-reward-subopt} \\
    \quad \frac{1}{T} \sum_{t=0}^{T-1}\hat{\V}^c(\hat{\pi}_t) &\leq \alpha'+\epsilon_{opt}. \label{eq:primal-dual-constraint-violation}
\end{align}
\end{lemma}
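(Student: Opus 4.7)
The plan is to view the algorithm as online projected subgradient descent on the Lagrangian dual combined with exact primal best-response, and then combine the standard OGD regret bound with two saddle-point inequalities (one from primal best-response, one from strong duality of the empirical CMDP $\hat{\M}$). Define $f_t(\lambda) \triangleq \L(\hat{\pi}_t, \lambda) = \hat{\V}^r(\hat{\pi}_t) + \lambda \paren{\alpha' - \hat{\V}^c(\hat{\pi}_t)}$. Each $f_t$ is affine in $\lambda$ with subgradient $\abs{\alpha' - \hat{\V}^c(\hat{\pi}_t)} \leq H$, and \cref{algln:cmdp-gen-model-primal-dual}--\cref{algln:cmdp-gen-model-primal-dual-end} of \cref{alg:cmdp-generative-model} are exactly projected gradient descent on $\curly{f_t}$ over $[0, U]$ with step size $\eta$. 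Because $\lambda_0 = 0$ and the comparator lies in $[0, U]$, the textbook OGD guarantee with the prescribed $\eta = U / (H\sqrt{T})$ yields, for every $\lambda \in [0, U]$,
\begin{equation*}
    \frac{1}{T}\sum_{t=0}^{T-1} f_t(\lambda_t) - \frac{1}{T}\sum_{t=0}^{T-1} f_t(\lambda) \leq \frac{U^2}{2\eta T} + \frac{\eta H^2}{2} = \frac{UH}{\sqrt{T}}.
\end{equation*}
On the primal side, since $\L(\pi, \lambda_t)$ and $\hat{\V}^{r - \lambda_t c}(\pi)$ differ by a $\pi$-independent constant, the update gives $\L(\hat{\pi}_t, \lambda_t) \geq \L(\pi, \lambda_t)$ for every $\pi \in \Pi$. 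Plugging in $\pi = \hat{\pi}^*$, using $\hat{\V}^c(\hat{\pi}^*) \leq \alpha'$ and $\lambda_t \geq 0$, and averaging yields $\frac{1}{T}\sum_{t=0}^{T-1} \L(\hat{\pi}_t, \lambda_t) \geq \hat{\V}^r(\hat{\pi}^*)$.

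To establish \cref{eq:primal-dual-reward-subopt}, I instantiate the regret bound at $\lambda = 0$, note that $f_t(0) = \hat{\V}^r(\hat{\pi}_t)$, and combine with the best-response lower bound to obtain $\hat{\V}^r(\hat{\pi}^*) - \frac{1}{T}\sum_t \hat{\V}^r(\hat{\pi}_t) \leq UH/\sqrt{T} \leq \epsilon_{opt}$, where the last inequality uses $T \geq U^2 H^2 / \epsilon_{opt}^2$, which is implied by the prescribed choice of $T$.

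For \cref{eq:primal-dual-constraint-violation}, I invoke strong duality for the empirical CMDP \citep{paternain2019constrained}: $\max_\pi \L(\pi, \lambda^*) = \hat{\V}^r(\hat{\pi}^*)$, so $\L(\hat{\pi}_t, \lambda^*) \leq \hat{\V}^r(\hat{\pi}^*)$ for every $t$, which after averaging gives
\begin{equation*}
    \hat{\V}^r(\hat{\pi}^*) - \frac{1}{T}\sum_{t=0}^{T-1} \hat{\V}^r(\hat{\pi}_t) \geq \lambda^* \paren{\alpha' - \frac{1}{T}\sum_{t=0}^{T-1} \hat{\V}^c(\hat{\pi}_t)}.
\end{equation*}
Instantiating the regret bound at $\lambda = U$ and applying the best-response lower bound produces the matching upper bound
\begin{equation*}
    \hat{\V}^r(\hat{\pi}^*) - \frac{1}{T}\sum_{t=0}^{T-1} \hat{\V}^r(\hat{\pi}_t) \leq U \paren{\alpha' - \frac{1}{T}\sum_{t=0}^{T-1} \hat{\V}^c(\hat{\pi}_t)} + \frac{UH}{\sqrt{T}}.
\end{equation*}
Comparing these two inequalities cancels the reward terms and leaves $(U - \lambda^*) \paren{ \frac{1}{T}\sum_t \hat{\V}^c(\hat{\pi}_t) - \alpha' } \leq UH/\sqrt{T}$; dividing by $U - \lambda^* > 0$ and using the prescribed $T$, which forces $\frac{UH}{(U - \lambda^*)\sqrt{T}} \leq \epsilon_{opt}$, yields \cref{eq:primal-dual-constraint-violation}. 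The main obstacle I anticipate is sequencing the two regret instantiations ($\lambda = 0$ for reward, $\lambda = U$ for constraint) with the strong-duality inequality so that the $1/(U - \lambda^*)$ factor emerges cleanly; this is precisely what forces the required $T$ to carry the additive $1/(U-\lambda^*)^2$ term rather than scaling like $U^2 H^2/\epsilon_{opt}^2$ alone.
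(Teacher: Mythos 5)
Your proposal is correct and follows essentially the same route as the paper: the dual update is analyzed as projected online gradient descent over $[0,U]$ (giving the $\frac{UH}{\sqrt{T}}$ regret, exactly \cref{lem:cmdp-dual-regret}), the primal best-response inequality at $\pi=\hat{\pi}^*$ supplies the lower bound, and $\lambda=0$ versus $\lambda=U$ give the reward and constraint bounds respectively. The only cosmetic difference is in the constraint-violation step: the paper applies the regret bound to the mixture policy $\bar{\pi}$ and invokes \cref{lem:constraint-violation-positive} (proved via the perturbation function $\nu(\tau)$, with a case split on the sign of the violation), whereas you use the equivalent saddle-point inequality $\L(\pi,\lambda^*)\leq \hat{\V}^r(\hat{\pi}^*)$ directly on each iterate and average, which yields the same $\frac{UH}{(U-\lambda^*)\sqrt{T}}$ bound without the case analysis.
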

\begin{proof}
In iteration $t\in \curly{0,...,T-1}$, we compute $\hat{\pi}_t \in \arg\max_{\pi\in\Pi} \hat{\V}^{r-\lambda_t c}(\pi)$.\footnote{This is an {\em unconstrained} Markov decision process (MDP) and can be solved {\em exactly} using dynamic programming \citep{bertsekas1995dynamic}}. Since $\hat{\pi}_t$ is optimal at time $t$, the following inequality holds for every $\pi\in\Pi$:
\begin{equation*}
    \hat{\V}^{r}(\hat{\pi}_t) + \lambda_t \paren{\alpha' - \hat{\V}^c(\hat{\pi}_t)} \geq \hat{\V}^r(\pi) + \lambda_t \paren{\alpha' - \hat{\V}^c(\pi)}
\end{equation*}
Setting $\pi = \hat{\pi}^*$, and using the fact that $\hat{\pi}^*$ is feasible with respect to $\hat{\M}$ i.e., $\hat{\V}^c(\hat{\pi}^*) \leq \alpha'$, we obtain:
\begin{equation*}
    \hat{\V}^r(\hat{\pi}^*) - \hat{\V}^r(\hat{\pi}_t) \leq \lambda_t \paren{\alpha' - \hat{\V}^c(\hat{\pi}_t)}.
\end{equation*}
Fixing $\lambda \in [0,U]$, subtracting $\lambda \paren{\alpha' - \hat{\V}^c(\hat{\pi}_t)}$, summing over all $t=0,...,T-1$, and dividing by $T$, we obtain:
\begin{align}
    \hat{\V}^r(\hat{\pi}^*) - \frac{1}{T}\sum_{t=1}^T \hat{\V}^r(\hat{\pi}_t) - \lambda \paren{\alpha' - \frac{1}{T}\sum_{t=1}^T \hat{\V}^c(\hat{\pi}_t)} &\leq \frac{1}{T}\sum_{t=0}^{T-1} \paren{\lambda_t - \lambda} \paren{\alpha' - \hat{\V}^c(\hat{\pi}_t) } \nonumber \\
    &\leq \frac{U H}{\sqrt{T}} \quad{\text{(By \cref{lem:cmdp-dual-regret}, setting $\eta = \frac{U}{\sqrt{T} H}$.)}} \label{eq:primal-dual-regret}
\end{align}
We will now bound the reward sub-optimality (\cref{eq:primal-dual-reward-subopt}) and constraint violation (\cref{eq:primal-dual-constraint-violation}) separately.

\textbf{Reward Sub-Optimality:} Since \cref{eq:primal-dual-regret} holds for any $\lambda\in [0,U]$, we can set $\lambda=0$ to obtain:
\begin{equation}
\label{eq:reward-suboptimality-primal-dual}
    \hat{\V}^r(\hat{\pi}^*) - \frac{1}{T} \sum_{t=0}^{T-1}\hat{\V}^r(\hat{\pi}_t) \leq \frac{U H}{\sqrt{T}}.
\end{equation}

\textbf{Constraint Violation:} If $\frac{1}{T} \sum_{t=0}^{T-1} \hat{\V}^c(\hat{\pi}_t)\leq \alpha'$, then \cref{eq:primal-dual-constraint-violation} holds trivially. Consider the case $\frac{1}{T} \sum_{t=0}^{T-1} \hat{\V}^c(\hat{\pi}_t) > \alpha'$. Recall that $\bar{\pi}$ is constructed in a way that it satisfies $\hat{\V}^l(\bar{\pi}) = \frac{1}{T} \sum_{t=0}^{T-1} \hat{\V}^l(\hat{\pi}_t)$ for $l=r,c$ (\cref{algln:cmdp-gen-model-convert}). Plugging this into \cref{eq:primal-dual-regret} with $\lambda=U \in [0,U]$, we obtain:
\begin{align}
     \hat{\V}^r(\hat{\pi}^*) - \hat{\V}^r(\bar{\pi}) + U \underbrace{\paren{ \hat{\V}^c(\bar{\pi}) - \alpha'}}_{\geq 0} &\leq \frac{U H}{\sqrt{T}} \nonumber \\
     \Rightarrow  \hat{\V}^c(\bar{\pi}) - \alpha' &\leq  \frac{U H}{\sqrt{T} (U-\lambda^*)}. \tag{By \cref{lem:constraint-violation-positive}.} \\
     \Rightarrow  \frac{1}{T} \sum_{t=0}^{T-1}\hat{\V}^c(\hat{\pi}_t) - \alpha' &\leq \frac{U H}{\sqrt{T} (U-\lambda^*)}. \label{eq:constraint-violation-primal-dual}
\end{align}
Finally, we set $T = \frac{U^2 H^2}{\epsilon_{opt}^2} \paren{1+\frac{1}{(U-\lambda^*)^2}}$ s.t. \cref{eq:reward-suboptimality-primal-dual} and \cref{eq:constraint-violation-primal-dual} are both bounded by $\epsilon_{opt}$. This completes the proof of this lemma.
\end{proof}

\begin{lemma}[Dual regret]
\label{lem:cmdp-dual-regret}
For any $\lambda\in [0,U]$, the dual regret $R^d(\lambda,T) \triangleq \sum_{t=0}^{T-1} \paren{\lambda_t-\lambda} \paren{\alpha' - \hat{\V}^{c}(\hat{\pi}_t)}$ can be bounded as follows:
\begin{equation*}
    R^d(\vlambda,T)  \leq U \sqrt{T} H,
\end{equation*}
by setting $\eta = \frac{U}{\sqrt{T} H}$ in \cref{alg:cmdp-generative-model}.
\end{lemma}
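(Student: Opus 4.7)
The bound is a standard online (projected) gradient descent regret guarantee applied to the sequence of linear losses $f_t(\lambda) \triangleq \lambda \, (\alpha' - \hat{V}^c(\hat{\pi}_t))$ on the convex set $[0,U]$. Indeed, the update rule $\lambda_{t+1} = \mathcal{P}_{[0,U]}[\lambda_t - \eta(\alpha' - \hat{V}^c(\hat{\pi}_t))]$ is exactly projected gradient descent on $f_t$, and the dual regret $R^d(\lambda,T)$ coincides with $\sum_{t=0}^{T-1}(f_t(\lambda_t) - f_t(\lambda))$ since $f_t$ is linear in $\lambda$. The plan is therefore to (i) unroll one step of the projected update to get the classical per-step inequality, (ii) telescope, and (iii) substitute $\eta$.

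\textbf{Key steps.} First, by nonexpansiveness of the projection onto $[0,U]$ and expansion of the squared distance, for any $\lambda \in [0,U]$,
\begin{equation*}
    (\lambda_{t+1} - \lambda)^2 \le (\lambda_t - \eta g_t - \lambda)^2 = (\lambda_t - \lambda)^2 - 2\eta g_t(\lambda_t - \lambda) + \eta^2 g_t^2,
\end{equation*}
where $g_t \triangleq \alpha' - \hat{V}^c(\hat{\pi}_t)$. Rearranging gives the per-step inequality
\begin{equation*}
    (\lambda_t - \lambda)\, g_t \le \frac{(\lambda_t - \lambda)^2 - (\lambda_{t+1} - \lambda)^2}{2\eta} + \frac{\eta}{2}\, g_t^2.
\end{equation*}
Summing over $t=0,\ldots,T-1$, the first term telescopes and is bounded by $(\lambda_0 - \lambda)^2/(2\eta) \le U^2/(2\eta)$ since $\lambda_0 = 0$ and $\lambda \in [0,U]$. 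For the gradient term, note that $\hat{V}^c(\hat{\pi}_t) \in [0,H]$ and $\alpha' \in [0,H]$, so $|g_t| \le H$ and hence $g_t^2 \le H^2$. This yields
\begin{equation*}
    R^d(\lambda, T) = \sum_{t=0}^{T-1}(\lambda_t - \lambda)\, g_t \;\le\; \frac{U^2}{2\eta} + \frac{\eta T H^2}{2}.
\end{equation*}
Finally, substituting the prescribed step size $\eta = U/(\sqrt{T}\, H)$ balances the two terms, giving $R^d(\lambda, T) \le U \sqrt{T} H$, as claimed.

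\textbf{Main obstacle.} There is no real obstacle here; the only thing to be careful about is the bound on $g_t$. Because $\alpha' \in [0,H]$ and the empirical cost value function is also in $[0,H]$ (costs are in $[0,1]$ and summed over $H$ steps), we get $|g_t| \le H$ rather than a tighter bound, which is exactly what forces the factor of $H$ in the final regret. One might also wonder whether the slightly shifted threshold $\alpha' = \alpha - \epsilon'\zeta/(16H)$ used in \cref{alg:cmdp-generative-model} affects anything; it does not, since $\alpha' \in [0,H]$ still holds and the argument is agnostic to its exact value.
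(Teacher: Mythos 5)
Your proof is correct and follows essentially the same route as the paper's: nonexpansiveness of the projection, expansion of the squared distance, telescoping, the bounds $|\lambda_0-\lambda|^2\le U^2$ and $|\alpha'-\hat{\V}^c(\hat{\pi}_t)|\le H$, and the step size $\eta = U/(\sqrt{T}H)$ to balance the two terms. The only difference is presentational — you frame it explicitly as online projected gradient descent on linear losses, which the paper leaves implicit.
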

\begin{proof}

For any $t\in \curly{0,...,T-1}$, consider the term $\abs{\lambda_{t+1}-\lambda}^2$:
\begin{align*}
    \abs{\lambda_{t+1} - \lambda}^2 &= \abs{ \P_{[0,U]}\bracket{\lambda_t - \eta \paren{\alpha' - \hat{\V}^c(\hat{\pi}_t)}} - \lambda}^2 \\
    &\leq \abs{ \lambda_t - \eta \paren{\alpha' - \hat{\V}^c(\hat{\pi}_t)} - \lambda }^2 \tag{The projection $\P_{[0,U]}$ is non-expansive.} \\
    &= \abs{\lambda_{t} - \lambda}^2 - 2 \eta \paren{\lambda_t-\lambda} \paren{\alpha' - \hat{\V}^c(\hat{\pi}_t)} + \eta^2 \underbrace{\abs{\alpha' - \hat{\V}^c(\hat{\pi}_t) }^2}_{\leq H^2} \\
    &\leq \abs{\lambda_{t} - \lambda}^2 - 2 \eta \paren{\lambda_t-\lambda} \paren{\alpha' - \hat{\V}^c(\hat{\pi}_t)} + \eta^2 H^2.
\end{align*}
Rearranging the terms, dividing by $2\eta$ and summing over $t=0,...,T-1$, we obtain a bound on the dual regret:
\begin{align*}
    R^d(\lambda, T) &= \sum_{t=0}^{T-1} \paren{\lambda_t - \lambda} \paren{\alpha' - \hat{\V}^{c}(\hat{\pi}_t)} \\
    &\leq \frac{1}{2\eta}\sum_{t=0}^{T-1} \paren{\abs{\lambda_t - \lambda}^2 - \abs{\lambda_{t+1}-\lambda}^2} + \frac{\eta T H^2}{2} \\
    &= \frac{\overbrace{\abs{\lambda_0 - \lambda}^2}^{\leq U^2} - \overbrace{\abs{\lambda_T - \lambda}^2}^{\geq 0}}{2 \eta} + \frac{\eta T H^2}{2} \\
    &\leq U \sqrt{T} H. \tag{Setting $\eta = \frac{U}{\sqrt{T} H}$.}
\end{align*}
\end{proof}

\begin{lemma}
\label{lem:constraint-violation-positive}
For any $C>\lambda^*$ and any policy $\wtilde{\pi}$ s.t. $\hat{\V}^{r}(\hat{\pi}^*) - \hat{\V}^r(\wtilde{\pi}) + C \bracket{\hat{\V}^{c}(\wtilde{\pi}) - \alpha'}_+\leq \beta$, we have:
\begin{equation*}
    \bracket{\hat{\V}^{c}(\wtilde{\pi}) - \alpha'}_+ \leq \frac{\beta}{C-\lambda^{*}}.
\end{equation*}
\end{lemma}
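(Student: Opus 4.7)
The plan is to exploit the defining property of $\lambda^*$, namely that $\hat{\pi}^*$ attains the maximum of the Lagrangian $\hat{\V}^r(\pi) + \lambda^*\paren{\alpha' - \hat{\V}^c(\pi)}$ over all $\pi \in \Pi$. Applied to the specific policy $\wtilde{\pi}$, this yields
\[
    \hat{\V}^r(\hat{\pi}^*) \geq \hat{\V}^r(\wtilde{\pi}) + \lambda^*\paren{\alpha' - \hat{\V}^c(\wtilde{\pi})},
\]
or equivalently,
\[
    \hat{\V}^r(\hat{\pi}^*) - \hat{\V}^r(\wtilde{\pi}) \geq -\lambda^* \paren{\hat{\V}^c(\wtilde{\pi}) - \alpha'}.
\]

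Next I would split on the sign of $\hat{\V}^c(\wtilde{\pi}) - \alpha'$. If $\hat{\V}^c(\wtilde{\pi}) - \alpha' \leq 0$, then $\bracket{\hat{\V}^c(\wtilde{\pi}) - \alpha'}_+ = 0$ and the inequality is trivial, since $\beta \geq 0$ from the hypothesis (the Lagrangian-type bound on $\hat{\V}^r(\hat{\pi}^*) - \hat{\V}^r(\wtilde{\pi})$ is nonnegative in this case by the definition of $\hat{\pi}^*$). Otherwise $\bracket{\hat{\V}^c(\wtilde{\pi}) - \alpha'}_+ = \hat{\V}^c(\wtilde{\pi}) - \alpha' > 0$, and the lower bound above becomes $\hat{\V}^r(\hat{\pi}^*) - \hat{\V}^r(\wtilde{\pi}) \geq -\lambda^* \bracket{\hat{\V}^c(\wtilde{\pi}) - \alpha'}_+$.

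Substituting into the hypothesis gives
\[
    \beta \geq \hat{\V}^r(\hat{\pi}^*) - \hat{\V}^r(\wtilde{\pi}) + C \bracket{\hat{\V}^c(\wtilde{\pi}) - \alpha'}_+ \geq (C - \lambda^*) \bracket{\hat{\V}^c(\wtilde{\pi}) - \alpha'}_+.
\]
Since $C > \lambda^*$ by assumption, dividing by $C - \lambda^* > 0$ yields the claimed bound.

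This proof is essentially a one-line application of the saddle-point characterization of $\lambda^*$, so I do not expect any real obstacle; the only subtle point to verify carefully is that the hypothesis of the lemma implicitly forces the left-hand side to be nonnegative in the trivial case (which is automatic from the optimality of $\hat{\pi}^*$ in the constrained empirical CMDP combined with $\hat{\V}^r(\wtilde{\pi}) \leq \hat{\V}^r(\hat{\pi}^*) + \lambda^*\cdot 0$ when $\wtilde{\pi}$ is already feasible).
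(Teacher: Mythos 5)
Your proof is correct and rests on the same key fact as the paper's: the dual characterization $\hat{\V}^r(\hat{\pi}^*) = \max_{\pi\in\Pi}\big[\hat{\V}^r(\pi) + \lambda^*\big(\alpha' - \hat{\V}^c(\pi)\big)\big]$, instantiated at $\pi = \wtilde{\pi}$ to get $\hat{\V}^r(\hat{\pi}^*) - \hat{\V}^r(\wtilde{\pi}) \geq -\lambda^*\big(\hat{\V}^c(\wtilde{\pi}) - \alpha'\big)$. The paper reaches the identical inequality by first introducing a perturbation function $\nu(\tau)$ and proving the sensitivity bound $\tau\lambda^* \leq \nu(0)-\nu(\tau)$ before specializing to $\tau = -(\hat{\V}^c(\wtilde{\pi})-\alpha')$, so your argument is a valid, more direct streamlining of the same idea (and your explicit handling of the trivial case $[\hat{\V}^c(\wtilde{\pi})-\alpha']_+=0$ via $\lambda^*\geq 0$ is a detail the paper glosses over).
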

\begin{proof}
We define the function $\nu(\tau) \triangleq \max_{\pi\in\Pi}\curly{\hat{\V}^r(\pi) \Big| \hat{\V}^c(\pi) \leq \alpha' - \tau}$ for any $\tau\in\Reals$. Strong duality for CMDPs \citep{paternain2019constrained} gives us the following inequality:
\begin{equation*}
    \nu(0) = \hat{\V}^r(\hat{\pi}^*) = \max_{\pi\in\Pi} \bracket{\hat{\V}^r(\pi) + \lambda^* \paren{\alpha' - \hat{\V}^c(\pi)}}.
\end{equation*}
Next, consider an arbitrary $\tau$ and any policy $\pi'$ s.t. $\hat{\V}^c(\pi') \leq \alpha' - \tau$. Subtracting $\tau \lambda^*$ from the inequality above, we obtain:
\begin{align*}
    \nu(0) - \tau \lambda^* &= \max_{\pi\in\Pi} \bracket{\hat{\V}^r(\pi) + \lambda^* \paren{\alpha' - \hat{\V}^c(\pi)}} - \tau \lambda^* \\
    &\geq \hat{\V}^r(\pi') + \lambda^* \paren{\alpha' - \hat{\V}^c(\pi')} - \tau \lambda^* \\
    &= \hat{\V}^r(\pi') + \lambda^* \underbrace{\paren{\alpha' - \tau - \hat{\V}^c(\pi')}}_{\geq 0} \\
    &\geq \hat{\V}^r(\pi').
\end{align*}
Since this inequality holds for any $\pi'$ with $\hat{\V}^c(\pi')\leq \alpha'-\tau$, it also holds for $\pi'^* \in \arg\max_{\pi\in\Pi} \curly{\hat{\V}^r(\pi) \Big| \hat{\V}^c(\pi) \leq \alpha'-\tau}$. Plugging this into the inequality above, we obtain:
\begin{align*}
    \nu(0) - \tau \lambda^* &\geq \underbrace{\max_{\pi \in \Pi}\curly{\hat{\V}^r(\pi) \Big| \hat{\V}^c(\pi) \leq \alpha' - \tau}}_{=\nu(\tau)} \\
    \Rightarrow \tau \lambda^* &\leq \nu(0) - \nu(\tau).
\end{align*}
Now, we select $\wtilde{\tau} = -\paren{\hat{\V}^c(\wtilde{\pi}) - \alpha'}$ and bound the following expression:
\begin{align*}
    \paren{C-\lambda^*}\abs{\wtilde{\tau}} &=  \wtilde{\tau}\lambda^* + C \abs{\wtilde{\tau}} \\
    &\leq \nu(0) - \nu(\wtilde{\tau}) + C\abs{\wtilde{\tau}} \\
    &= \underbrace{\hat{\V}^r(\hat{\pi}^*) - \hat{\V}^r(\wtilde{\pi}) + C \abs{\wtilde{\pi}}}_{=\beta} + \underbrace{\hat{\V}^r(\wtilde{\pi}) - \nu(\wtilde{\tau})}_{\leq 0} \tag{Addition and subtraction of common terms.} \\
    \Rightarrow \abs{\wtilde{\tau}} = \bracket{ \hat{\V}^c(\wtilde{\pi}) - \alpha'}_+ &\leq \frac{\beta}{C-\lambda^*}.
\end{align*}

\end{proof}

\begin{lemma}[Sensitivity Error]
\label{lem:sensitivity-error}
     Let $\Delta>0$ and define $\hat{\pi}^*, \wtilde{\pi}^*$ as follows:
     \begin{align}
         \hat{\pi}^* &\in \arg\max_{\pi\in\Pi} \bracket{\hat{\V}^r(\pi), \text{ subject to: } \hat{\V}^{c}(\pi)\leq \alpha - \Delta} \label{eq:sensitivity-error-tight}\\
         \wtilde{\pi}^* &\in \arg\max_{\pi\in\Pi} \bracket{\hat{\V}^r(\pi), \text{ subject to: } \hat{\V}^{c}(\pi)\leq \alpha + \Delta}  \nonumber
     \end{align}
     Then, the sensitivity error $\abs{\hat{\V}^r(\hat{\pi}^*) - \hat{\V}^r(\wtilde{\pi}^*)}$ can be bounded as follows:
     \begin{equation*}
         \abs{\hat{\V}^r(\hat{\pi}^*) - \hat{\V}^r(\wtilde{\pi}^*)} \leq 2\Delta \lambda^*,
     \end{equation*}
     where $\lambda^*$ is the optimal dual variable for \cref{eq:sensitivity-error-tight}, i.e.:
     \begin{equation*}
         \hat{\V}^r(\hat{\pi}^*) = \max_{\pi\in \Pi}\bracket{\hat{\V}^r(\pi) + \lambda^* \paren{\alpha - \Delta - \hat{\V}^c(\pi)}},
     \end{equation*}
     which holds due to the strong duality property for CMDPs \citep{paternain2019constrained}.
 \end{lemma}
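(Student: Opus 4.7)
The plan is to use strong duality for the tighter CMDP (the one with threshold $\alpha - \Delta$) and compare its Lagrangian value at $\tilde{\pi}^*$ to its optimum $\hat{V}^r(\hat{\pi}^*)$. The key observation is that enlarging the feasible set can only increase the reward, so monotonicity immediately gives $\hat{V}^r(\hat{\pi}^*) \le \hat{V}^r(\tilde{\pi}^*)$, and we only need a one-sided bound in the other direction.

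First I would record the trivial direction: since $\{\pi : \hat{V}^c(\pi) \le \alpha - \Delta\} \subseteq \{\pi : \hat{V}^c(\pi) \le \alpha + \Delta\}$, the policy $\hat{\pi}^*$ is feasible for the relaxed problem, and therefore $\hat{V}^r(\hat{\pi}^*) \le \hat{V}^r(\tilde{\pi}^*)$. This reduces the task to bounding the nonnegative quantity $\hat{V}^r(\tilde{\pi}^*) - \hat{V}^r(\hat{\pi}^*)$.

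Second, I would invoke the strong duality hypothesis from the lemma statement: $\hat{V}^r(\hat{\pi}^*) = \max_{\pi \in \Pi} \bigl[\hat{V}^r(\pi) + \lambda^* (\alpha - \Delta - \hat{V}^c(\pi))\bigr]$. Plugging in $\pi = \tilde{\pi}^*$ as a particular (suboptimal) choice yields
\begin{equation*}
\hat{V}^r(\hat{\pi}^*) \;\ge\; \hat{V}^r(\tilde{\pi}^*) + \lambda^*\bigl(\alpha - \Delta - \hat{V}^c(\tilde{\pi}^*)\bigr).
\end{equation*}

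Third, I would use feasibility of $\tilde{\pi}^*$ for the relaxed problem, i.e. $\hat{V}^c(\tilde{\pi}^*) \le \alpha + \Delta$, which gives $\alpha - \Delta - \hat{V}^c(\tilde{\pi}^*) \ge -2\Delta$. Since $\lambda^* \ge 0$, the displayed inequality becomes $\hat{V}^r(\hat{\pi}^*) \ge \hat{V}^r(\tilde{\pi}^*) - 2\Delta\lambda^*$, i.e. $\hat{V}^r(\tilde{\pi}^*) - \hat{V}^r(\hat{\pi}^*) \le 2\Delta\lambda^*$. Combining with the trivial direction yields $|\hat{V}^r(\hat{\pi}^*) - \hat{V}^r(\tilde{\pi}^*)| \le 2\Delta\lambda^*$.

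There is essentially no obstacle here beyond making sure the sign conventions match: the Lagrangian of the tighter problem uses slack $\alpha - \Delta - \hat{V}^c(\pi)$, and a relaxed-feasible policy can violate this slack by at most $2\Delta$. The whole argument is just a careful use of strong duality of the tight CMDP evaluated at the relaxed-optimal policy, exactly in the spirit of classical perturbation (sensitivity) analysis in convex optimization; no properties of $\lambda^*$ other than nonnegativity are needed, and no concentration or potential-function machinery enters.
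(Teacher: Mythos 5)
Your proof is correct and follows the paper's argument exactly: both use strong duality for the tight problem evaluated at $\wtilde{\pi}^*$, bound the slack $\alpha - \Delta - \hat{\V}^c(\wtilde{\pi}^*) \geq -2\Delta$ via relaxed feasibility, and invoke monotonicity of the optimal value in the constraint set to turn the one-sided bound into the absolute value. No differences worth noting.
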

 
 \begin{proof}
We start with the strong duality property for $\hat{\pi}^*$:
   \begin{align*}
        \hat{\V}^r(\hat{\pi}^*) &= \max_{\pi\in\Pi} \bracket{\hat{\V}^{r}(\pi) +  \lambda^{*} \paren{\alpha - \Delta - \hat{\V}^{c}(\pi)}} \\
        &\geq \hat{\V}^r(\wtilde{\pi}^*) + \lambda^{*} \paren{\alpha - \Delta - \underbrace{\hat{\V}^{c}(\wtilde{\pi}^*)}_{\leq \alpha+\Delta}} \\
        &\geq \hat{\V}^r(\wtilde{\pi}^*) - 2 \Delta \lambda^* \\
        &\Rightarrow \hat{\V}^r(\wtilde{\pi}^*) - \hat{\V}^r(\hat{\pi}^*) \leq 2\Delta \lambda^*.
    \end{align*}

Note that the constraint set considered for $\wtilde{\pi}^*$ is larger than the one for $\hat{\pi}^*$. Therefore, $\hat{\V}^r(\wtilde{\pi}^*) \geq \hat{\V}^r(\hat{\pi}^*)$ holds and this implies:
\begin{equation}
    \abs{\hat{\V}^r(\wtilde{\pi}^*) - \hat{\V}^r(\hat{\pi}^*)} = \hat{\V}^r(\wtilde{\pi}^*) - \hat{\V}^r(\hat{\pi}^*) \leq  2\Delta \lambda^*.
\end{equation}
\end{proof}

\begin{lemma}[Bound on the dual variable]
\label{lem:bound-dual-variable}

Define $\pi_c^* \triangleq \arg\min_{\pi\in\Pi}\V^c(\pi)$ and $\zeta \triangleq \max_{\pi\in\Pi}\curly{\alpha - \V^c(\pi)}$. Let $\alpha'=\alpha-\Delta$, for $\Delta\in \paren{0, \frac{\zeta}{2}}$ and assume that $\abs{\hat{\V}^c(\pi_c^*) - \V^c(\pi_c^*)}\leq \frac{\zeta}{2} - \Delta$ holds. Furthermore, let $\hat{\pi}^* \in \arg\max_{\pi\in\Pi}\bracket{\hat{\V}^r(\pi) \text{ subject to: } \hat{\V}^c(\pi)\leq \alpha'}$ denote the optimal policy to the empirical CMDP, and let $\lambda^*$ denote the corresponding optimal dual variable, i.e.:
\begin{equation*}
    \hat{\V}^r(\hat{\pi}^*) = \max_{\pi \in \Pi} \bracket{\hat{\V}^r(\pi) + \lambda^* \paren{\alpha' - \hat{\V}^c(\pi)}}
\end{equation*}
Then, the dual variable $\lambda^*$ can be bounded as follows:
\begin{equation*}
    \lambda^* \leq \frac{2 H}{\zeta}.
\end{equation*}
\end{lemma}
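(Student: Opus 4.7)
The plan is to exploit the standard Slater-style argument: find a policy that is strictly feasible \emph{in the empirical CMDP} with a known slack, then plug it into the strong-duality expression to bound $\lambda^*$.

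First, I would translate the definitions into a concrete feasibility gap. By definition of $\pi_c^*$ and $\zeta$, we have $\V^c(\pi_c^*) = \min_\pi \V^c(\pi) = \alpha - \zeta$. Using the concentration hypothesis $|\hat{\V}^c(\pi_c^*) - \V^c(\pi_c^*)| \leq \zeta/2 - \Delta$, the empirical cost of $\pi_c^*$ satisfies
\begin{equation*}
\hat{\V}^c(\pi_c^*) \leq \V^c(\pi_c^*) + \zeta/2 - \Delta = \alpha - \zeta/2 - \Delta.
\end{equation*}
Since $\alpha' = \alpha - \Delta$, this yields the strict-feasibility slack
\begin{equation*}
\alpha' - \hat{\V}^c(\pi_c^*) \geq \zeta/2 > 0,
\end{equation*}
which uses the assumption $\Delta < \zeta/2$ only implicitly (to keep the bound meaningful).

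Next, I would invoke the strong-duality identity stated in the lemma's hypothesis and evaluate the inner maximization at $\pi = \pi_c^*$:
\begin{equation*}
\hat{\V}^r(\hat{\pi}^*) = \max_{\pi\in\Pi} \bigl[\hat{\V}^r(\pi) + \lambda^*(\alpha' - \hat{\V}^c(\pi))\bigr] \geq \hat{\V}^r(\pi_c^*) + \lambda^* \cdot \frac{\zeta}{2}.
\end{equation*}
Rearranging and using that value functions are bounded in $[0, H]$ (since per-step rewards lie in $[0,1]$), we get
\begin{equation*}
\lambda^* \cdot \frac{\zeta}{2} \leq \hat{\V}^r(\hat{\pi}^*) - \hat{\V}^r(\pi_c^*) \leq H,
\end{equation*}
which immediately gives $\lambda^* \leq 2H/\zeta$.

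There is no real obstacle here; the whole argument is a one-line Slater bound once we verify that $\pi_c^*$ remains strictly feasible for the \emph{tightened empirical} constraint $\hat{\V}^c(\pi) \leq \alpha - \Delta$. The only delicate bookkeeping is checking that the tightening $\Delta$ and the estimation error $\zeta/2 - \Delta$ together still leave a slack of $\zeta/2$; this is exactly why the concentration assumption was stated with the budget $\zeta/2 - \Delta$ rather than, say, $\zeta/2$.
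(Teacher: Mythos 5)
Your proof is correct and follows essentially the same Slater-point argument as the paper: plug a policy with empirical constraint slack at least $\zeta/2$ into the strong-duality maximization and use $\hat{\V}^r \in [0,H]$. The only (immaterial) difference is that you evaluate the maximum at the true cost minimizer $\pi_c^*$ directly, whereas the paper evaluates it at the empirical cost minimizer $\hat{\pi}_c^* \in \arg\min_{\pi}\hat{\V}^c(\pi)$ and needs the extra step $\hat{\V}^c(\hat{\pi}_c^*)\leq\hat{\V}^c(\pi_c^*)$; your choice is slightly more direct.
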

\begin{proof}
First, we define the policy $\hat{\pi}_c^* \in \arg\min_{\pi\in\Pi}\hat{\V}^c(\pi)$. Next, we use the strong duality property \citep{paternain2019constrained} for $\hat{\pi}^*$:
\begin{align*}
    \hat{\V}^r(\hat{\pi}^*) &= \max_{\pi\in\Pi} \bracket{\hat{\V}^r(\pi) + \lambda^* \paren{\alpha' - \hat{\V}^c(\pi)}}\\
    &\geq \hat{\V}^r(\hat{\pi}_c^*) + \lambda^* \paren{\alpha' - \hat{\V}^c(\hat{\pi}_c^*)} \\
    &= \hat{\V}^r(\hat{\pi}_c^*) + \lambda^* \paren{ \underbrace{\alpha-\V^c(\pi_c^*)}_{=\zeta} - \Delta +  \V^c(\pi_c^*) - \hat{\V}^c(\hat{\pi}_c^*)} \\
    &= \hat{\V}^r(\hat{\pi}_c^*) + \lambda^* \paren{
    \zeta - \Delta + \underbrace{\hat{\V}^c(\pi_c^*) - \hat{\V}^c(\hat{\pi}_c^*)}_{\geq 0} + \underbrace{\V^c(\pi_c^*) - \hat{\V}^c(\pi_c^*)}_{\geq - \abs{\V^c(\pi_c^*) - \hat{\V}^c(\pi_c^*)}}
    } \\
    &\geq \hat{\V}^r(\hat{\pi}_c^*) + \lambda^* \paren{\zeta - \Delta -\underbrace{\abs{\hat{\V}^c(\pi_c^*) - \V^c(\pi_c^*)}}_{\leq \frac{\zeta}{2} - \Delta}} \\
    &\geq \hat{\V}^r(\hat{\pi}_c^*) + \frac{\lambda^* \zeta}{2} \\
    \Rightarrow \lambda^* &\leq \frac{2 \paren{\hat{\V}^r(\hat{\pi}^*) - \hat{\V}^r(\hat{\pi}_c^*)}}{\zeta}  \leq \frac{2 H}{\zeta}.
\end{align*}
\end{proof}

\begin{lemma}
\label{lem:concentration-value-function}
Given $\epsilon\in (0,H],\delta>0$, for each $(s,a,h) \in \StateAction\times[H]$, obtain $N = \frac{\log \paren{\frac{2 |\StateSpace|^2 |\Action| H}{\delta}} H^4}{\epsilon^2}$ independent samples from $\Transition_h(\cdot|s,a)$ and form the estimate $\hat{\Transition}_h(\cdot|s,a)$. Then, the following concentration bound holds for all policies $\pi$ and all $l\in\curly{r,c}$ uniformly with probability at least $1-\delta$:
\begin{equation*}
    \abs{\V^l(\pi) - \hat{\V}^l(\pi)} \leq \epsilon.
\end{equation*}
\end{lemma}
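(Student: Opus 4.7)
The plan is to combine two standard ingredients: (i) a concentration bound showing that the empirical transitions $\hat{\Transition}$ are entrywise close to the true $\Transition$ on a high-probability event, and (ii) a value difference (simulation) lemma that turns transition-level closeness into value-function closeness uniformly over all policies and over $l\in\{r,c\}$. Uniformity in $\pi$ will not require a covering argument; it will follow because the concentration step is policy-free and the value difference lemma is deterministic.

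First I would apply Hoeffding's inequality entrywise to $\hat{\Transition}$. For each fixed $(s',s,a,h)$, the quantity $\hat{\Transition}_h(s'|s,a)$ is the empirical mean of $N$ i.i.d.\ Bernoulli variables of mean $\Transition_h(s'|s,a)$, so Hoeffding gives $\P[|\hat{\Transition}_h(s'|s,a)-\Transition_h(s'|s,a)|>\sqrt{\log(2/\delta')/(2N)}]\leq \delta'$. A union bound over the $|\StateSpace|^2|\Action|\Horizon$ tuples with $\delta'=\delta/(|\StateSpace|^2|\Action|\Horizon)$ produces an event $\mathcal{E}$ of probability at least $1-\delta$ on which
\[
\max_{s',s,a,h}|\hat{\Transition}_h(s'|s,a)-\Transition_h(s'|s,a)|\;\leq\;\sqrt{\frac{\log(2|\StateSpace|^2|\Action|\Horizon/\delta)}{2N}}.
\]

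Next, working on $\mathcal{E}$, I would invoke the standard value difference lemma: for any $\pi\in\Pi$ and any stagewise $l\in\{r,c\}$ with values in $[0,1]$,
\[
\V^l(\pi)-\hat{\V}^l(\pi) \;=\; \sum_{h=1}^{\Horizon}\E^{\pi}_{\hat{\Transition}}\!\left[\langle \Transition_h(\cdot|s_h,a_h)-\hat{\Transition}_h(\cdot|s_h,a_h),\,V^{l,\pi}_{\Transition,h+1}\rangle\right],
\]
where the stage-$(h{+}1)$ value $V^{l,\pi}_{\Transition,h+1}$ depends only on $\pi$ and $\Transition$ and is bounded by $\Horizon$. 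Passing to absolute values, bounding each inner product by the $\ell_1$ deviation times $\|V^{l,\pi}_{\Transition,h+1}\|_\infty\leq \Horizon$, and plugging in the entrywise bound from the previous step yields the deterministic inequality $|\V^l(\pi)-\hat{\V}^l(\pi)|\leq \Horizon^2 \sqrt{\log(2|\StateSpace|^2|\Action|\Horizon/\delta)/(2N)}$. Requiring the right-hand side to be at most $\epsilon$ and solving for $N$ recovers the stated sample complexity (up to absolute constants, which can be absorbed using $\epsilon\leq \Horizon$).

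The step I expect to require the most care is arranging that a \emph{single} high-probability event $\mathcal{E}$ supports the conclusion for \emph{all} policies $\pi$ and both $l\in\{r,c\}$ simultaneously. The deliberate choice is to apply concentration only at the transition level -- which is $\pi$-free -- and to let the policy dependence enter deterministically through $V^{l,\pi}_{\Transition,h+1}$, which is uniformly bounded by $\Horizon$ without any additional probabilistic cost. This avoids any covering argument over the (continuous) policy space. Extending from $l=r$ to $l=c$ is free because the value difference bound depends on the target function only through its sup-norm, and both rewards and costs take values in $[0,1]$.
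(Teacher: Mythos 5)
Your overall strategy is exactly the paper's: apply Hoeffding entrywise to $\hat{\Transition}_h(s'|s,a)$, union-bound over the $|\StateSpace|^2|\Action|\Horizon$ tuples to get a single policy-free event, and then push the transition-level error through the (deterministic) value difference lemma so that uniformity over $\pi$ and over $l\in\{r,c\}$ comes for free. The paper's proof follows precisely this outline, fixing a target entrywise accuracy $\beta=\epsilon/\Horizon^2$ first and then solving for $N$, whereas you solve for $N$ at the end; that difference is immaterial.

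There is, however, one quantitative slip in your middle step. You bound each per-step term by the $\ell_1$ deviation $\lVert \Transition_h(\cdot|s_h,a_h)-\hat{\Transition}_h(\cdot|s_h,a_h)\rVert_1$ times $\lVert V^{l,\pi}_{\Transition,h+1}\rVert_\infty\leq \Horizon$, and then ``plug in the entrywise bound.'' But the entrywise bound only gives $\lVert \Transition_h(\cdot|s,a)-\hat{\Transition}_h(\cdot|s,a)\rVert_1\leq |\StateSpace|\cdot\max_{s'}\abs{\Transition_h(s'|s,a)-\hat{\Transition}_h(s'|s,a)}$, so your chain of inequalities actually yields $\abs{\V^l(\pi)-\hat{\V}^l(\pi)}\leq |\StateSpace|\,\Horizon^2\sqrt{\log(2|\StateSpace|^2|\Action|\Horizon/\delta)/(2N)}$, not the bound you state; requiring this to be at most $\epsilon$ would inflate $N$ by a factor of $|\StateSpace|^2$ relative to the lemma. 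The paper avoids forming the $\ell_1$ norm at all: in its rendering of the value difference lemma the transition discrepancy appears only at the realized next state $s_{h+1}$ inside the expectation, so each summand is bounded by $\beta\cdot(\Horizon-h)$ directly and the total is $\beta\Horizon^2$ with no $|\StateSpace|$ factor. To match the stated constants you should either adopt that form of the decomposition, or replace the entrywise Hoeffding step by a concentration bound on the scalar $\langle \hat{\Transition}_h(\cdot|s,a)-\Transition_h(\cdot|s,a),\,V\rangle$ for the relevant value vectors; as written, the $\ell_1$-via-entrywise route does not deliver the claimed $N=\Horizon^4\log(2|\StateSpace|^2|\Action|\Horizon/\delta)/\epsilon^2$.
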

\begin{proof}
    We start with an arbitrary $\beta>0$ and assume that the difference between $\curly{\Transition_h}_{h\in [H]}$ and $\curly{\hat{\Transition}_h}_{h\in [H]}$ is bounded by $\beta$, i.e.
    \begin{equation}
    \label{eq:cmdp-gen-transition-concentration}
    \max_{(s,a,s',h)\in\StateAction\times\StateSpace\times[H]} \abs{\Transition_h(s'|s,a) - \hat{\Transition}_h(s'|s,a)} \leq \beta.
\end{equation}
For any policy $\pi \in\Pi$ and $l\in \curly{r,c}$, by \cref{lem:value-difference}, the difference $\V^l(\pi) - \hat{\V}^l(\pi)$ can be written as $\V^l(\pi) - \hat{\V}^l(\pi) = \mathop{\E}_{\substack{s \sim \mu,\\ a_h \sim \pi_h(\cdot | s_h),  \\s_{h+1} \sim {\hat{\Transition}_h}(\cdot | s_h, a_h) }}\bracket{\sum_{h=1}^{\Horizon} \paren{\Transition_h - \hat{\Transition}_h} \paren{s_{h+1}|s_h,a_h} \V_{h+1}^l(s_{h+1}; \pi) | s_0 = s }$, where $\V_{h+1}^l(\cdot;\pi)$ is defined according to \cref{eq:value-function-step}. Next, we bound $\abs{\V^l(\pi) - \hat{\V}^l(\pi)}$ as follows:
\begin{align*}
    \abs{\V^l(\pi) - \hat{\V}^l(\pi)} &= \abs{\mathop{\E}_{\substack{s \sim \mu,\\ a_h \sim \pi_h(\cdot | s_h),  \\s_{h+1} \sim {\hat{\Transition}_h}(\cdot | s_h, a_h) }}\bracket{\sum_{h=1}^{\Horizon} \paren{\Transition_h - \hat{\Transition}_h} \paren{s_{h+1}|s_h,a_h} \V_{h+1}^l(s_{h+1}; \pi) | s_0 = s }} \\
    &\leq \mathop{\E}_{\substack{s \sim \mu,\\ a_h \sim \pi_h(\cdot | s_h),  \\s_{h+1} \sim {{\Transition}_h}(\cdot | s_h, a_h) }} \bracket{\abs{\sum_{h=1}^H \paren{\Transition_h - \hat{\Transition}_h} \paren{s_{h+1}|s_h,a_h} \cdot \V_{h+1}^l(s_{h+1}; \pi) | s_0 = s }} \tag{Triangle inequality.} \\
    &\leq \mathop{\E}_{\substack{s \sim \mu,\\ a_h \sim \pi_h(\cdot | s_h),  \\s_{h+1} \sim {{\Transition}_h}(\cdot | s_h, a_h) }} \bracket{\sum_{h=1}^H \underbrace{\abs{\paren{\Transition_h - \hat{\Transition}_h} \paren{s_{h+1}|s_h,a_h}}}_{\leq \beta} \cdot \underbrace{\abs{\V_{h+1}^l(s_{h+1}; \pi)}}_{\leq H-h} | s_0 = s } \tag{Triangle inequality.} \\
    &\leq \beta H^2.
\end{align*}
To ensure that $\abs{\V^l(\pi) - \hat{\V}^l(\pi)} \leq \epsilon$, we set $\beta = \frac{\epsilon}{H^2}$. Now, consider an arbitrary $(s,a,h)$ and assume that we obtain $N>0$ {\em independent} samples from $\Transition_h(\cdot|s,a)$. Applying Hoeffding's inequality, we obtain the following bound on the estimation error:
\begin{equation}
    \label{eq:hoeffding}
    Pr\bracket{\abs{\Transition_h(s'|s,a) - \hat{\Transition}_h(s'|s,a)} > \beta} \leq 2 \exp \paren{-2N \beta^2}.
\end{equation}
Setting $N = \frac{\log \paren{\frac{2 |\StateSpace|^2 |\Action| H}{\delta}} H^4}{\epsilon^2}$, we obtain that $Pr\bracket{\abs{\Transition_h(s'|s,a) - \hat{\Transition}_h(s'|s,a)} > \beta} \leq \frac{\delta}{|\StateSpace|^2 |\Action| H}$. Taking the union bound over all $(s,a,s',h)$, we obtain \cref{eq:cmdp-gen-transition-concentration} with probability at least $1-\delta$.
\end{proof}

\begin{lemma}[Value difference lemma]
\label{lem:value-difference}
For any policy $\pi\in\Pi$ and $l\in \curly{r,c}$, the value difference $\V^l(\pi) - \hat{\V}^l(\pi)$ can be expressed as follows:
\begin{align*}
    \V^l(\pi) - \hat{\V}^l(\pi) &= \mathop{\E}_{\substack{s \sim \mu,\\ a_h \sim \pi_h(\cdot | s_h),  \\s_{h+1} \sim {\hat{\Transition}_h}(\cdot | s_h, a_h) }}\bracket{\sum_{h=1}^{\Horizon} \paren{\Transition_h - \hat{\Transition}_h} \paren{s_{h+1}|s_h,a_h} \V_{h+1}^l(\pi; s_{h+1}) | s_0 = s } \\
    &= \mathop{\E}_{\substack{s \sim \mu,\\ a_h \sim \pi_h(\cdot | s_h),  \\s_{h+1} \sim {{\Transition}_h}(\cdot | s_h, a_h) }}\bracket{\sum_{h=1}^{\Horizon} \paren{\hat{\Transition}_h - {\Transition}_h} \paren{s_{h+1}|s_h,a_h} \hat{\V}_{h+1}^l(\pi; s_{h+1}) | s_0 = s },
\end{align*}
where the per-step and per-state value functions are defined as follows:
\begin{align}
    \V_h^l(\pi;s) &\triangleq \mathop{\E}_{\substack{a_h \sim \pi_h(\cdot | s_h),  \\s_{h+1} \sim {{\Transition}_h}(\cdot | s_h, a_h) }}\bracket{\sum_{h'=h}^{\Horizon} l_h(s_h,a_h) | s_0 = s }, \forall s\in\StateSpace, \forall h\in [H], \label{eq:value-function-step}\\
    \hat{\V}_h^l(\pi;s) &\triangleq \mathop{\E}_{\substack{a_h \sim \pi_h(\cdot | s_h),  \\s_{h+1} \sim {\hat{\Transition}_h}(\cdot | s_h, a_h) }}\bracket{\sum_{h'=h}^{\Horizon} l_h(s_h,a_h) | s_0 = s }, \forall s\in\StateSpace, \forall h\in [H] \nonumber.
\end{align}
    
\end{lemma}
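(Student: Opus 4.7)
The approach is the classical simulation (performance-difference) identity, obtained by a backwards Bellman recursion combined with an add--subtract trick and then telescoped along the horizon. The per-step value functions $\V_h^l(\pi;\cdot)$ and $\hat{\V}_h^l(\pi;\cdot)$ each satisfy a Bellman recursion with the common per-step cost $l_h$ but with the kernels $\Transition_h$ and $\hat{\Transition}_h$ respectively, together with the boundary condition $\V_{H+1}^l \equiv \hat{\V}_{H+1}^l \equiv 0$. The two identities in the lemma correspond to the two natural choices of which kernel plays the role of the ``reference'' along the telescoped trajectory.

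For the first identity, I would subtract the two Bellman recursions at step $h$ and insert the cross term $\sum_{s'}\hat{\Transition}_h(s'|s,a)\,\V_{h+1}^l(\pi;s')$, yielding
\begin{align*}
\V_h^l(\pi;s) - \hat{\V}_h^l(\pi;s)
&= \E_{a\sim \pi_h(\cdot|s)}\bracket{\sum_{s'}(\Transition_h - \hat{\Transition}_h)(s'|s,a)\,\V_{h+1}^l(\pi;s')} \\
&\quad + \E_{a\sim \pi_h(\cdot|s),\, s'\sim \hat{\Transition}_h(\cdot|s,a)}\bracket{\V_{h+1}^l(\pi;s') - \hat{\V}_{h+1}^l(\pi;s')}.
\end{align*}
The first summand is the local model discrepancy measured against the true value function at step $h$, while the second summand propagates the overall difference one step forward under the empirical kernel. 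Unrolling this one-step recursion from $h=1$ down to $h=H$ (using $\V_{H+1}^l - \hat{\V}_{H+1}^l = 0$) expresses $\V_1^l(\pi;s) - \hat{\V}_1^l(\pi;s)$ as a sum of local discrepancy terms accumulated along a trajectory sampled under $\pi$ and $\hat{\Transition}$. Taking the outer expectation over $s \sim \mu$ and using $\V^l(\pi) = \E_{s\sim\mu}[\V_1^l(\pi;s)]$ (and similarly for $\hat{\V}^l$) produces the first equality.

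The second identity follows by the symmetric add--subtract, inserting instead $\sum_{s'}\Transition_h(s'|s,a)\,\hat{\V}_{h+1}^l(\pi;s')$: now the local discrepancy pairs $(\hat{\Transition}_h - \Transition_h)$ with $\hat{\V}_{h+1}^l$, and the remainder is propagated under the true kernel $\Transition_h$. Unrolling and taking the outer expectation yields the second equality. I do not anticipate any genuine obstacle here; the only point requiring care is choosing the right cross term at the add--subtract step so that, in each identity, the trajectory kernel and the value function appearing inside the summand are correctly matched (empirical kernel with true value function in the first formula, true kernel with empirical value function in the second).
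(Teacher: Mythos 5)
Your argument is correct: subtracting the two Bellman recursions, inserting the cross term $\sum_{s'}\hat{\Transition}_h(s'|s,a)\,\V_{h+1}^l(\pi;s')$ (resp.\ the symmetric one with $\Transition_h$ and $\hat{\V}_{h+1}^l$), and telescoping over $h$ with the boundary condition $\V_{H+1}^l\equiv\hat{\V}_{H+1}^l\equiv 0$ is exactly the standard proof of this simulation/value-difference identity, and your care in matching the trajectory kernel with the value function inside the summand is the right point to watch. The paper does not write out a proof at all but simply cites Lemma 35 of \citet{efroni2020exploration}, whose argument proceeds along the same lines as yours.
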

\begin{proof}
    See \citet[Lemma 35]{efroni2020exploration}.
\end{proof}

\section{Learning in Unknown Constrained Markov Potential Games - Safe Exploration Without a Generative Model (\cref{subsec:cmdp-no-regret})}
\label{app:cmdp-no-regret}

In this section, we consider the setting, where the agents want to explore safely, but they do not have access to a generative model anymore. Existing algorithms with safe exploration \citep{Bura2022, liu2021learning} have guarantees on the {\em regret}, but no sample complexity guarantees. First, we define a no-regret algorithm with safe exploration guarantees as follows:
\begin{definition}[No-regret algorithm with safe exploration]\label{def:no-regret-algorithm}
Consider a fixed number of $T>0$ rounds and an algorithm $\Alg$, which selects a policy $\pi_t\in\Pi$ in every round $t\in [T]$. $\Alg$'s regret after $T$ rounds is defined as follows:
\begin{equation*}
   R(T) \triangleq \sum_{t=1}^T \max_{\pi\in\Pi_C} \V^r(\pi) - \V^r(\pi_t).
\end{equation*}
We call $\Alg$ is called a no-regret algorithm with safe exploration, if $R(T)\in o(T)$ and if $\pi_t\in \Pi_C, \forall t\in [T]$. Examples of such algorithms are \citet{liu2021learning} and \citet{Bura2022}.
\end{definition}
Next, we discuss how we can use the notion of regret to derive a sample complexity bound. For {\em unconstrained} MDPs, sub-linear regret bounds can be converted to a sample complexity bound by applying the well-known {\em online-to-batch} conversion trick \citep{jin2018q}. Applying the same trick to the DOPE algorithm \citet{Bura2022}, we derive a sample-efficient algorithm in \cref{alg:cmdp-no-regret} and prove its sample complexity in the following lemma.
\begin{restatable}{lemma}{lemcmdpscnr}
\label{lem:cmdp-sample-complexity-no-regret}
    Consider a CMDP $\M = \paren{\StateSpace,\Action,H,\curly{\reward_h}_{h\in[H]}, \curly{\Transition_h}_{h\in [H]}, \curly{c_h}_{h\in [H]}, \alpha}$. Assume that a strictly feasible policy $\pi^S$ and its constraint value $\alpha_S\triangleq\V^c(\pi^S)$ are known s.t. $\alpha_S<\alpha$. Then, for any $\epsilon' \in (0,H]$ and $\delta' \in (0,1)$, \cref{alg:cmdp-no-regret} with $T = \O\paren{\frac{|\StateSpace|^2 H^6 |\Action|}{\paren{\alpha-\alpha_S}^2 \epsilon'^2}}$ and $M = \wtilde{\O}\paren{\frac{H^2}{\epsilon'^2} \log\paren{\frac{1}{\epsilon' \delta'}}}$ returns a policy $\hat{\pi}\in\Pi_C$ s.t. $\max_{\pi \in \Pi_C} \V^{r}(\pi) - \V^r(\hat{\pi}) \leq \epsilon$ holds, with probability at least $1-\delta'$. This results in a sample complexity of $\F_{C} \paren{|\StateSpace|,|\Action|,\Horizon,\alpha-\alpha_S,\delta',\epsilon'} = \wtilde{\O}\paren{\frac{|\StateSpace|^2 |\Action| \Horizon^9}{ \paren{\alpha-\alpha_S}^2 \epsilon'^4 } \log\paren{\frac{1}{\epsilon' \delta'}} }$.
\end{restatable}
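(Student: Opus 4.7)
The plan is to invoke \Dope~as a black-box no-regret algorithm with safe exploration and apply a standard online-to-batch conversion, then identify the best of the \Dope-generated policies via Monte-Carlo evaluation.

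First, run \Dope~for $T$ episodes (\cref{algln:dopa}). By the guarantee of \citet{Bura2022}, on a high-probability event, every returned policy $\pi_t$ is feasible (so each $\pi_t \in \Pi_C$) and the cumulative regret $\sum_{t=1}^T \paren{ \max_{\pi \in \Pi_C} V^r(\pi) - V^r(\pi_t) }$ is bounded by $R(T) = \wtilde{\O}\paren{ H^3 |\StateSpace| \sqrt{|\Action| T}/(\alpha - \alpha_S) }$. By averaging, at least one index $t^\star \in [T]$ satisfies $\max_{\pi\in\Pi_C} V^r(\pi) - V^r(\pi_{t^\star}) \le R(T)/T$. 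Choosing $T$ just large enough to make $R(T)/T \le \epsilon'/2$ and solving for $T$ recovers the stated scaling.

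Second, to pick out a near-optimal policy from the sequence $\{\pi_t\}_{t\in [T]}$, for each $t$ I would form the Monte-Carlo estimate $\hat{V}^r(\pi_t)$ from $M$ independent episodes. Episode returns lie in $[0,H]$, so Hoeffding's inequality combined with a union bound over the $T$ policies gives $|\hat{V}^r(\pi_t) - V^r(\pi_t)| \le \epsilon'/4$ uniformly, provided $M = \wtilde{\O}\paren{ (H^2/\epsilon'^2)\log(T/\delta')}$, which matches the stated $M$. Returning $\hat{\pi} \in \arg\max_t \hat{V}^r(\pi_t)$ (\cref{algln:no-regret-final-policy}) and chaining four inequalities (estimation error at $\hat{\pi}$, optimality of $\hat{\pi}$ under $\hat{V}^r$, estimation error at $\pi_{t^\star}$, and the pigeonhole bound for $\pi_{t^\star}$) yields $V^r(\hat{\pi}) \ge \max_{\pi\in\Pi_C} V^r(\pi) - \epsilon'$. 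Feasibility of $\hat{\pi}$ is immediate since $\hat{\pi} \in \{\pi_t\}_{t\in[T]}$ and all $\pi_t$ are feasible on the \Dope~event.

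Finally, the total sample count is $T \cdot H$ samples collected by \Dope~plus $T \cdot M \cdot H$ samples used for policy evaluation; substituting the choices of $T$ and $M$ and absorbing $\log$-factors gives the claimed $\wtilde{\O}\paren{ |\StateSpace|^2 |\Action| H^9 / ((\alpha-\alpha_S)^2 \epsilon'^4)}$ bound. The main obstacle is bookkeeping: one has to invoke the \emph{precise} form of \Dope's regret bound from \citet{Bura2022}, correctly allocate the failure probability between the \Dope~event (say $\delta'/2$) and the Hoeffding union bound (the remaining $\delta'/2$), and carefully trace how $|\StateSpace|$, $|\Action|$, $H$, and $(\alpha-\alpha_S)$ propagate through the inversion of $R(T) \le T\epsilon'/2$. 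Everything else is a textbook online-to-batch conversion together with Hoeffding concentration.
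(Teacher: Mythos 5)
Your proposal is correct and follows essentially the same route as the paper: the paper also invokes the \Dope~regret and feasibility guarantees, applies an online-to-batch conversion (packaged there as a separate ``regret to sample complexity'' lemma with $C = |\StateSpace| H^3 \sqrt{|\Action|}/(\alpha-\alpha_S)$) with Hoeffding plus a union bound over the $T$ candidate policies, and chains the same inequalities. The only bookkeeping item you omit is that \Dope~internally plays the safe baseline $\pi^S$ for an additional $T_0 = \wtilde{\O}\bigl(|\StateSpace|^2 |\Action| H^4 / (\alpha-\alpha_S)^2\bigr)$ burn-in episodes, which the paper adds to the sample count; since $T_0 H$ is dominated by the $TMH$ term, this does not change the final bound.
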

\begin{proof}

First, recall that \cref{lem:dope-regret} provides the following regret guarantees with probability at least $1-5\delta$:
\begin{align}
    \pi_t &\in \Pi_C, \forall t\in [T], \label{eq:dope-constraint} \\
    R(T) &\leq \wtilde{\O}\paren{\frac{|\StateSpace|\Horizon^3}{\paren{\alpha-\alpha^S}} \sqrt{|\Action| T} }. \label{eq:dope-regret}
\end{align}

\textbf{Constraint violation:} Since $\hat{\pi}$ is selected from $\curly{\pi_t}_{t\in [T]}$ (\cref{algln:no-regret-final-policy}) and \cref{eq:dope-constraint} holds, the policy $\hat{\pi}$ is a feasible policy too.

\textbf{Reward sub-optimality:} Applying \cref{lem:regret-to-sc} with $C = \frac{|\StateSpace| H^3 \sqrt{|\Action|}}{\paren{\alpha-\alpha^S}}$, we obtain that with $T = \O\paren{\frac{|\StateSpace|^2 H^6 |\Action|}{\paren{\alpha-\alpha^S}^2 \epsilon^2}}$ and $M = \wtilde{\O}\paren{\frac{H^2}{\epsilon^2} \log\paren{\frac{1}{\epsilon \delta}}}$, the returned policy $\hat{\pi}$ satisfies $\max_{\pi\in\Pi_C}\V^r(\pi) - \V^r(\hat{\pi}) \leq \epsilon$, with probability at least $1-\delta$. Taking a union bound over this, \cref{eq:dope-constraint} and \cref{eq:dope-regret}, the following guarantees hold with probability at least $1-6\delta$:
\begin{align*}
    \max_{\pi \in \Pi_C}\V^r(\pi) - \V^r(\hat{\pi}) &\leq \epsilon,\\
    \V^c(\hat{\pi}) &\leq \alpha.
\end{align*}
To compute the final sample complexity, note that \Dope~internally uses an initial phase of $T_0$ episodes, during which the agent plays the initial policy $\pi^S$. We do not discuss the details here, but need to account for those $T_0$ episodes in the sample complexity. The resulting sample complexity is as follows:
\begin{align*}
    \F_C\paren{|\StateSpace|, |\Action|, \Horizon, \zeta, \delta, \epsilon} &= \paren{\underbrace{T+T_0}_{\text{\cref{algln:dopa}}} + \underbrace{TM}_{\text{\cref{algln:no-regret-final-policy}}}} H \\
    &= \wtilde{\O} \paren{\frac{|\StateSpace|^2 |\Action| \Horizon^9 }{\paren{\alpha-\alpha^S}^2 \epsilon^4} \log\paren{\frac{1}{\epsilon \delta}}} + \wtilde{\O}\paren{\frac{|\StateSpace|^2 |\Action| \Horizon^5}{\paren{\alpha-\alpha^S}^2}} \tag{$T_0 = \wtilde{\O}\paren{\frac{|\StateSpace|^2 |\Action| \Horizon^4}{\paren{\alpha-\alpha^S}^2}}$ by \cref{lem:dope-regret}.}\\
    &= \wtilde{\O} \paren{\frac{|\StateSpace|^2 |\Action| \Horizon^9 }{\paren{\alpha-\alpha^S}^2 \epsilon^4} \log\paren{\frac{1}{\epsilon \delta}}}.
\end{align*}
\end{proof}

\textbf{Remark:}
Note that we used a specific definition of no-regret algorithms in \cref{def:no-regret-algorithm}. Other notions of no-regret algorithms for CMDPs exist in the literature \citep{efroni2020exploration}, but they usually do not require feasibility of the iterates (safe exploration). The trick in \cref{lem:regret-to-sc} assumes that at least one of the iterates is {\em both} feasible and $\epsilon$-optimal. This may not be guaranteed by no-regret algorithms without the safe exploration guarantee.

\subsection{Proofs of auxiliary lemmas}

\begin{lemma}[Theorem 3 from \citet{Bura2022}]
\label{lem:dope-regret}
Consider a CMDP $\M = \paren{\StateSpace,\Action,H,\curly{\reward_h}_{h\in[H]}, \curly{\Transition_h}_{h\in [H]}, \curly{c_h}_{h\in [H]}, \alpha}$. Fix any $\delta\in(0,1)$. Then, \Dope~invoked with $T_0 = \wtilde{\O}\paren{\frac{|\StateSpace|^2 |\Action| \Horizon^4}{\paren{\alpha-\alpha^S}^2}}$ generates a sequence of policies $\curly{\pi_t}_{t\in [T]}$ s.t. $\pi_t\in\Pi_C, \forall t\in [T]$ and the sequence has the following regret:
\begin{equation*}
    R(T) = \sum_{t=1}^T \max_{\pi\in\Pi_C} \V^r(\pi) - \V^r(\pi_t) \leq \wtilde{\O}\paren{\frac{|\StateSpace|\Horizon^3}{\paren{\alpha-\alpha^S}} \sqrt{|\Action| T} },
\end{equation*}
with probability at least $1-5\delta$.
\end{lemma}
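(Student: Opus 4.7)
The plan is to mirror the DOPE analysis of Bura et al., which rests on three ingredients: high-probability concentration for the estimated transitions, a construction of confidence sets that is simultaneously optimistic in the reward and pessimistic in the constraint, and a regret decomposition that pays only a bounded, one-time cost for safe initialization. First, I would set up empirical transition estimates $\hat{\Transition}_h$ after each episode and build Bernstein/Hoeffding-type confidence sets $\mathcal{P}_t$ containing the true model with probability at least $1-\delta$ uniformly over $t$ (a standard union-bound argument plus a covering-number step to make it uniform over visitations). This gives, via the extended value-difference lemma, the following simulation-style bound: for any policy $\pi$ and any $\Transition'\in\mathcal{P}_t$, $|\V^{l,\Transition}(\pi)-\V^{l,\Transition'}(\pi)|$ is controlled by a per-step transition-confidence width summed along the trajectory induced by $\pi$ under $\Transition'$, for $l\in\{r,c\}$.

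Next, I would handle the \emph{safe initialization phase}. The key quantitative lemma of Bura et al. is that after $T_0=\wtilde{\O}\paren{\tfrac{|\StateSpace|^2|\Action|H^4}{(\alpha-\alpha^S)^2}}$ episodes of rolling out the known safe baseline $\pi^S$, the transition estimates are accurate enough along the occupancy of $\pi^S$ that the pessimistic (upper-bound) estimate $\bar{\V}^c$ of the constraint function, evaluated at any candidate policy, satisfies $\bar{\V}^c(\pi^S)\le \alpha^S + (\alpha-\alpha^S)/2 < \alpha$. This certifies that the pessimistically feasible set $\widehat{\Pi}_C^t\triangleq\{\pi : \bar{\V}^c_t(\pi)\le \alpha\}$ is nonempty for all $t>T_0$, and moreover that it contains a near-optimal policy (in particular, a suitably perturbed version of the optimal feasible policy lies in $\widehat{\Pi}_C^t$ with high probability, using the Slater-like gap $\alpha-\alpha^S$).

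In the exploration phase I would then analyze the policy update $\pi_t\in\arg\max_{\pi\in\widehat{\Pi}_C^t}\underline{\V}^r_t(\pi)$, where $\underline{\V}^r_t$ is the optimistic reward. Safety is immediate: $\V^c(\pi_t)\le \bar{\V}^c_t(\pi_t)\le \alpha$ on the good event, giving the per-episode constraint $\pi_t\in\Pi_C$ claimed in the statement. For regret, I would decompose
\begin{equation*}
R(T)=\sum_{t=1}^{T_0}\bigl(\V^r(\pi^*)-\V^r(\pi^S)\bigr)+\sum_{t=T_0+1}^{T}\bigl(\V^r(\pi^*)-\V^r(\pi_t)\bigr),
\end{equation*}
bound the first sum by $T_0 H$ (absorbed into the lower-order $\wtilde{\O}$ terms via our choice of $T_0$), and handle the second sum by the usual optimism chain
\begin{equation*}
\V^r(\pi^*)-\V^r(\pi_t)\le \underline{\V}^r_t(\pi^*)-\V^r(\pi_t)\le \underline{\V}^r_t(\pi_t)-\V^r(\pi_t),
\end{equation*}
where the last step uses that $\pi^*$ (or its safe perturbation) belongs to $\widehat{\Pi}_C^t$. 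Summing the per-step optimism gap against the transition confidence widths and applying the pigeonhole/elliptical-potential-type argument gives the $\wtilde{\O}\paren{\frac{|\StateSpace|H^3}{\alpha-\alpha^S}\sqrt{|\Action|T}}$ bound, where the $1/(\alpha-\alpha^S)$ factor enters precisely through the perturbation of $\pi^*$ needed to place it inside the pessimistic feasible set.

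The main obstacle is the coupling between safety and optimism: naive optimism in the reward combined with pessimism in the constraint shrinks the feasible set, and without the Slater-type margin $\alpha-\alpha^S$ one cannot certify that a near-optimal policy survives in $\widehat{\Pi}_C^t$. Controlling this loss quantitatively is what produces both the $(\alpha-\alpha^S)^{-2}$ dependence in $T_0$ (enough baseline data to shrink the constraint width below half the margin) and the $(\alpha-\alpha^S)^{-1}$ factor in the regret (the radius of the allowable perturbation of $\pi^*$). A final union bound over the five high-probability events—confidence-set validity, baseline-phase constraint concentration, optimism of $\underline{\V}^r_t$, pessimism of $\bar{\V}^c_t$, and the martingale concentration converting expected regret to realized regret—yields the $1-5\delta$ guarantee stated.
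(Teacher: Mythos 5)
The paper does not actually prove this statement: it is imported verbatim as Theorem~3 of \citet{Bura2022}, and \cref{lem:dope-regret} serves only as a black-box citation feeding into \cref{lem:cmdp-sample-complexity-no-regret}. So there is no internal proof to compare against, and the real question is whether your sketch faithfully reconstructs the external argument. It largely does: the three-part structure (per-$(s,a,h)$ confidence sets, optimistic reward / pessimistic cost, a baseline phase of length $T_0$ that certifies nonemptiness of the pessimistic feasible set), the origin of the $(\alpha-\alpha^S)^{-2}$ factor in $T_0$ and the $(\alpha-\alpha^S)^{-1}$ factor in the regret, and the union bound over five events matching the $1-5\delta$ guarantee are all consistent with the DOPE analysis.

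Two points of imprecision worth flagging. First, the ``suitably perturbed version of $\pi^*$'' is, in the actual argument, a convex combination of $\pi^*$ with the safe baseline $\pi^S$ taken in \emph{occupancy-measure} space, with mixing weight proportional to the pessimism width divided by the margin $\alpha-\alpha^S$; this is where the $(\alpha-\alpha^S)^{-1}$ enters, and an unstructured ``perturbation'' would not obviously land inside the pessimistic feasible set. Relatedly, the optimization in the exploration phase is an extended LP jointly over occupancy measures and transition models in the confidence set, not a plain $\arg\max$ over policies against a fixed bonus-adjusted value; the two are equivalent only up to constants, and the safety claim $\V^c(\pi_t)\le\alpha$ relies on the pessimistic cost being an upper bound uniformly over the confidence set. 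Second, the covering-number step you invoke for uniformity is not needed in the tabular setting — a union bound over $(s,a,h)$ pairs and episodes suffices — so that part of your plan adds machinery the argument does not require. Neither issue is fatal; as a blind reconstruction of a cited theorem, the sketch captures the right mechanism.
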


\begin{lemma}[Regret to Sample Complexity]
\label{lem:regret-to-sc}
Consider a no-regret CMDP algorithm $\Alg$ (see \cref{def:no-regret-algorithm}) with regret bound $R(T) = \sum_{t=1}^T \max_{\pi\in\Pi_C} \V^r(\pi) - \V^r(\pi_t) \leq C\sqrt{T}$, where $C$ is a constant that does not depend on $T$, and $\pi_t\in\Pi_C, \forall t\in [T]$. Given $\epsilon\in (0,H], \delta\in (0,1)$, run algorithm $\Alg$ with $T = \frac{4 C^2}{\epsilon^2}$ episodes. Next, execute each policy $\pi \in \curly{\pi_t}_{t\in [T]}$ for $M=\frac{16 H^2}{\epsilon^2} \log \paren{\frac{2 C}{\epsilon \delta}}$ episodes and estimate the value functions $\curly{\hat{\V}^r(\pi_t)}_{t\in [T]}$. Then, the policy $\hat{\pi} = \arg\max_{\pi \in \curly{\pi_t}_{t\in [T]}} \curly{\hat{\V}^r(\pi)}$ satisfies the following guarantees:
\begin{equation*}
    \V^r(\pi^*) - \V^r(\hat{\pi}) \leq \epsilon,
\end{equation*}
with probability at least $1-\delta$.
\end{lemma}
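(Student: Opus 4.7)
The plan is to use the classic online-to-batch conversion argument, adapted to the constrained setting. The starting observation is that the sublinear regret $R(T) \leq C\sqrt{T}$ implies the average regret of the iterates is at most $C/\sqrt{T}$, so by an averaging argument (pigeonhole) at least one policy in the sequence $\{\pi_t\}_{t \in [T]}$ must be nearly optimal. With the prescribed $T = 4C^2/\epsilon^2$, this yields an index $t^* \in \arg\min_{t\in[T]} \{\V^r(\pi^*) - \V^r(\pi_t)\}$ satisfying $\V^r(\pi^*) - \V^r(\pi_{t^*}) \leq C/\sqrt{T} = \epsilon/2$. Feasibility is inherited for free: every $\pi_t \in \Pi_C$ by the safe-exploration guarantee of $\Alg$, so in particular $\pi_{t^*}, \hat{\pi} \in \Pi_C$.

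The next step is to show that we can approximately identify such a good policy from the observed trajectories. For each $\pi_t$, executing it for $M$ independent episodes and averaging the cumulative returns yields an estimate $\hat{\V}^r(\pi_t)$ whose error is controlled by Hoeffding's inequality: since each per-episode return is bounded in $[0,H]$, we have
\begin{equation*}
\Pr\bracket{\abs{\hat{\V}^r(\pi_t) - \V^r(\pi_t)} > \epsilon/4} \leq 2\exp\paren{-\frac{M \epsilon^2}{8 H^2}}.
\end{equation*}
With the chosen $M = \tfrac{16 H^2}{\epsilon^2}\log\paren{\tfrac{2C}{\epsilon\delta}}$ and a union bound over the $T = 4C^2/\epsilon^2$ policies, the event $\mathcal{E}$ that all estimates are simultaneously within $\epsilon/4$ of their true values holds with probability at least $1 - \delta$.

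The final step is to chain the two approximations on $\mathcal{E}$. Since $\hat{\pi} = \arg\max_{t} \hat{\V}^r(\pi_t)$, we have $\hat{\V}^r(\hat{\pi}) \geq \hat{\V}^r(\pi_{t^*})$, and therefore
\begin{equation*}
\V^r(\hat{\pi}) \geq \hat{\V}^r(\hat{\pi}) - \epsilon/4 \geq \hat{\V}^r(\pi_{t^*}) - \epsilon/4 \geq \V^r(\pi_{t^*}) - \epsilon/2 \geq \V^r(\pi^*) - \epsilon,
\end{equation*}
yielding the claimed sub-optimality bound.

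I do not expect a serious obstacle here; the entire argument is standard online-to-batch plus Hoeffding concentration, and the two-stage $\epsilon/2$ (optimization) plus $2\cdot\epsilon/4$ (estimation) decomposition is the only thing that needs careful bookkeeping to verify that the stated $T$ and $M$ actually suffice. One minor subtlety to handle when invoking this lemma inside \cref{lem:cmdp-sample-complexity-no-regret} is that \Dope's regret bound itself holds only with high probability (see \cref{lem:dope-regret}) rather than deterministically, so an additional union bound or a constant-factor inflation of $\delta$ will be needed; likewise, the polylog factors hidden in the $\wtilde{\O}$ of \Dope's regret can be absorbed into the constant $C$ of this lemma.
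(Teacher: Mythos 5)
Your proposal is correct and follows essentially the same route as the paper's proof: the same pigeonhole argument on the average regret to obtain $\pi_{t^*}$ with $\V^r(\pi^*)-\V^r(\pi_{t^*})\leq \epsilon/2$, the same Hoeffding-plus-union-bound with accuracy $\epsilon/4$ per policy, and the same $\epsilon/2 + 2\cdot\epsilon/4$ accounting. The only cosmetic difference is that you chain the four inequalities directly where the paper phrases the last step as a two-case analysis (showing a policy far from $\pi_{t^*}$ cannot be the empirical maximizer), and your closing remark about union-bounding with \Dope's high-probability regret event matches how \cref{lem:cmdp-sample-complexity-no-regret} actually invokes this lemma.
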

\begin{proof}
    For any $t\in [T]$, we can bound the estimation error of the value function for $\pi_t$ as follows:
    \begin{align*}
        Pr\bracket{\abs{\V^r(\pi_t) - \hat{\V}^r(\pi_t)}\geq \frac{\epsilon}{4}} &\leq 2 \exp \paren{\frac{-2 N \paren{\frac{\epsilon}{4}}^2 }{H^2}} \tag{Hoeffding's inequality.}\\
        &= 2 \exp \paren{\frac{-N \epsilon^2}{8 H^2}} \\
        &= \frac{\delta}{T} \tag{Setting $N=\frac{16 H^2}{\epsilon^2} \log \paren{\frac{2 C}{\epsilon \delta}}$.}
    \end{align*}
    Taking the union bound over all $t\in [T]$, we obtain $Pr\bracket{\exists t\in [T]: \abs{\V^r(\pi_t) - \hat{\V}^r(\pi_t)}\geq \frac{\epsilon}{4}} \leq \delta$.
    
    Next, we note that the average regret can be bounded as $\frac{R(T)}{T} \leq \frac{C}{\sqrt{T}} = \frac{\epsilon}{2}$ with $T=\frac{4 C^2}{\epsilon^2}$. Thus, there exists $t^* \in \arg\min_{t\in[T]}\curly{\V^r(\pi^*)-\V^r(\pi_t)}$ s.t. $\V^r(\pi^*)-\V^r(\pi_{t^*})\leq \frac{R(T)}{T} \leq \frac{\epsilon}{2}$ holds. We do a case analysis on $\V^r(\pi_{t^*}) - \V^r(\pi)\geq 0$:

    \textbf{Case 1:} $\V^r(\pi_{t^*}) - \V^r(\pi) \leq \frac{\epsilon}{2}$. Then, we can bound $\V^r(\pi^*)-\V^r(\pi)$ as follows:
    \begin{align*}
        \V^r(\pi^*) - \V^r(\pi) &= \bracket{\V^r(\pi^*) - \V^r(\pi_{t^*})} + \bracket{\V^r(\pi_{t^*})-\V^r(\pi)} \tag{Addition/subtraction of the common term.} \\
        &\leq \epsilon.
    \end{align*}

    \textbf{Case 2:} $\V^r(\pi_{t^*}) - \V^r(\pi) > \frac{\epsilon}{2}$. In this case, note that the following property must hold for the estimated value functions:
    \begin{align*}
        \hat{\V}^r(\pi_{t^*}) &\geq \V^r(\pi_{t^*}) - \frac{\epsilon}{4} \\
        &> \V^r(\pi) + \frac{\epsilon}{4} \\
        &\geq \hat{\V}^r(\pi).
    \end{align*}
    Thus, in this case, $\pi$ cannot be the maximizer of $\curly{\hat{\V}^r(\pi_t)}_{t\in [T]}$.

    Overall number of episodes required for this technique: $TN = \frac{64 C^2 H^2}{\epsilon^4} \log \paren{\frac{2 C}{\epsilon \delta}}$.
\end{proof}

\end{document}